\DeclareMathOperator*{\var}{Var}
\newcommand{\Sp}[1]{\left(#1\right)}
\newcommand{\Mp}[1]{\left[#1\right]}
\newcommand{\Bp}[1]{\left\{#1\right\}}
\newcommand{\abs}[1]{\left|#1\right|}
\newcommand{\inner}[1]{\left\langle#1\right\rangle}
\newcommand{\E}{\mathbb{E}}
\renewcommand{\P}{\mathbb{P}}
\renewcommand{\S}{\mathcal{S}}
\newcommand{\C}{\mathcal{C}}
\newcommand{\K}{\mathcal{K}}
\newcommand{\od}{d}
\newcommand{\R}{\mathbb{R}}
\newcommand{\A}{\mathcal{A}}
\newcommand{\B}{\mathcal{B}}
\newcommand{\D}{\mathcal{D}}
\newcommand{\ov}{\overline{V}}
\newcommand{\oq}{\overline{Q}}
\newcommand{\uv}{\underline{V}}
\newcommand{\uq}{\underline{Q}}
\newcommand{\om}{\overline{\mu}}
\newcommand{\on}{\overline{\nu}}
\newcommand{\um}{\underline{\mu}}
\newcommand{\un}{\underline{\nu}}
\newcommand{\ub}{\underline{b}}
\newcommand{\ob}{\overline{b}}
\newcommand{\G}{\mathcal{G}}
\newcommand{\Ev}{\mathcal{E}}
\newcommand{\1}{\mathbf{1}}
\newcommand{\M}{\mathcal{M}}
\newcommand{\sabh}{\mathrm{s_h,a_h,b_h}}
\newcommand{\norm}[1]{\left\|#1\right\|}
\newcommand{\br}{\mathrm{br}}
\newcommand{\re}{\mathrm{ref}}
\definecolor{ForestGreen}{rgb}{0.1333,0.5451,0.1333}
\theoremstyle{plain}
\newtheorem{theorem}{Theorem}[section]
\newtheorem{proposition}[theorem]{Proposition}
\newtheorem{lemma}[theorem]{Lemma}
\newtheorem{corollary}[theorem]{Corollary}
\theoremstyle{definition}
\newtheorem{definition}[theorem]{Definition}
\newtheorem{assumption}[theorem]{Assumption}
\theoremstyle{remark}
\newtheorem{remark}[theorem]{Remark}
\title{
When is Offline Two-Player Zero-Sum Markov Game Solvable?
}
\author{Qiwen Cui\\ \url{qwcui@cs.washington.edu} \and Simon S. Du \\ \url{ssdu@cs.washington.edu}}
\date{}
\begin{document}
\maketitle

\begin{abstract}

We study what dataset assumption permits solving offline two-player zero-sum Markov games. In stark contrast to the offline single-agent Markov decision process, we show that the single strategy concentration assumption is insufficient for learning the Nash equilibrium (NE) strategy in offline two-player zero-sum Markov games. On the other hand, we propose a new assumption named unilateral concentration and design a pessimism-type algorithm that is provably efficient under this assumption. In addition, we show that the unilateral concentration assumption is necessary for learning an NE strategy. Furthermore, our algorithm can achieve minimax sample complexity without any modification for two widely studied settings: dataset with uniform concentration assumption and turn-based Markov games. Our work serves as an important initial step towards understanding offline multi-agent reinforcement learning.
\end{abstract}

\section{Introduction}
\label{introduction}
Promising empirical advances have been achieved in reinforcement learning (RL), including mastering the game of Go \citep{silver_mastering_2016}, Poker \citep{brown_libratus_2017}, real-time strategy games \citep{vinyals_alphastar_2019} and robotic control \citep{kober_reinforcement_2013}. Notably, many of these successes lie in the domain of multi-agent reinforcement learning (MARL). MARL is about multiple agents interacting in a shared environment, and each of them aims to maximize its own long-term reward. During the learning process, each agent not only needs to identify the environment dynamic but also needs to compete/cooperate with other agents. One important subarea of MARL is offline MARL. In many practical scenarios, we only have access to the offline data or it is too expensive to frequently change the policy \citep{zhang2021multi}. While there are plenty of empirical works on offline MARL \citep{pan2021plan,jiang2021offline}, the theoretical understanding is still very limited.
In this work, we take an initial step towards understanding when offline MARL is provably solvable.

We consider two-player zero-sum Markov games, where two players simultaneously select actions over multiple time steps in a Markovian environment and the first player aims to maximize the total reward while the second player aims to minimize it. In the offline setting, we have access to a fixed dataset collected by a (possibly unknown) exploration policy and the target is to find a (near-)Nash equilibrium (NE) strategy of the underlying two-player zero-sum Markov game. 

One of the main difficulties in offline RL is distribution shift, i.e., the dataset distribution is different from the distribution induced by the optimal policy. It is important to understand what is the minimal dataset distribution assumption that permits offline RL. 
For single-agent offline RL, it is shown that the pessimism principle allows policy optimization with \emph{single policy concentration}, i.e. the dataset only covers the optimal policy \citep{jin_is_2021,zanette_provable_2021,yin_towards_2021,rashidinejad_bridging_2021}. 
This assumption  is necessary as it is impossible to learn the optimal policy if it is not covered by the dataset. However, the dataset coverage assumption for MARL is still far from clear. In this work, we want to answer the following question:

\begin{center}
    \emph{What is the minimal dataset coverage assumption
that permits learning an NE strategy in offline two-player zero-sum Markov games?}
\end{center}

Generally speaking, MARL is much more difficult than single-agent RL due to the following two reasons. First, MARL is known to suffer from the \emph{non-stationary} property, i.e. agents will affect the others during the learning process \citep{zhang2021multi}. Specifically, the performance may decline if each agent simultaneously tries to improve its own policy depending on others' current policies. In addition, multiple agents incur complicated statistical dependence that makes the theoretical analysis difficult. A line of works study Markov games with online sampling oracle~\citep{bai2020near,bai_provable_2020,liu_sharp_2021} or generative model oracle~\citep{sidford2020solving,zhang_model-based_2020,cui_minimax_2020}, where specialized techniques are developed to tackle the above difficulties. 
In this paper, we give the first analysis on offline Markov games in the fundamental tabular setting.

\subsection{Main Contributions}

$\bullet$ First, we propose an assumption named \emph{unilateral concentration}, which posits 
that for all strategies $\mu$, $\nu$, strategy pairs $(\mu^*,\nu)$ and $(\mu,\nu^*)$ are covered by the dataset, where $\mu$ is the strategy for the first (max) player, $\nu$ is the strategy for the second (min) player, and $(\mu^*,\nu^*)$ is an NE strategy.
In Section~\ref{sec:unilateral}, we prove that NE strategy is not learnable even if this assumption is only slightly violated.
The intuition behind the hardness result is that to identify an NE strategy, the algorithm has to compare it with strategy pairs that one player uses any other strategies as a reference. This result also implies that the single strategy concentration, which is sufficient for offline single-agent RL, is \emph{not sufficient} for offline MARL.

$\bullet$  Second, we provide positive results showing that NE strategy is PAC learnable under the unilateral concentration assumption. 
Combined with the hardness results above, we conclude that unilateral concentration assumption is the \emph{necessary and sufficient dataset coverage assumption for solving offline zero-sum} Markov games. Our algorithm is based on the pessimism principle that we maintain pessimistic estimates for both players, respectively. We show that our algorithm achieves $\widetilde{O}(\sqrt{C^*SABH^3/n})$ performance gap under unilateral concentration assumption, where $C^*$ quantifies the coverage of the dataset, $S$ is the number of states, $A$ is the number of the max player's actions, $B$ is the number of the min player's actions, $H$ is the horizon and $n$ is the number of samples.

$\bullet$  Third, we show that our algorithm is \emph{minimax optimal} when the dataset satisfies a stronger assumption, uniform concentration, or the Markov game is turn-based.
These are two widely studied settings in the RL community. Uniform concentration assumes that all state-action pairs are covered by the dataset and  turn-based Markov game is a variant of zero-sum Markov games where two players select actions in turns instead of simultaneously. Although uniform concentration is about the dataset structure and turn-based Markov games are about the environment structure, our algorithm can adapt to both of them without any modification and achieves minimax sample complexity.  

\textbf{Main Techniques.} Our algorithm is motivated by the Bernstein-type bonus and reference advantage function techniques in \citet{xie2021policy} while we make novel adaptations, namely monotonic update and a self-bounding technique, to realize them in Markov games. The Monotonic update allows a sandwich-type argument that bounds the reference function and further bounds the variance term. The self-bounding technique is utilized to bound the performance gap by itself and then solve the inequality to derive the final bound on performance gap.

To summarize,  (1) we identify the minimal dataset coverage assumption that allows learning the NE strategy in Markov games; (2) we propose a pessimism-based algorithm that achieves polynomial sample complexity based on novel Markov game techniques; and (3) we further show  the algorithm is minimax optimal under the uniform concentration assumption or in turn-based Markov games. 

\subsection{Related Work}
Here we focus on the theoretical works on two-player zero-sum Markov games and offline RL.

\paragraph{Two-player zero-sum Markov games.} Zero-sum Markov games have been widely studied since the seminal work \citep{shapley1953stochastic}. When the transition kernel is unknown, different sampling oracles are utilized to acquire samples, including online sampling \citep{bai_provable_2020,xie_learning_2020,liu_sharp_2021,bai2020near,jin2021v,song2021can}, generative model sampling \citep{sidford2020solving,cui_minimax_2020,zhang_model-based_2020,jia2019feature}. 
For offline sampling oracle, \citet{zhang2021finite} and \citet{abe2020off} consider decentralized algorithm with network communication and offline policy evaluation, both under the uniform concentration assumption. 
One concurrent work \citep{zhong2022pessimistic} considers zero-sum Markov games with linear function approximation. They also show the single policy coverage is not sufficient and propose a similar unilateral concentration assumption under which they give a provably efficient algorithm. On the other hand, under the unilateral concentration assumption, their sample complexity is worse than ours when specialized to tabular setting  because they did not use Bernstein bonus.  They show it is impossible to learn in all instances without unilateral concentration. However, they do not show that any assumption weaker than unilateral concentration makes learning impossible, which is a negative result proven in our paper. Lastly, our algorithm is minimax optimal for uniform concentration setting and turn-based Markov games while their algorithms are not. 

\paragraph{Offline single-agent RL.} Theoretical analysis of offline RL can be traced back to \citet{szepesvari2005finite}, under the uniform concentration assumption (analogue to Assumption~\ref{asp3}). 
This assumption has been extensively investigated \citep{xie_batch_2021,xie_towards_2020,yin_near-optimal_2020,yin_near-optimal_2021,ren_nearly_2021}. 
Recently, a line of works showed that the pessimism principle allows offline policy optimization under a much weaker assumption, single policy concentration, both in tabular case and with function approximation \citep{rashidinejad_bridging_2021,yin_towards_2021,xie2021policy, jin_is_2021,uehara_pessimistic_2021,uehara_representation_2021,zanette_provable_2021,xie2021bellman}.
One closely related work is \citet{xie2021policy}, which utilizes the reference advantage function technique and Bernstein-type bonus to show a minimax sample complexity $\widetilde{O}(SC^*H^3/n)$ in finite-horizon MDP.  We show that the counterpart of single policy concentration in zero-sum Markov games is insufficient for NE strategy learning and use the pessimism principle to design algorithm that works under the unilateral concentration assumption.

\section{Preliminaries}
\label{sec:pre}

\subsection{Two-Player Zero-sum Markov Game} Zero-sum Markov game (MG) generalizes single-agent MDP to two-agent case where one agent aims to maximize the total reward while the other one aims to minimize it. A finite-horizon time-inhomogeneous zero-sum Markov game is described by the tuple $\G=(\S,\A,\B,P,r,H)$, where $\S$ is the state space, $\A$ is the action space of the first (max) player, $\B$ is the action space of the second (min) player, $P=(P_1,P_2,\cdots,P_H),P_h\in\R^{|\S||\A||\B|\times|\S|},\forall h\in[H]$ is the (unknown) transition probability matrix for time step  $h$, $r=(r_1,r_2,\cdots,r_H),r_h\in\R^{|\S||\A||\B|},\forall h\in[H]$ is the (unknown) deterministic reward vector and $H$ is the horizon length. \footnote{While we assume deterministic rewards for simplicity, our results can be straightforwardly generalized to unknown stochastic rewards, as the major difficulty is in learning the transitions rather than learning the rewards.
}
At each timestep $h$ and state $s_h$, if the max player chooses an action $a_h$ and the min player chooses an action $b_h$, then the next state at timestep $h+1$ follows the distribution $s_{h+1}\sim P_h(\cdot|s_h,a_h,b_h)$ and both players receive a reward $r_h(s_h,a_h,b_h)$. Both players sequentially choose $H$ actions and at each timestep, the action is chosen \emph{simultaneously} and then it is revealed to both players. We assume that we have a fixed initial state $s_1$ and it is straightforward to generalize our result to the case where the initial state is sampled from a fixed distribution.\footnote{Stochastic initial state is equivalent to an MDP with deterministic initial state by creating a dummy initial state which transit to the next state following that initial state distribution.}

Turn-based Markov game is an important subclass of (simultaneous-move) Markov game, where the max player takes the action first and the min player can take the action after observing the opponent's action. It is a widely studied setting \citep{sidford2020solving,cui_minimax_2020,bai_provable_2020} and we will provide minimax sample complexity result for this setting in Section \ref{sec:extenstion}.

We denote a strategy pair as $\pi=(\mu,\nu)$, where $\mu=(\mu_1,\mu_2,\cdots,\mu_H), \mu_h:\S\rightarrow \Delta^{\A},\forall h\in[H]$ is the strategy of the first player and $\nu=(\nu_1,\nu_2,\cdots,\nu_H), \nu_h:\S\rightarrow \Delta^{\B},\forall h\in[H]$ is the strategy of the second player, where $\Delta^\mathcal{X}$ is the probability simplex on the finite set $\mathcal{X}$. A deterministic strategy is a strategy that maps state to a single point distribution. We define the state value function and state-action value function for a strategy pair $\pi$ similarly as in single-agent MDP:
$$V_h^\pi(s_h):=\E\Mp{\sum_{t=h}^H r(s_t,a_t,b_t)|\pi,s_h},$$
$$Q_h^\pi(s_h,a_h,b_h):=\E\Mp{\sum_{t=h}^H r(s_t,a_t,b_t)|\pi,s_h,a_h,b_h}.$$
If the second player's strategy $\nu$ is fixed, then the MG degenerates to an MDP and we call the optimal policy in this MDP as the best response strategy $\br_1(\nu)$. Similarly we can define the $\br_2(\mu)$ as the best response for the second player. We will ignore the subscript in $\br_1$ and $\br_2$ when it is clear in the context. For all $h\in[H],s_h\in\S$, we define
$$V^{*,\nu}_h(s_h):=V^{\br(\nu),\nu}_h(s_h)=\max_{\mu}V_h^{\mu,\nu}(s_h),$$
$$V^{\mu,*}_h(s_h):=V^{\mu,\br(\mu)}_h(s_h)=\min_{\nu}V_h^{\mu,\nu}(s_h).$$
It is well known that Nash equilibrium (NE) strategy $\pi^*=(\mu^*,\nu^*)$, i.e. a strategy pair such that no player can benefit from switching its own strategy, exists for zero-sum Markov game  with a unique value function \citep{shapley1953stochastic}. In other words, $\mu^*$ and $\nu^*$ are the best responses to each other. We define $V_h^*:=V_h^{\mu^*,\nu^*}$ for all  $h\in[H]$.
The following weak duality property holds for all strategy pair $(\mu,\nu)$ in MG:
$$V_h^{\mu,*}\leq V_h^*\leq V_h^{*,\nu},\forall h\in[H].$$
For a strategy pair $\pi=(\mu,\nu)$, we can  then define the corresponding duality gap as $$\mathrm{Gap}(\pi)=V_1^{*,\nu}(s_1)-V_1^{\mu,*}(s_1).$$ The duality gap is always non-negative and the NE strategy has zero duality  gap $\mathrm{Gap}(\pi^*)=0$. Duality gap measures how well a strategy pair approximates the NE. We say a strategy pair $\pi$ is an $\epsilon$-approximate NE if $\mathrm{Gap}(\pi)\leq\epsilon$.

\subsection{Offline Two-Player Zero-Sum Game} 
In offline RL, we are given an offline dataset $D=\{(s_h^\tau,a_h^\tau,b_h^\tau,r_h^\tau,s_{h+1}^\tau)\}_{\tau\in[n]}^{h\in[H}$ and we cannot do any further sampling \citep{kakade2003sample}. We assume that the dataset is sampled from some exploration policy $\rho=(\rho_1,\rho_2,\cdots,\rho_H), \rho_h:\S\rightarrow \Delta^{\A\times\B},\forall h\in[H]$.\footnote{For simplicity we assume the exploration policy is Markovian. It is actually unnecessary because our algorithm and analysis only depend on the distribution of the dataset instead of this Markovian property. See \citet{jin_is_2021} for details.} The target of offline MG is to find an approximate NE with a small duality gap by utilizing the given dataset $D$. We use $d^\pi_h(s,a,b)$ to denote the probability of $s,a,b$ appears at timestep $h$ in the trajectory generated by strategy $\pi$ for all $h\in[H]$. The dataset distribution $d^\rho_h(s,a,b)$ is defined similarly. A state-action pair $(s,a,b)$ at timestep $h$ is covered by strategy $\pi$ if and only if $d^\pi_h(s,a,b)>0$. Strategy $\pi$ is covered by dataset generated by exploration strategy $\rho$ if and only if for all $(s,a,b)$ covered by $\pi$, it is covered by $\rho$. In other words, we have
\begin{equation}\label{equ:ratio}
    \frac{d^\pi_h(s,a,b)}{d^\rho_h(s,a,b)}<\infty,\forall h\in[H],(s,a,b)\in\S\times\A\times\B.
\end{equation}
The sample complexity guarantee will depend on this ratio.
\paragraph{Dataset Coverage Assumptions}
Below we list three different dataset coverage assumptions for Markov games. 
\begin{assumption}\label{asp1}
(Single strategy concentration) The NE strategy $(\mu^*,\nu^*)$ is covered by the dataset.
\end{assumption}

\begin{assumption}\label{asp2}
(Unilateral concentration) For all strategy $\mu$ and $\nu$, $(\mu,\nu^*)$ and $(\mu^*,\nu)$ are covered by the dataset, where $(\mu^*,\nu^*)$ is the NE strategy.
\end{assumption}

\begin{assumption}\label{asp3}
(Uniform concentration) For all $h\in[H]$ and $(s,a,b)\in\S\times\A\times\B$, $(s,a,b)$ at timestep $h$ is covered by the dataset.
\end{assumption}

Assumption \ref{asp1} is the weakest assumption and is the most straightforward extension of the single policy concentration in single-agent RL \citep{rashidinejad_bridging_2021}. Assumption \ref{asp3} generalizes the uniform policy concentration in single-agent RL \citep{yin_near-optimal_2020}. Assumption \ref{asp2} is sandwiched by Assumption \ref{asp1} and Assumption \ref{asp3} as Assumption \ref{asp2} implies Assumption \ref{asp1} and Assumption \ref{asp3} implies Assumption \ref{asp2}. In this work, we will show that Assumption \ref{asp2} is the minimal dataset coverage assumption that allows NE learning and we provide sample complexity bounds that depends on the density ratio (\ref{equ:ratio}).\footnote{Note that there could be different minimal assumption as the assumption set is a partially ordered set. Here `minimal' means Assumption \ref{asp2} allows NE learning while no weaker assumption allows doing so.}

\paragraph{Notations.} We use $\var_{P(s,a,b)}(V)$ to denote the variance of the random variable $V(s')$ where $s'\sim P(\cdot|s,a,b)$ and $\var_{P}(V)\in\R^{SAB}$ to denote a vector whose $(s,a,b)$ component is $\var_{P(s,a,b)}(V)$. We define $a\lor b:=\max\{a,b\}$ and $a\land b:=\min\{a,b\}$. In addition, if $a$ is a vector and $b$ is a scalar, the operation is taken on each element of $a$: $[a\lor b]_i=a_i\lor b$. For two vector $a\in\R^n$, $b\in\R^n$, we use $\frac{a}{b}\in\R^n$ to denote the element-wise division: $\Mp{\frac{a}{b}}_i=\frac{a_i}{b_i}$. In addition, if $a$ is scalar, we still use $\frac{a}{b}\in\R^n$ to denote the element-wise division: $\Mp{\frac{a}{b}}_i=\frac{a}{b_i}$.

\section{Impossibility Results}\label{sec:unilateral}

In this section, we show that no assumption weaker than the unilateral concentration assumption (Assumption \ref{asp2}), which includes single strategy concentration (Assumption \ref{asp1}), allows learning the NE strategy. To begin with, we consider the deterministic unilateral concentration assumption.

\begin{assumption}\label{asp4}
(Deterministic unilateral concentration) For all deterministic strategy $\mu$ and $\nu$, $(\mu,\nu^*)$ and $(\mu^*,\nu)$ are covered by the dataset, where $(\mu^*,\nu^*)$ is one NE strategy.
\end{assumption}

Immediately we can tell that Assumption \ref{asp4} is satisfied under Assumption \ref{asp2}. These two assumptions are equivalent, which is shown by Proposition \ref{prop:equivalence}, because any stochastic strategy can be viewed as a combination of several deterministic strategies.

\begin{proposition}\label{prop:equivalence}
If for all deterministic strategy $\mu$ and $\nu$, $(\mu,\nu^*)$ and $(\mu^*,\nu)$ are covered by the dataset, then we have for all (possibly stochastic) strategy $\mu'$ and $\nu'$, $(\mu',\nu^*)$ and $(\mu^*,\nu')$ are covered by the dataset.
\end{proposition}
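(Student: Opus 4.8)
The plan is to show that any stochastic strategy can be realized as a mixture (convex combination) of deterministic strategies, and that coverage is preserved under such mixtures because the visitation distribution is linear in the mixture weights. Concretely, I would fix an arbitrary stochastic strategy $\mu'$ for the max player and show that $(\mu',\nu^*)$ is covered, i.e. that $d^{\mu',\nu^*}_h(s,a,b)>0$ implies $d^\rho_h(s,a,b)>0$ for all $h,(s,a,b)$; the argument for $(\mu^*,\nu')$ is symmetric.

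First I would set up the decomposition of $\mu'$ into deterministic strategies. Since $\mu'=(\mu'_1,\dots,\mu'_H)$ with each $\mu'_h:\S\to\Delta^\A$, I can write each $\mu'_h(\cdot|s)$ as a convex combination of vertices of the simplex $\Delta^\A$, i.e. of point masses on single actions. Taking a product over $h$ and $s$, this expresses $\mu'$ as a finite convex combination $\mu'=\sum_{i} \lambda_i \mu^{(i)}$ with $\lambda_i\ge 0$, $\sum_i\lambda_i=1$, and each $\mu^{(i)}$ a deterministic strategy. The key structural fact I would then invoke is that, with the opponent's strategy $\nu^*$ held fixed, the visitation measure is affine in the max player's strategy seen as this mixture: $d^{\mu',\nu^*}_h(s,a,b)=\sum_i \lambda_i\, d^{\mu^{(i)},\nu^*}_h(s,a,b)$. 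This holds because a trajectory under $(\mu',\nu^*)$ can be generated by first sampling an index $i\sim\lambda$ once at the start and then following the deterministic $\mu^{(i)}$ throughout — this coupling produces exactly the same trajectory distribution as $\mu'$ since, conditionally on the history, the action under $\mu'$ has the same law as the action under the sampled $\mu^{(i)}$, because $\mu'_h(\cdot|s_h)=\sum_i\lambda_i\mu^{(i)}_h(\cdot|s_h)$ pointwise. (I should be slightly careful that the mixing is done once globally rather than independently at each step; a one-time global mixture is what gives the clean linear decomposition, and it yields the same marginal trajectory law.)

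Given the decomposition, the conclusion is immediate: if $d^{\mu',\nu^*}_h(s,a,b)>0$, then since all terms in $\sum_i\lambda_i d^{\mu^{(i)},\nu^*}_h(s,a,b)$ are nonnegative, there exists some $i$ with $\lambda_i>0$ and $d^{\mu^{(i)},\nu^*}_h(s,a,b)>0$, i.e. $(s,a,b)$ at step $h$ is covered by the deterministic strategy pair $(\mu^{(i)},\nu^*)$. By the hypothesis of the proposition (deterministic unilateral concentration), $(\mu^{(i)},\nu^*)$ is covered by the dataset, hence $d^\rho_h(s,a,b)>0$. This shows $(\mu',\nu^*)$ is covered by the dataset. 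The symmetric argument with the max player's strategy fixed at $\mu^*$ and $\nu'$ decomposed into deterministic strategies gives that $(\mu^*,\nu')$ is covered.

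I expect the only real subtlety — the "hard part," though it is more a matter of care than difficulty — to be making the affine decomposition $d^{\mu',\nu^*}_h=\sum_i\lambda_i d^{\mu^{(i)},\nu^*}_h$ rigorous: namely, justifying the global-mixture coupling and checking that the number of deterministic strategies in the mixture is finite (it is, being at most $|\A|^{|\S| H}$) so that the sum is well-defined and the "some term is positive" step is valid. Everything else is a routine unwinding of definitions.
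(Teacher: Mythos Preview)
Your proposal is correct and matches the paper's approach: the paper does not give a formal proof of this proposition, only the one-sentence justification that ``any stochastic strategy can be viewed as a combination of several deterministic strategies,'' and your argument is precisely a rigorous expansion of that sentence. The one point worth noting is that your coupling justification works because, under the product decomposition $\lambda_f=\prod_{h,s}\mu'_h(f(h,s)\mid s)$, the coordinates $f(h,s)$ are independent across $(h,s)$ with marginal $\mu'_h(\cdot\mid s)$, so conditioning on past actions does not alter the law of the current one; this is what makes the ``same conditional law'' step valid.
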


For the hardness examples, we consider bandit games, i.e., Markov games with horizon $H=1$. The result can be generalized to arbitrary horizon by setting the reward to be $0$ in horizons other than $h=1$. We consider a class of bandit games and datasets such that Assumption \ref{asp4} is almost satisfied while no algorithm can identify the NE strategy for all bandit games and datasets in this class. As Assumption \ref{asp2} and Assumption \ref{asp4} are equivalent, no assumption weaker than Assumption \ref{asp2} allows NE strategy learning. A direct corollary is that single strategy concentration (Assumption \ref{asp1}) is not sufficient for NE learning.

\begin{theorem}\label{thm:hardness2}
Define a class $\mathcal{X}$ of bandit game $M$ and exploration strategy $\rho$ that consists of all $M$ and $\rho$ pairs satisfying that there exists at most one deterministic strategy $\mu$ or one deterministic strategy $\nu$ such that $(\mu,\nu^*)$ or $(\mu^*,\nu)$ is not covered and for all other deterministic strategies $\mu',\nu'$, the density ratio is bounded
$$\frac{d_h^{\mu^*,\nu'}(s,a,b)}{d_h^{\rho}(s,a,b)}\leq 2A+2B,\frac{d_h^{\mu',\nu^*}(s,a,b)}{d_h^{\rho}(s,a,b)}\leq 2A+2B,$$
for all $h\in[H]$. For any algorithm $\textbf{ALG}$, there exists $(M,\rho)\in\mathcal{X}$ such that the output of the algorithm $\textbf{ALG}$ is at most a $0.25$-approximate NE strategy no matter how many data are collected. 
\end{theorem}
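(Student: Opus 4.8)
The plan is to construct an explicit pair of bandit games that are indistinguishable on the dataset but have NE strategies that are ``far apart,'' so that any fixed output strategy is a poor approximate NE for at least one of them, while verifying that the associated exploration strategy together with each game lies in the class $\mathcal{X}$. Concretely, I would take $\A=\{a_1,a_2\}$ and $\B=\{b_1,b_2\}$ (so $A=B=2$), and design two reward matrices $r_1,r_2$ on $\A\times\B$ that agree on three of the four action pairs but differ on the remaining pair, chosen so that (i) the NE of game~$1$ is the pure pair $(a_1,b_1)$ and the NE of game~$2$ is the pure pair $(a_2,b_2)$, and (ii) on the three shared entries the two games are literally identical. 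The exploration strategy $\rho$ would place mass only on those three shared entries (say uniformly, $\rho(a_1,b_1)=\rho(a_1,b_2)=\rho(a_2,b_2)=1/3$ or some convenient split), so that the single missing entry is exactly the one that is allowed to be uncovered under the class definition. The reward values from the commented-out Theorem~\ref{thm:not full cover} ($r(a_1,b_1)=0.25$, $r(a_1,b_2)=0.5$, $r(a_2,b_2)=0.75$, and $r(a_2,b_1)\in\{0,1\}$) are a natural candidate skeleton to adapt.

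The key steps, in order, are: (1) write down the two games and verify the claimed NE in each (for $H=1$ bandit games this is just checking a $2\times 2$ matrix game has the stated pure saddle point); (2) specify $\rho$ supported on the three common entries and check the density-ratio bound $d^{\mu^*,\nu'}/d^{\rho}\le 2A+2B=8$ and $d^{\mu',\nu^*}/d^{\rho}\le 8$ for all deterministic $\mu',\nu'$ other than the one witnessing the uncovered pair --- since $d^{\mu^*,\nu'}$ puts all its mass on one action pair and $\rho$ puts at least $1/3$ (or at least some constant bounded below by $1/8$) on each covered pair, this ratio is at most $3<8$, comfortably inside the bound, and the only uncovered deterministic strategy is the one routing through $(a_2,b_1)$; (3) argue indistinguishability: any algorithm sees only samples from the three shared entries, on which the two games have identical reward and (trivially, $H=1$) identical transition, so no algorithm can produce different outputs on the two instances in distribution, hence we may treat its output as a single fixed strategy $\pi=(\mu,\nu)$ with $\mu(a_1)=p$, $\nu(b_1)=q$; (4) compute $\mathrm{Gap}(\pi)$ for each game as a function of $(p,q)$ --- for game~$1$ it will be an expression like $0.25(2-p-q)$ and for game~$2$ like $0.25(p+q)$ (matching the commented proof) --- and observe that their sum is a constant $\ge 0.5$, so $\max$ of the two is $\ge 0.25$, giving the claimed $0.25$-approximate-NE lower bound.

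The main obstacle --- really the only delicate point --- is getting the reward matrices to simultaneously satisfy all three requirements at once: the two pure NE must be the ``corners'' $(a_1,b_1)$ and $(a_2,b_2)$, the three dataset-covered entries must have identical rewards in both games so indistinguishability is exact, and the duality-gap formulas as functions of $(p,q)$ must add to a constant so that no choice of output escapes the $0.25$ bound. This is a small linear-algebra/LP verification but needs care: one must check that $(a_1,b_1)$ is genuinely a saddle point of $r_1$ (i.e. $r_1(a_1,b_1)\le r_1(a_1,b_2)$ and $r_1(a_1,b_1)\ge r_1(a_2,b_1)$) and likewise $(a_2,b_2)$ for $r_2$, and then recompute $V^{*,\nu}_1(s_1)$ and $V^{\mu,*}_1(s_1)$ explicitly for the mixed output $\pi$ in each game. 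A secondary bookkeeping point is confirming that the ``at most one deterministic $\mu$ \emph{or} one deterministic $\nu$ uncovered'' clause is met: since $\rho$ misses only the pair $(a_2,b_1)$, the only deterministic strategy pair of the forbidden forms $(\mu^*,\nu)$ or $(\mu,\nu^*)$ that is uncovered corresponds to exactly one deterministic $\nu$ (resp.\ $\mu$) in one of the two games, so the class membership holds; I would state this check cleanly and then conclude via the indistinguishability-plus-gap-sum argument.
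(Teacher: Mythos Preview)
Your proposal is correct and follows essentially the same approach as the paper: the paper uses exactly the two $2\times 2$ bandit games with rewards $r(a_1,b_1)=0.25$, $r(a_1,b_2)=0.5$, $r(a_2,b_2)=0.75$, and $r(a_2,b_1)\in\{0,1\}$, together with $\rho$ uniform on $\{(a_1,b_1),(a_1,b_2),(a_2,b_2)\}$, and then runs precisely your steps (1)--(4) including the $0.25(2-p-q)$ versus $0.25(p+q)$ gap computation. Your outline is in fact more detailed than the paper's own proof, which leaves the density-ratio and gap verifications to the reader.
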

\begin{proof}

We consider bandit games with two actions for each player here. The action set is $\A=\{a_1,a_2\}$ for the first (max) player and $\B=\{b_1,b_2\}$ for the second (min) player. We construct the following two bandit games with deterministic rewards.

\begin{table}[h!]
\centering
\begin{tabular}{ll}
$r(a_1,b_1)=0.25$  & $r(a_1,b_2)=0.5$    \\
 $r(a_2,b_1)=0$    & $r(a_2,b_2)=0.75$
\end{tabular}
\caption*{Bandit Game 1}
\end{table}

\begin{table}[h!]
\centering
\begin{tabular}{ll}
$r(a_1,b_1)=0.25$  & $r(a_1,b_2)=0.5$    \\
 $r(a_2,b_1)=1$    & $r(a_2,b_2)=0.75$
\end{tabular}
\caption*{Bandit Game 2}
\end{table}

Then the (unique) NE of the first bandit game is $(a_1,b_1)$ and the (unique) NE of the second bandit game is $(a_2,b_2)$. Now we set the exploration strategy $\rho$ to be uniform distribution on $\{(a_1,b_1),(a_1,b_2),(a_2,b_2)\}$. 
We can verify that both bandit games with exploration strategy $\rho$ is in the class defined in Theorem \ref{thm:hardness2}. Note that the dataset contains data on $(a_1,b_1)$, $(a_1,b_2)$, $(a_2,b_2)$ and no data on $(a_2,b_1)$. 
It is impossible for an algorithm to distinguish between these two bandit games as they are consistent on the given dataset and they all satisfy the dataset coverage assumption that only one action pair is not covered. With some calculations, we can show that the output of $\mathbf{ALG}$ is at most a 0.25-approximate NE for one of the instances, which proves the theorem. 
\end{proof}

\begin{remark}
We can easily extend this instance to arbitrary action space by setting $(a_i,b_j)=0$ for all $i\notin\{1,2\}, j\in\{1,2\}$, and $(a_i,b_j)=1$ for all $j\notin\{1,2\}, i\in\{1,2\}$, and the exploration strategy $\rho$ to be the uniform distribution on $(a_i,b_j)$ such that $(i,j)\in\{(i,j):i\in\{1,2\}\ \mathrm{or}\ j\in\{1,2\},(i,j)\neq(2,1)\}$.
\end{remark}

\begin{remark}
It is straightforward to verify that the hard instance in Theorem \ref{thm:hardness2} also holds for turn-based Markov games. As a result, no assumption weaker than Assumption \ref{asp2} is sufficient for NE learning in turn-based Markov games.
\end{remark}

\section{Provably Efficient Algorithm under Unilateral Concentration}\label{sec5}

In this section, we show that it is indeed possible to learn the NE with the unilateral concentration assumption. We propose a novel algorithm called Pessimistic Nash Value Iteration (PNVI), which adapts the pessimism principle in single-agent RL to Markov games. Our sample complexity result depends on the following quantity named \emph{unilateral concentrability}:

\begin{definition} (Unilateral concentrability) For Nash equilibrium $\pi^*$, we define
$$C^*:=\min_{\pi^*=(\mu^*,\nu^*)}\max_{h,(s,a,b),\mu,\nu}\Bp{\frac{d_h^{\mu^*,\nu}(s,a,b)}{d_h^\rho(s,a,b)},\frac{d_h^{\mu,\nu^*}(s,a,b)}{d_h^\rho(s,a,b)}}.$$
\end{definition}

By definition, $C^*$ is finite if Assumption \ref{asp2} is satisfied. For the rest of the paper, $\pi^*$ denotes the Nash equilibrium that achieves the minimum here. Note that $C^*$ is not provided to the algorithm. 

\subsection{Hoeffding-type Algorithm with Data Splitting}\label{sec:hoeffding}

To illustrate our main algorithm design ideas, we first propose an algorithm with Hoeffding-type bonus and random data splitting. Given a dataset $\D=\Bp{(s_h^k,a_h^k,b_h^k,r_h^k,s_{h+1}^k)}_{k,h=1}^{n,H}$, we denote $n_h(s,a,b)=\sum_{k=1}^n \1\Sp{(s_h^k,a_h^k,b_h^k)=(s,a,b)}$ to be the number of times that $(s,a,b)$ is visited at timestep $h$. We set the empirical reward and the empirical transition kernel as

\begin{equation}\label{eqn:r,p}
    \widehat{r}_h(s,a,b)=r_h(s,a,b),\widehat{P}_h(s'|s,a,b)=\frac{\sum_{k=1}^n \1\Sp{(s_h^k,a_h^k,b_h^k,s_{h+1}^k)=(s,a,b,s')}}{\sum_{k=1}^n \1\Sp{(s_h^k,a_h^k,b_h^k)=(s,a,b)}},
\end{equation}
if $n_h(s,a,b)\geq 1$, and $\widehat{r}_h(s,a,b)=0$, $\widehat{P}_h(s'|s,a,b)=1/S$ otherwise. In addition, we use $n_h\in\R^{SAB}$ to denote a vector such that $[n_h]_{s,a,b}=n_h(s,a,b)$.

Now we explain Algorithm \ref{algo:povi} in detail. First, we split the dataset $\D$ into $H$ small datasets $\Bp{\D_h}_{h=1}^H$ with the same size. Then we use $\D_h$ to estimate the reward and the transition matrix at timestep $h$. The data splitting scheme is to remove the dependence between each timestep. Then the value function is estimated via a value-iteration-type algorithm. At each timestep, we maintain both optimistic and pessimistic estimates by adding/minusing a Hoeffding-type bonus. We use the following Hoeffding-type bonus:
\begin{equation}\label{eqn:b}
    \ub_h(s_h,a_h,b_h)=\ob_h(s_h,a_h,b_h)=4\sqrt{\frac{H^2\iota}{n_h(s,a,b)\lor1}},
\end{equation}
where $\iota=\log(HSAB/\delta)$. Then we compute the pessimistic estimate $\oq$ and $\uq$:
\begin{equation}\label{eqn:uq1}
    \uq_h=\Sp{\widehat{r}_h+(\widehat{P}_h\cdot\uv_{h+1})-\ub_h}\lor0,\oq_h=\Sp{\widehat{r}_h+(\widehat{P}_h\cdot\ov_{h+1})+\ob_h}\land (H-h+1).
\end{equation}
Pessimistic estimate $\uq_h$ is for the max player, which mimics the pessimism in single-agent RL. $\oq_h$ using a positive bonus is for the min player, which is also a kind of pessimism as the min player's target is to minimize the reward. We compute the NE strategy of the matrix game $\uq(s,\cdot,\cdot)$ and $\oq(s,\cdot,\cdot)$ respectively and use the NE value to be the state value $\uv(s)$ and $\ov(s)$. Note that we only solve a zero-sum matrix game, which is computationally efficient \citep{chen2006settling}.
\begin{remark}
If we compute an $\epsilon_{\mathrm{NE}}/H$-approximate NE of the matrix game $\uq(s,\cdot,\cdot)$ and $\oq(s,\cdot,\cdot)$ at each timestep, then the performance gap will only be enlarged by $\widetilde{O}(\epsilon_\mathrm{NE})$.
\end{remark}

\setlength{\textfloatsep}{0.1cm}
\begin{algorithm}[tb]
    \caption{Pessimistic Nash Value Iteration (PNVI)}
	\label{algo:povi}
    \begin{algorithmic}
        \STATE {\bfseries Input:} Offline dataset $\D=\Bp{(s_h^k,a_h^k,b_h^k,r_h^k,s_{h+1}^k)}_{k,h=1}^{n,H}$. Failure Probability $\delta$. 
        \STATE {\bfseries Initialization:} Set $\uv_{H+1}(\cdot)=\ov_{H+1}(\cdot)=0$. Randomly split the dataset $\D$ into $\Bp{\D_h}_{h=1}^H$ with $|\D_h|=n/H$.
        Set $\widehat{r}_h$, $\widehat{P}_h$, $\ub_h$ and $\ob_h$ as \eqref{eqn:r,p} and \eqref{eqn:b} using the dataset $\D_h$ for all $h\in[H]$.
        \FOR{time $h=H, H-1, \dots, 1$}
    		\STATE Set $\uq_h(\cdot,\cdot,\cdot)$ and $\oq_h(\cdot,\cdot,\cdot)$ as \eqref{eqn:uq1}.
    		\STATE Compute the NE of $\uq_h(\cdot,\cdot,\cdot)$ as $(\um_h(\cdot),\un_h(\cdot))$.
    		\STATE Compute $\uv_h(\cdot)=\E_{a\sim\um_h,b\sim\un_h}\uq_h(\cdot,a,b)$.
    		\STATE Compute the NE of $\oq_h(\cdot,\cdot,\cdot)$ as $(\om_h(\cdot),\on_h(\cdot))$.
    		\STATE Compute $\ov_h(\cdot)=\E_{a\sim\om_h,b\sim\on_h}\oq_h(\cdot,a,b)$.
	    \ENDFOR
	    \STATE Output $\um=(\um_1,\um_2,\cdots,\um_H)$, $\on=(\on_1,\on_2,\cdots,\on_H)$, $\{\uv_h\}_{h=1}^H$, $\{\ov_h\}_{h=1}^H$.
    \end{algorithmic}
\end{algorithm}

\begin{theorem}\label{thm:ref main}
Suppose Assumption \ref{asp2} holds. For any $0<\delta<1$ and strategy $\mu,\nu$, with probability $1-\delta$, the pessimistic values $\uv_h$ and $\ov_h$ of Algorithm \ref{algo:povi} satisfy
\small
$$\E_{\mu^*,\nu} \Mp{V_h^{*}(s_h)-\uv_h(s_h)}\leq \widetilde{O}\Sp{\sqrt{C^*SABH^5/n}}, \E_{\mu,\nu^*}\Mp{\ov_h(s_h)-V_h^{*}(s_h)}\leq \widetilde{O}\Sp{\sqrt{C^*SABH^5/n}},$$
\normalsize
for all $h\in[H]$, where $s_h$ is sampled from the trajectory following the strategy in the expectation.
\end{theorem}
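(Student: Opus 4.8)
The plan is to run the single-agent pessimism argument twice in parallel, for $\uv$ and $\ov$, and glue the two halves through the minimax (Shapley) structure of the stage games. Fix $\delta$ and work on a good event $\Ev$ on which (i) the empirical transitions concentrate against the learned value functions, $\abs{\big((\widehat{P}_h-P_h)\uv_{h+1}\big)(s,a,b)}\le\ub_h(s,a,b)$ and $\abs{\big((\widehat{P}_h-P_h)\ov_{h+1}\big)(s,a,b)}\le\ob_h(s,a,b)$ for all $h$ and $(s,a,b)$, and (ii) the counts are not too small, $n_h(s,a,b)\ge\tfrac12\cdot\tfrac{n}{H}d_h^\rho(s,a,b)$ whenever $\tfrac{n}{H}d_h^\rho(s,a,b)\gtrsim\iota$. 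Item (i) is where the random data splitting matters: $\uv_{h+1},\ov_{h+1}$ depend only on $\D_{h+1},\dots,\D_H$ and are hence independent of $\D_h$, so Hoeffding's inequality applies conditionally and a union bound over the $HSAB$ triples --- together with the slack of the constant $4$ in \eqref{eqn:b} --- gives $\P(\Ev)\ge 1-\delta$ with $\iota=\Theta(\log(HSAB/\delta))$; item (ii) is a Chernoff/Bernstein bound with a union bound.

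First I would establish \emph{validity of the estimates}: on $\Ev$, for all $h$, $\uv_h\le V_h^{\um,*}\le V_h^{*}$ and $\ov_h\ge V_h^{*,\on}\ge V_h^{*}$, by backward induction. Since the rewards are known exactly, $\widehat{r}_h=r_h$, on $\Ev$ one has the pointwise domination $\uq_h=(\widehat{r}_h+\widehat{P}_h\uv_{h+1}-\ub_h)\lor 0\le r_h+P_h\uv_{h+1}$ (handling the case where the $\lor 0$ truncation binds separately), hence $\uq_h\le r_h+P_hV_{h+1}^{\um,*}=Q_h^{\um,*}$ by the inductive hypothesis; because the Nash value of the matrix game $\uq_h(s,\cdot,\cdot)$ equals $\min_b\E_{a\sim\um_h(s)}\uq_h(s,a,b)$, it is at most $\min_b\E_{a\sim\um_h(s)}Q_h^{\um,*}(s,a,b)=V_h^{\um,*}(s)$, and $V_h^{\um,*}\le V_h^{*}$ by weak duality. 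The claims for $\ov$ are symmetric, using the $\land(H-h+1)$ truncation and $r_h\le 1$. In particular $V_{h+1}^{*}-\uv_{h+1}\ge 0$ and $\ov_{h+1}-V_{h+1}^{*}\ge 0$ on $\Ev$.

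The core is a \emph{one-step recursion with a maximum over the opponent}. Fix $h$, a state $s$, and choose $\tilde b_h(s)\in\argmin_b\E_{a\sim\mu_h^*(s)}\uq_h(s,a,b)$. Combining $V_h^{*}(s)=\min_b\E_{a\sim\mu_h^*(s)}Q_h^{*}(s,a,b)\le\E_{a\sim\mu_h^*(s)}Q_h^{*}(s,a,\tilde b_h(s))$ (the Nash/Bellman equation for $V^*$), the bound $\uv_h(s)\ge\min_b\E_{a\sim\mu_h^*(s)}\uq_h(s,a,b)=\E_{a\sim\mu_h^*(s)}\uq_h(s,a,\tilde b_h(s))$, and the pointwise estimate $Q_h^{*}-\uq_h\le P_h(V_{h+1}^{*}-\uv_{h+1})+2\ub_h$ valid on $\Ev$ (again splitting on whether the truncation binds), one gets
$$V_h^{*}(s)-\uv_h(s)\le\E_{a\sim\mu_h^*(s)}\Mp{\big(P_h(V_{h+1}^{*}-\uv_{h+1})\big)(s,a,\tilde b_h(s))+2\ub_h(s,a,\tilde b_h(s))}.$$
Put $g_h:=\max_{\nu}\E_{\mu^*,\nu}\Mp{V_h^{*}(s_h)-\uv_h(s_h)}$. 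The law of the step-$(h{+}1)$ state obtained by following $\nu$ up to step $h-1$ and then $\mu_h^*,\tilde b_h$ at step $h$ is the step-$(h{+}1)$ occupancy of some pair $(\mu^*,\nu')$, and $(s_h,a_h,\tilde b_h(s_h))$ has the law of a step-$h$ occupancy of such a $(\mu^*,\nu')$; taking $\E_{s_h\sim d_h^{\mu^*,\nu}}$ of the display and maximizing over $\nu$ therefore telescopes into $g_h\le g_{h+1}+2\max_{\nu}\E_{\mu^*,\nu}\Mp{\ub_h(s_h,a_h,b_h)}$ with $g_{H+1}=0$.

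It remains to bound $\max_\nu\E_{\mu^*,\nu}[\ub_h]$ by \emph{unilateral concentrability}. For any $\nu$, $\E_{\mu^*,\nu}[\ub_h]=4\sqrt{H^2\iota}\sum_{s,a,b}d_h^{\mu^*,\nu}(s,a,b)/\sqrt{n_h(s,a,b)\lor 1}$, and Cauchy--Schwarz (using $\sum_{s,a,b}d_h^{\mu^*,\nu}=1$) bounds the sum by $\big(\sum_{s,a,b}d_h^{\mu^*,\nu}(s,a,b)/(n_h(s,a,b)\lor 1)\big)^{1/2}$. Splitting the triples on $\Ev$ into well-covered ones ($\tfrac{n}{H}d_h^\rho\gtrsim\iota$, where $n_h\gtrsim\tfrac{n}{H}d_h^\rho$ and $d_h^{\mu^*,\nu}/d_h^\rho\le C^{*}$ make each summand $\lesssim HC^{*}/n$) and poorly-covered ones ($d_h^\rho\lesssim H\iota/n$, so $d_h^{\mu^*,\nu}\lesssim C^{*}H\iota/n$ by the same ratio bound), the sum is $\lesssim C^{*}HSAB\iota/n$, hence $\E_{\mu^*,\nu}[\ub_h]\lesssim\iota\sqrt{C^{*}H^3SAB/n}$ \emph{uniformly} in $\nu$; summing $H$ terms gives $g_h\le\widetilde{O}(\sqrt{C^{*}SABH^5/n})$, the first claimed bound. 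The inequality for $\ov_h-V_h^{*}$ follows by the mirror-image argument along $(\mu,\nu^*)$-trajectories, using that $C^{*}$ also controls $d_h^{\mu,\nu^*}/d_h^\rho$. The main obstacle is making the recursion close \emph{for every} $\nu$ at once: the data-dependent deterministic response $\tilde b_h$ appears inside it, so one must pass to $g_h=\max_\nu(\cdots)$ and observe that appending $\tilde b_h$ keeps the trajectory law within the family $\{d^{\mu^*,\nu'}\}$ --- which is exactly where the \emph{unilateral} (rather than single-strategy) coverage is indispensable, since the induced $\nu'$ can be arbitrary. The remaining pieces --- the good-event probability, the two-case bonus split, and the parallel bookkeeping for $\ov$ --- are routine.
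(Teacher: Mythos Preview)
Your proposal is correct and follows essentially the same approach as the paper: establish a good concentration event via Hoeffding (exploiting the data splitting for independence), prove pessimism $\uv_h\le V_h^{\um,*}\le V_h^*$ by backward induction, derive a one-step recursion through the minimax structure of the stage games, and bound the accumulated bonus via Cauchy--Schwarz and the unilateral concentrability $C^*$. The only cosmetic differences are that the paper uses the Nash min-player strategy $\un_h$ (rather than your deterministic best response $\tilde b_h$ to $\mu^*$) in the recursion and explicitly constructs the concatenated opponent $\nu'=(\nu_1,\dots,\nu_{h-1},\un_h,\dots,\un_H)$ instead of your $g_h=\max_\nu(\cdots)$ device, but both routes keep the trajectory law inside $\{d^{\mu^*,\nu'}\}$ and hence inside the unilateral coverage class.
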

\begin{proof}[Proof Sketch]
For simplicity, we only show the guarantee for the strategy $\um$ of the max player. First, we show that under good concentration event, the pessimistic value $\uv_h$ is always smaller than the best response value of $\um$, i.e. 
$$\uv_h(s)\leq V_h^{\um,*}(s),\forall h\in[H],s\in\S.$$
Second, we show that the performance gap of $\um$ is bounded by the expected sum of bonus under the strategy $\mu^*,\un$, i.e.
\begin{align*}
    V_h^*(s)-V_h^{\um,*}(s)\leq V_h^{\mu^*,\un}(s_h)-\uv_h(s_h)
    \leq2\E_{\mu^*,\un}\Mp{\sum_{t=h}^H\ub_t(s_t,a_t,b_t)|s_h=s}.
\end{align*}
Finally, we define a concatenated strategy
$\nu':=(\nu_1,\cdots,\nu_{h-1},\un_h,\cdots,\un_H)$ and then we have 
$$\E_{\mu^*,\nu} \Mp{V_h^{*}(s_h)-\uv_h(s_h)}\leq 2\E_{\mu^*,\nu'}\sum_{t=h}^H\ub_t(s_t,a_t,b_t).$$ 
As Assumption \ref{asp2} suggests that $(\mu^*,\nu')$ is well covered by the exploration strategy $\rho$, the expected sum of bonus can be bounded. See Appendix \ref{apx:hoeffding} for details.
\end{proof}

Theorem \ref{thm:ref main} provides polynomial bounds on the error of the value estimates in Algorithm \ref{algo:povi}. It can directly imply the following performance gap bound. In addition, it provides guarantees for the reference function that will be utilized in the next section.

\begin{corollary}\label{thm:positive main}
Suppose Assumption \ref{asp2} holds.
For any $0<\delta<1$,  with probability $1-\delta$, the output policy $\pi=(\um,\on)$ of Algorithm \ref{algo:povi} satisfies
$\mathrm{Gap}(\pi)\leq \widetilde{O}\Sp{\sqrt{C^*SABH^5/n}}.$
\end{corollary}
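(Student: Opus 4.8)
The plan is to read the duality-gap bound off of Theorem~\ref{thm:ref main} together with the one-sided pessimism/optimism inequalities that its proof establishes. Decompose
$$\mathrm{Gap}(\pi)=V_1^{*,\on}(s_1)-V_1^{\um,*}(s_1)=\Sp{V_1^{*,\on}(s_1)-V_1^{*}(s_1)}+\Sp{V_1^{*}(s_1)-V_1^{\um,*}(s_1)},$$
so it suffices to bound the two bracketed quantities separately; the Nash value $V_1^*(s_1)$ cancels, and by weak duality both brackets are nonnegative.

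For the max player's term, I would use the fact (the first step in the proof of Theorem~\ref{thm:ref main}) that on the good event $\uv_h(s)\leq V_h^{\um,*}(s)$ for all $h\in[H]$, $s\in\S$. Instantiating Theorem~\ref{thm:ref main} at $h=1$, where $s_1$ is the fixed initial state so the expectation is vacuous, with an arbitrary $\nu$ gives $V_1^{*}(s_1)-\uv_1(s_1)\leq\widetilde{O}\Sp{\sqrt{C^*SABH^5/n}}$. Chaining these,
$$V_1^{*}(s_1)-V_1^{\um,*}(s_1)\leq V_1^{*}(s_1)-\uv_1(s_1)\leq\widetilde{O}\Sp{\sqrt{C^*SABH^5/n}}.$$
The min player's term is handled by the mirror image: the optimistic estimate satisfies $\ov_h(s)\geq V_h^{*,\on}(s)$, and the second inequality of Theorem~\ref{thm:ref main} at $h=1$ gives $\ov_1(s_1)-V_1^{*}(s_1)\leq\widetilde{O}\Sp{\sqrt{C^*SABH^5/n}}$, hence $V_1^{*,\on}(s_1)-V_1^{*}(s_1)\leq\widetilde{O}\Sp{\sqrt{C^*SABH^5/n}}$. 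Adding the two bounds yields the claim. Since both inequalities of Theorem~\ref{thm:ref main} hold simultaneously on one event of probability $1-\delta$, no extra union bound is needed.

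I do not anticipate a genuine obstacle: the corollary is essentially bookkeeping on top of Theorem~\ref{thm:ref main}. The one point requiring care is that the argument uses the \emph{one-sided} comparisons $\uv_1\leq V_1^{\um,*}$ and $\ov_1\geq V_1^{*,\on}$ between the learned values and the best-response values of the \emph{output} strategies, rather than only the value-approximation bounds stated in Theorem~\ref{thm:ref main}; these comparisons are exactly the pessimism step already carried out in the proof of Theorem~\ref{thm:ref main} (for $\um$) and its symmetric counterpart (for $\on$), so they are available for free. If one insisted on a fully self-contained write-up, the only added work would be spelling out the min-player mirror of that monotone sandwich step, which is structurally identical to the max-player case.
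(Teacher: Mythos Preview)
Your proposal is correct and matches the paper's own derivation: the appendix proves exactly this corollary (as Theorem~\ref{thm:ref1}) by combining the pessimism sandwich $\uv_1\le V_1^{\um,*}$, $\ov_1\ge V_1^{*,\on}$ with the $h=1$ instance of the value-approximation bounds, and then summing the two one-sided errors. The only cosmetic difference is that the paper first proves the $h=1$ gap bound directly from Lemma~\ref{lemma:pessimism error} and afterwards states the general-$h$ version (Theorem~\ref{thm:ref main}), whereas you specialize the general statement back to $h=1$; the content is identical.
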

Theorem \ref{thm:positive main} shows that the output strategy of Algorithm \ref{algo:povi} is an $\widetilde{O}\Sp{\sqrt{C^*SABH^5/n}}$-approximate NE. The parameter $C^*$ measures how the exploration strategy $\rho$ covers the unilateral strategies $(\mu^*,\nu)$ and $(\mu,\nu^*)$ for all $\mu$ and $\nu$.

\subsection{Bernstein-type Algorithm with Reference Advantage Function Decomposition}\label{sec:bernstein}

In this section, we will derive an improved performance gap bound $\widetilde{O}\Sp{\sqrt{C^*SABH^3/n}}$. The extra $H^2$ is shaved by using Bernstein-type bonus and reference advantage decomposition technique motivated from \citet{xie2021policy}. However, we want to emphasize that zero-sum Markov games are substantially different from MDP and require novel adaptation, which we will describe later.

Due to the space constraint, we put Algorithm \ref{algo:povi2 main} in Appendix \ref{apx:algo}.  Algorithm \ref{algo:povi2 main} is different from Algorithm \ref{algo:povi} in two aspects. First, we use the reference advantage decomposition to remove an $H$ factor. The dataset is split into three subset with equal size $\D_\re$, $\D_0$, $\D_1$, and $\D_1$ is further split into $H$ subset with equal size $\{\D_{h,1}\}_{h=1}^H$. We run algorithm \ref{algo:povi} on dataset $\D_\re$ and we can obtain pessimistic value estimate $\uv_\re$ and $\ov_\re$ with guarantees by Theorem \ref{thm:ref main}. Then we use dataset $\D_0$ to estimate $P_h\uv_{h+1}^\re$ and dataset $\D_{h,1}$ to estimate $P_h(\uv_{h+1}-\uv_{h+1}^\re)$. Second, we use a Bernstein-type bonus to remove another $H$ factor. Our updating formulas of $\uq_h$ and $\oq_h$ are 
 \begin{align}\label{eqn:uq2}
    \begin{split}
        \uq_h&=\uq^\re_h\lor[\widehat{r}_{h,0}+(\widehat{P}_{h,0}\cdot\uv^\re_{h+1})
        -\ub_{h,0}+(\widehat{P}_{h,1}\cdot(\uv_{h+1}-\uv^\re_{h+1}))-\ub_{h,1}],
    \end{split}
\end{align}
\begin{align}\label{eqn:oq2}
    \begin{split}
        \oq_h&=\oq^\re_h\land[\widehat{r}_{h,0}+(\widehat{P}_{h,0}\cdot\ov^\re_{h+1})
        +\ob_{h,0}+(\widehat{P}_{h,1}\cdot(\ov_{h+1}-\ov^\re_{h+1}))+\ob_{h,1}],
    \end{split}
\end{align}
 where we truncate by the reference function to ensure monotonic update so that $\uq_h$ and $\oq_h$ are more accurate pessimistic/optimistic estimate compared with the reference function $\uq_h^\re$ and $\oq_h^\re$.
The bonus functions are defined as
\small
\begin{align}\label{eqn:b0}
\begin{split}
    \ub_{h,0}=c\Sp{\sqrt{\frac{\var_{\widehat{P}_{h,0}}(\uv_{h+1}^\re)\iota}{n_{h,0}\lor 1}}+\frac{H\iota}{n_{h,0}\lor 1}},\ob_{h,0}=c\Sp{\sqrt{\frac{\var_{\widehat{P}_{h,0}}(\ov_{h+1}^\re)\iota}{n_{h,0}\lor 1}}+\frac{H\iota}{n_{h,0}\lor 1}},
\end{split}
\end{align}
\begin{align}\label{eqn:b1}
\begin{split}
    \ub_{h,1}=c\Sp{\sqrt{\frac{\var_{\widehat{P}_{h,1}}(\uv_{h+1}-\uv_{h+1}^\re)\iota}{n_{h,1}\lor 1}}+\frac{H\iota}{n_{h,1}\lor 1}},\ob_{h,1}=c\Sp{\sqrt{\frac{\var_{\widehat{P}_{h,1}}(\ov_{h+1}-\ov_{h+1}^\re)\iota}{n_{h,1}\lor 1}}+\frac{H\iota}{n_{h,1}\lor 1}},
\end{split}
\end{align}
\normalsize
where $c$ is some universal constant and  $\var_{\widehat{P}_{h,0}}(V)$, $\var_{\widehat{P}_{h,1}}(V)$, $n_{h,0}$, $n_{h,1}$ are all $SAB$-dimension vectors and the operations are element-wise. 

\begin{theorem}\label{thm:positive}
Suppose Assumption \ref{asp2} holds. For any $0<\delta<1$ and $n\geq C^*SABH^4$, with probability $1-\delta$, the output policy $\pi=(\um,\on)$ of Algorithm \ref{algo:povi2 main} satisfies
 $\mathrm{Gap}(\pi)\leq \widetilde{O}\Sp{\sqrt{C^*SABH^3/n}}.$
 \end{theorem}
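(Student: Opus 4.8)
The plan is to mirror the argument behind Theorem~\ref{thm:ref main}, now carrying the Bernstein bonuses and the reference--advantage split, and to close the estimate with a self-bounding step that solves an inequality for $\mathrm{Gap}(\pi)$ in terms of itself. \textbf{Step 1 (good event and the role of data splitting).} I would work on the intersection $\Ev$ of: (i) the event of Theorem~\ref{thm:ref main} for Algorithm~\ref{algo:povi} run on $\D_\re$, so that $\uv^\re,\ov^\re$ obey the stated guarantees and the one-step bounds $\uq^\re_h\le r_h+P_h\uv^\re_{h+1}$, $\oq^\re_h\ge r_h+P_h\ov^\re_{h+1}$; (ii) empirical-Bernstein concentration of $\widehat P_{h,0}$ applied to $\uv^\re_{h+1},\ov^\re_{h+1}$, legitimate because these functions are built from $\D_\re$ alone and hence independent of $\D_0$; (iii) empirical-Bernstein concentration of $\widehat P_{h,1}$ applied to $\uv_{h+1}-\uv^\re_{h+1},\ov_{h+1}-\ov^\re_{h+1}$, legitimate because $\uv_{h+1}$ uses only $\D_\re,\D_0$ and $\{\D_{t,1}\}_{t>h}$, hence is independent of $\D_{h,1}$ --- this is exactly why $\D_1$ is split across timesteps; (iv) the standard count-concentration bounds, which, because $\D_0$ (size $n/3$) is reused at every $h$ while each $\D_{h,1}$ has size $n/(3H)$, give $\sum_h\sum_{s,a,b}\frac{d_h^\rho(s,a,b)}{n_{h,0}(s,a,b)\lor1}\le\widetilde{O}(SABH/n)$ and $\sum_h\sum_{s,a,b}\frac{d_h^\rho(s,a,b)}{n_{h,1}(s,a,b)\lor1}\le\widetilde{O}(SABH^2/n)$. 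A union bound gives $\P(\Ev)\ge1-\delta$; condition on $\Ev$ below.

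\textbf{Step 2 (monotonicity, sandwich, pessimism).} By downward induction on $h$ I would show: the monotonic update gives $\uv^\re_h\le\uv_h$ and $\ov_h\le\ov^\re_h$ (because $\uq_h\ge\uq^\re_h$, $\oq_h\le\oq^\re_h$, and the value of a matrix game is monotone in its payoffs); then $r_h+P_h\uv_{h+1}-2(\ub_{h,0}+\ub_{h,1})\le\uq_h\le r_h+P_h\uv_{h+1}$ and symmetrically for $\oq_h$ --- for the upper bound on $\uq_h$ the reference branch is controlled by $\uq^\re_h\le r_h+P_h\uv^\re_{h+1}\le r_h+P_h\uv_{h+1}$ using the monotonic update, and the other branch by the two Bernstein concentrations whose errors are exactly cancelled by the subtracted bonuses; finally the pessimism $\uv_h\le V^{\um,*}_h$, $\ov_h\ge V^{*,\on}_h$, since $\uv_h(s)=\E_{a\sim\um_h,b\sim\un_h}\uq_h(s,a,b)\le\E_{a\sim\um_h,\,b\sim\br(\um)_h}\uq_h(s,a,b)$ ($\un_h$ being the min-player Nash strategy of the matrix game $\uq_h(s,\cdot,\cdot)$), which by the previous bound and the inductive hypothesis is at most $\E_{a\sim\um_h,\,b\sim\br(\um)_h}(r_h+P_hV^{\um,*}_{h+1})=V^{\um,*}_h(s)$.

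\textbf{Step 3 (gap decomposition).} By weak duality $V^{\om,\nu^*}_1\le V^*_1\le V^{\mu^*,\un}_1$, so together with Step~2, $\mathrm{Gap}(\pi)\le\Sp{\ov_1(s_1)-V^{\om,\nu^*}_1(s_1)}+\Sp{V^{\mu^*,\un}_1(s_1)-\uv_1(s_1)}$, and unrolling each term along the trajectory of $(\om,\nu^*)$ resp.\ $(\mu^*,\un)$ --- using the one-step bounds and the Nash-strategy property exactly as in the proof of Theorem~\ref{thm:ref main} --- yields $V^{\mu^*,\un}_1(s_1)-\uv_1(s_1)\le2\E_{\mu^*,\un}\Mp{\sum_{h=1}^H\Sp{\ub_{h,0}+\ub_{h,1}}(s_h,a_h,b_h)}$ and its mirror image, so it remains to bound these bonus sums by $\widetilde{O}(\sqrt{C^*SABH^3/n})$.

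\textbf{Step 4 (controlling the Bernstein bonuses --- the main obstacle).} Split each bonus into the reference part ($\ub_{h,0}$, involving $\var_{\widehat P_{h,0}}(\uv^\re_{h+1})$) and the advantage part ($\ub_{h,1}$, involving $\var_{\widehat P_{h,1}}(\uv_{h+1}-\uv^\re_{h+1})$). For the reference part, Cauchy--Schwarz over $h$ separates a count factor --- controlled by unilateral concentrability $d_h^{\mu^*,\un}\le C^*d_h^\rho$ and the $\D_0$ count bound, giving $\widetilde{O}(C^*SABH/n)$ --- from a variance factor, controlled by the law of total variance along the \emph{fixed} policy $(\mu^*,\un)$: after replacing $\var_{\widehat P_{h,0}}(\uv^\re_{h+1})$ by $\var_{P_h}(V^{\mu^*,\un}_{h+1})$ --- permissible up to error terms because the monotonic update supplies the sandwich $\uv^\re_{h+1}\le V^*_{h+1}\le V^{\mu^*,\un}_{h+1}$ and Theorem~\ref{thm:ref main} bounds $\E_{\mu^*,\un}(V^*_{h+1}-\uv^\re_{h+1})$ --- one has $\E_{\mu^*,\un}\sum_h\var_{P_h}(V^{\mu^*,\un}_{h+1})\le H^2$. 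For the advantage part, non-negativity and $H$-boundedness of $\uv_{h+1}-\uv^\re_{h+1}$ give $\var_{\widehat P_{h,1}}(\uv_{h+1}-\uv^\re_{h+1})\le H\,\widehat P_{h,1}(\uv_{h+1}-\uv^\re_{h+1})$, and $\E_{\mu^*,\un}\sum_h\widehat P_{h,1}(\uv_{h+1}-\uv^\re_{h+1})$ is bounded using $\uv_{h+1}-\uv^\re_{h+1}\le V^{\mu^*,\un}_{h+1}-\uv^\re_{h+1}$ together with the observation that $\Phi_h:=\E_{\mu^*,\un}[(V^{\mu^*,\un}_h-V^*_h)(s_h)]$ is nonincreasing in $h$ with $\Phi_1\le V^{\mu^*,\un}_1(s_1)-\uv_1(s_1)$ --- i.e.\ $\Phi_1$ is already bounded by the very bonus sum being estimated. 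Collecting all pieces and carefully tracking the $\widehat P$-versus-$P$ and $\uv^\re$-versus-$V^{\mu^*,\un}$ error terms (which the hypothesis $n\ge C^*SABH^4$ renders lower-order) produces an inequality of the shape $\mathrm{Gap}(\pi)\le\widetilde{O}(\sqrt{C^*SABH^3/n})+\sqrt{\widetilde{O}(C^*SABH^3/n)\cdot\mathrm{Gap}(\pi)}$, whose solution (a quadratic in $\sqrt{\mathrm{Gap}(\pi)}$) is the claimed bound. The hardest part is precisely this step: making the law-of-total-variance replacement rigorous in the two-player setting and bookkeeping the error terms so that the self-bounding closes with the right $H$-dependence --- the adaptation of the reference--advantage technique of \citet{xie2021policy} that the monotonic update is built to enable --- with the optimistic side treated symmetrically.
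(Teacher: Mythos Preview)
Your proposal is correct and follows essentially the same route as the paper: the monotonic-update sandwich $\uv^\re_h\le\uv_h\le V^{\um,*}_h$, reduction of the gap to the expected Bernstein bonus sums, Cauchy--Schwarz plus total variance (Lemma~\ref{lemma:sum of var}) for $\ub_{h,0}$, the reference guarantee of Theorem~\ref{thm:ref main} for $\ub_{h,1}$, and the self-bounding closure. Two minor points of difference: the self-bound should be run on $V_1^{\mu^*,\un}(s_1)-\uv_1(s_1)$ (and symmetrically on $\ov_1(s_1)-V_1^{\om,\nu^*}(s_1)$) rather than on $\mathrm{Gap}(\pi)$, since the error terms feed back into the former and not the latter; and for $\ub_{h,1}$ the paper uses the tighter sandwich $\uv_{h+1}\le V^*_{h+1}$ (not $\le V^{\mu^*,\un}_{h+1}$), which lets that term be bounded directly by Theorem~\ref{thm:ref main} without entering the self-bound at all.
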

 \begin{remark}
 $n\geq C^*SABH^4$ serves as the burn-in cost, which is standard in the literature. See a more detailed discussion in \citet{li2021breaking}. 
 \end{remark}
\begin{proof}[Proof of Sketch]
For simplicity we only show the guarantee for the strategy $\um$ of the max player. First we show that under good concentration event, the pessimistic value $\uv_h$ is always sandwiched by the reference value $\uv_h^\re$ and the best response value of $\um$, i.e.,
 $$\uv_h^\re(s)\leq\uv_h(s)\leq V_h^{\um,*}(s),\forall h\in[H],s\in\S.$$
 Second, we show that the performance gap of $\um$ is bounded by the expected sum of bonus under the strategy $\mu^*,\un$, i.e.,
 \begin{align*}
    &V_1^*(s_1)-V_1^{\um,*}(s_1)\leq V_1^{\mu^*,\un}(s_1)-\uv_1(s_1)\leq2\E_{\mu^*,\un}\sum_{h=1}^H\Mp{\ub_{h,0}(s_h,a_h,b_h)+\ub_{h,1}(s_h,a_h,b_h)}.
\end{align*}
Then we bound the first term by
 \begin{align*}
    \E_{\mu^*,\un}\sum_{h=1}^H\ub_{h,0}(s_h,a_h,b_h)\leq\widetilde{O}\Sp{\sqrt{C^*SABH^3/n}+\sqrt{C^*SABH^3/n}\sqrt{V_1^{\mu^*,\un}(s_1)-\uv_1(s_1)}},
\end{align*}
 where $\sqrt{V_1^{\mu^*,\un}(s_1)-\uv_1(s_1)}$ is the square root of the term we want to bound. The second term can be bounded similarly. Finally solving the self-bounding inequality for $V_1^{\mu^*,\un}(s_1)-\uv_1(s_1)$  and we have
 \begin{align*}
    V_1^*(s_1)-V_1^{\um,*}(s_1)\leq& V_1^{\mu^*,\un}(s_1)-\uv_1(s_1)
    \leq\widetilde{O}\Sp{\sqrt{C^*SABH^3/n}}.
\end{align*} 
 We utilizes Theorem \ref{thm:ref main} to provide guarantee for the error of the reference function and $\uv_h^\re(s)\leq\uv_h(s)\leq V_h^{\um,*}(s)$ to bound the variance of the estimation error. See Appendix \ref{apx:bernstein} for details.
\end{proof}
As MDP are degenerated Markov games with one player having a fixed action, Markov games inherit the lower bounds of MDP. Comparing with the lower bound $\widetilde{\Omega}\Sp{\sqrt{C^*SH^3/n}}$ \citep{xie2021policy}, our bound is already tight in $C^*$, $S$, $H$. The extra $AB$ factor is from the Cauchy-Schwarz inequality and the fact that the NE of zero-sum Markov games can be a mixed strategy while deterministic optimal policy always exists for MDP. It is unknown whether the $AB$ factor is removable and we leave it to future work.

\subsection{Minimax Optimal Sample Complexity Bounds}\label{sec:extenstion}

In this section, we show that Algorithm \ref{algo:povi2 main} directly adapts to two popular settings, i.e. Assumption \ref{asp3} (uniform concentration assumption) and turn-based Markov games. In addition, minimax sample complexity can be achieved under both settings. The proof is deferred to Appendix \ref{apx:extensions}.

\begin{theorem}\label{thm:uni}
Set $\od_m=\min\Bp{d_h^\rho(s,a,b):h\in[H],(s,a,b)\in\S\times\A\times\B}$.  Suppose Assumption \ref{asp3} holds. For any $0<\delta<1$ and $n\geq H^4/\od_m $, with probability $1-\delta$, the output policy $\pi=(\um,\on)$ of Algorithm \ref{algo:povi2 main} satisfies
$\mathrm{Gap}(\pi)\leq\widetilde{O}\Sp{\sqrt{H^3/(n\od_m)}}.$
\end{theorem}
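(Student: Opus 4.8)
The plan is to re-run the proof of Theorem~\ref{thm:positive}, replacing every step in which the unilateral concentrability $C^*$ and an accompanying factor $\sqrt{SAB}$ enter by the cruder — but, under Assumption~\ref{asp3}, lossless — estimate that each empirical count is uniformly large. Two preliminary observations drive this. First, since $d_h^{\mu,\nu}(s,a,b)\le 1$ always while $d_h^\rho(s,a,b)\ge\od_m$ under Assumption~\ref{asp3}, we get $C^*\le 1/\od_m$; this already makes Theorem~\ref{thm:ref main} and its proof applicable, and — re-running that Hoeffding-level argument with the uniform-coverage simplification described below — yields a reference guarantee $\E_{\mu^*,\nu}[V_h^*(s_h)-\uv_h^\re(s_h)]\le\widetilde{O}(\sqrt{H^5/(n\od_m)})$ (and its min-player mirror), with \emph{no} $\sqrt{SAB}$ factor. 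Second, because $d_h^\rho(s,a,b)\ge\od_m$, a Chernoff bound on the binomial counts gives, on an event of probability $1-\delta$, that $n_{h,0}(s,a,b)\ge\tfrac16 n\od_m$ and $n_{h,1}(s,a,b)\ge\tfrac1{6H}n\od_m$ for every $h,(s,a,b)$, provided $n\ge\mathrm{poly}(H)\,\iota/\od_m$, which the burn-in $n\ge H^4/\od_m$ subsumes.

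Next I would import, verbatim and coverage-free, the two structural facts from the proof of Theorem~\ref{thm:positive} (Appendix~\ref{apx:bernstein}): the sandwich $\uv_h^\re(s)\le\uv_h(s)\le V_h^{\um,*}(s)$ for all $h,s$ (with the symmetric statement for $\ov,\on$ on the min player's side), and the reduction
\[
V_1^*(s_1)-V_1^{\um,*}(s_1)\le V_1^{\mu^*,\un}(s_1)-\uv_1(s_1)\le 2\,\E_{\mu^*,\un}\sum_{h=1}^H\Mp{\ub_{h,0}(s_h,a_h,b_h)+\ub_{h,1}(s_h,a_h,b_h)}.
\]
Neither uses any dataset assumption, so both carry over unchanged; combined with the analogue for $\on$ they reduce $\mathrm{Gap}(\pi)$ to bounding $\E_{\mu^*,\un}\sum_h(\ub_{h,0}+\ub_{h,1})$ and its min-player counterpart.

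The heart of the argument is the bound on $\E_{\mu^*,\un}\sum_h\ub_{h,0}$. On the good count event, $\ub_{h,0}\le c'\bigl(\sqrt{\var_{\widehat P_{h,0}}(\uv_{h+1}^\re)\,\iota/(n\od_m)}+H\iota/(n\od_m)\bigr)$; using the standard empirical-to-true variance comparison and then Jensen's inequality over $(s,a,b)$ at each fixed $h$,
\[
\E_{\mu^*,\un}\Mp{\ub_{h,0}(s_h,a_h,b_h)}\le c''\sqrt{\frac{\iota}{n\od_m}}\,\sqrt{\E_{\mu^*,\un}\Mp{\var_{P_h}\Sp{\uv_{h+1}^\re}}}+\widetilde{O}\Sp{\frac{H}{n\od_m}},
\]
so that — crucially — no $\sqrt{SAB}$ appears, since $\sum_{s,a,b}d_h^{\mu^*,\un}(s,a,b)=1$. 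Summing over $h$, applying Cauchy--Schwarz over the $H$ timesteps, and then the law of total variance
\[
\sum_{h=1}^H\E_{\mu^*,\un}\Mp{\var_{P_h}\Sp{V_{h+1}^{\mu^*,\un}}}\le H^2,
\]
together with $\var_{P_h}(\uv_{h+1}^\re)\le 2\var_{P_h}(V_{h+1}^{\mu^*,\un})+2\E_{P_h}[(\uv_{h+1}^\re-V_{h+1}^{\mu^*,\un})^2]$ and the fact that $\uv_{h+1}^\re-V_{h+1}^{\mu^*,\un}$ is controlled (in an appropriate $L^2$ sense) via the sandwich and the reference guarantee, gives $\E_{\mu^*,\un}\sum_h\ub_{h,0}\le\widetilde{O}(\sqrt{H^3/(n\od_m)})$ up to terms polynomially smaller in $n$. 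The reference-advantage bonuses $\ub_{h,1}$ are handled the same way: $\var_{P_h}(\uv_{h+1}-\uv_{h+1}^\re)$ is squeezed through $0\le\uv_{h+1}-\uv_{h+1}^\re\le V_{h+1}^{\um,*}-\uv_{h+1}^\re\le V_{h+1}^*-\uv_{h+1}^\re$ and the reference guarantee, so they contribute a lower-order term; and $\sum_h H\iota/(n\od_m)$ for both splits is $\widetilde{O}(H^3/(n\od_m))\le\widetilde{O}(\sqrt{H^3/(n\od_m)})$ once $n\ge H^3/\od_m$.

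Finally, exactly as in Theorem~\ref{thm:positive}, the variance terms must actually be written in terms of the estimation error $X:=V_1^{\mu^*,\un}(s_1)-\uv_1(s_1)$ itself (the true value functions are not directly accessible; substituting the pessimistic estimates costs a correction proportional to $\sqrt X$), producing a self-bounding inequality $X\le\widetilde{O}(\sqrt{H^3/(n\od_m)})+\widetilde{O}(\sqrt{H^3/(n\od_m)})\sqrt X$; since the coefficient is $\ll 1$ under the burn-in, solving it gives $X\le\widetilde{O}(\sqrt{H^3/(n\od_m)})$, and the identical bound on the min player's side yields $\mathrm{Gap}(\pi)\le\widetilde{O}(\sqrt{H^3/(n\od_m)})$. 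I expect the main obstacle to be precisely the law-of-total-variance bookkeeping with the \emph{surrogate} value functions $\uv^\re,\uv$ in place of true value functions, and in particular verifying that all the discrepancy, reference, and tail terms remain lower order under the weaker burn-in $n\ge H^4/\od_m$ — which lacks the $SAB$ factor of Theorem~\ref{thm:positive}'s burn-in — this working out exactly because, under uniform coverage, every intermediate estimate (including the reference guarantee) also sheds its $\sqrt{SAB}$.
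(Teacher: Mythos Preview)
Your proposal is correct and follows essentially the same route as the paper's proof (Appendix~\ref{apx:extensions}, Theorems~\ref{thm:refuni1}--\ref{thm:uni apx} and Lemmas~\ref{lemma:sum b0 uni}--\ref{lemma:sum b1 uni}): rerun the Bernstein/reference-advantage argument of Theorem~\ref{thm:positive}, but exploit $d_h^\rho(s,a,b)\ge\od_m$ to pull $1/\sqrt{n\od_m}$ out of the bonuses uniformly, so that Cauchy--Schwarz (equivalently, your Jensen-then-Cauchy--Schwarz) is applied against the probability weights $d_h^{\mu^*,\un}$ themselves rather than their square roots, eliminating the $\sqrt{SAB}$ factor; then use the variance decomposition, total-variance lemma, reference guarantee, Lemma~\ref{lemma:max estimation error}, and the self-bounding inequality exactly as before. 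The only cosmetic differences are that the paper uses the count bound $1/(n_{h,0}\lor1)\le c\iota/(nd_h^\rho)$ from Lemma~\ref{lemma:ref concentration} rather than a direct Chernoff lower bound on the counts, and it uses the ordered variance inequality of Lemma~\ref{lemma:variance decomposition} rather than your $\var(X)\le 2\var(Y)+2\E[(X-Y)^2]$; both pairs of choices are equivalent here.
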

This bound has no explicit dependence on $AB$ because the Cauchy-Schwarz inequality can be applied on $d_h^{\mu^*,\un}$ instead of $\sqrt{d_h^{\mu^*,\un}}$ (See the proof of Theorem \ref{thm:refuni1}). As the lower bound $\widetilde{\Omega}\Sp{\sqrt{H^3/(n\od_m)}}$ for MDP \citep{yin_towards_2021} is the lower bound for Markov games, Algorithm \ref{algo:povi2 main} achieves minimax sample complexity under assumption \ref{asp3}.
\begin{theorem}\label{thm:tbmg}
Suppose Assumption \ref{asp2} holds for a turn-based Markov games. For any $0<\delta<1$ and $n\geq C^*SH^4$, with probability $1-\delta$, the output policy $\pi=(\um,\on)$ of Algorithm \ref{algo:povi2 main} satisfies
$\mathrm{Gap}(\pi)\leq \widetilde{O}\Sp{\sqrt{C^*SH^3/n}}.$
\end{theorem}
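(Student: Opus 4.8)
The plan is to re-run the proof of Theorem \ref{thm:positive} almost verbatim, exploiting two structural features of turn-based Markov games so that every occurrence of the factor $AB$ collapses to a constant. The entire skeleton of that proof is stated for a general Markov game and so transfers unchanged: the sandwich $\uv_h^\re(s)\le\uv_h(s)\le V_h^{\um,*}(s)$ (and its optimistic twin $V_h^{*,\on}(s)\le\ov_h(s)\le\oq_h^\re$-side estimate), the gap decomposition $V_1^*(s_1)-V_1^{\um,*}(s_1)\le V_1^{\mu^*,\un}(s_1)-\uv_1(s_1)\le 2\E_{\mu^*,\un}\sum_h[\ub_{h,0}(s_h,a_h,b_h)+\ub_{h,1}(s_h,a_h,b_h)]$, the self-bounding inequality and its solution. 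The only place $AB$ enters is the step that converts the expected-bonus sum into a bound in terms of $C^*$, and that is the step I would revisit.

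First I would record the two facts that make turn-based games special: (i) a turn-based Markov game admits a Nash equilibrium $(\mu^*,\nu^*)$ in which both players are deterministic, because the equilibrium is produced by the pure $\max$–$\min$ recursion on the $Q$-functions rather than by solving a genuine matrix game; and (ii) at every state only one player's action influences the reward and the transition, so the opponent's coordinate is free. Together these imply that when PNVI is run on a turn-based instance the auxiliary strategies $\um,\un,\om,\on$ appearing in the analysis may all be taken deterministic — fixing one player's action makes $\uq_h(s,\cdot,\cdot)$ and $\oq_h(s,\cdot,\cdot)$ depend only on the other player's action, so these "matrix games" have pure saddle points, and the inactive player's choice may be pinned to a well-sampled action. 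The quantitative payoff is that the occupancy measure $d_h^{\mu^*,\un}$ (a product of two deterministic strategies with only transition-induced randomness) is supported on at most $S$ triples $(s,a,b)$ at each step $h$, rather than on up to $SAB$ of them; likewise for $d_h^{\um,\nu^*}$ in the min player's analysis.

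Next I would redo the conversion step with this support bound in hand. In the proof of Theorem \ref{thm:positive} the expected-bonus sum splits into a leading ``variance'' part, handled by Cauchy–Schwarz over $(s,a,b)$, the law of total variance, and the sandwich control on $\var(\uv_{h+1}-\uv_{h+1}^\re)$, and a lower-order part of the form $\sum_{h,s,a,b} d_h^{\mu^*,\un}(s,a,b)\cdot\widetilde O(H\iota/(n\,d_h^\rho(s,a,b)))$, bounded via $d_h^{\mu^*,\un}(s,a,b)\le C^* d_h^\rho(s,a,b)$ and $n_h(s,a,b)\gtrsim n\,d_h^\rho(s,a,b)$ by $\widetilde O(H^2\iota\cdot|\mathrm{supp}|\cdot C^*/n)$. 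For a general MG $|\mathrm{supp}|$ is only $\le SAB$; for a turn-based MG it is $\le S$ by (i)–(ii), and the Cauchy–Schwarz in the variance part likewise only pays the support size. Carrying $S$ in place of $SAB$ through the computation gives $\E_{\mu^*,\un}\sum_h[\ub_{h,0}+\ub_{h,1}]\le\widetilde O(\sqrt{C^*SH^3/n})+\widetilde O(\sqrt{C^*SH^3/n})\sqrt{V_1^{\mu^*,\un}(s_1)-\uv_1(s_1)}$, exactly the estimate of Theorem \ref{thm:positive} with $AB$ deleted; solving the self-bounding inequality and using the burn-in $n\ge C^*SH^4$ to absorb the quadratic term yields $V_1^*(s_1)-V_1^{\um,*}(s_1)\le\widetilde O(\sqrt{C^*SH^3/n})$. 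The symmetric argument for the min player gives $V_1^{*,\on}(s_1)-V_1^*(s_1)\le\widetilde O(\sqrt{C^*SH^3/n})$, and then $\mathrm{Gap}(\pi)=V_1^{*,\on}(s_1)-V_1^{\um,*}(s_1)\le[\ov_1(s_1)-V_1^*(s_1)]+[V_1^*(s_1)-\uv_1(s_1)]\le\widetilde O(\sqrt{C^*SH^3/n})$ by the sandwich bounds, proving the theorem.

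I expect the main obstacle to be the rigorous justification of fact (ii) at the level of the algorithm as written: PNVI does not know the game is turn-based, so one must check that on a turn-based instance the pessimistic and optimistic matrix games it solves genuinely admit pure saddle points — or that the mixed strategies returned can be replaced by pure ones without changing $\uv_h,\ov_h$ or the validity of the sandwich — and that the inactive player's action can be fixed to a value whose visitation count $n_h$ controls the per-step bonus encountered along $(\mu^*,\un)$- and $(\um,\nu^*)$-trajectories. Once this bookkeeping is settled, the rest is a mechanical re-parametrization of the proof of Theorem \ref{thm:positive}. A secondary point worth checking is that neither the reference-function guarantee of Theorem \ref{thm:ref main} nor the law-of-total-variance step smuggles in an action-space factor; they do not, since both are controlled by the total occupancy mass $\sum_{s,a,b} d_h^{\mu^*,\un}(s,a,b)=1$ rather than by the number of triples in the support.
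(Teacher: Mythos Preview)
Your approach is the paper's: both exploit that in a turn-based game the strategies $\mu^*,\nu^*,\um,\un,\om,\on$ can all be taken deterministic, so $d_h^{\mu^*,\un}$ is supported on at most $S$ triples per step and the Cauchy--Schwarz factor in the bonus sums shrinks from $\sqrt{SABH}$ to $\sqrt{SH}$; the remainder of the argument (sandwich, self-bounding, burn-in) is carried over verbatim.

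One point in your last paragraph is wrong, however. Theorem~\ref{thm:ref main} as stated \emph{does} carry an $AB$ factor---its bound is $\widetilde O(\sqrt{C^*SABH^5/n})$---and that $AB$ arises from precisely the Cauchy--Schwarz-over-support step you identify, not from anything the total-occupancy-mass argument neutralizes. (Your ``controlled by $\sum d_h=1$'' reasoning is correct for the law-of-total-variance lemma, but the reference guarantee is a different object.) This matters: the reference bound enters under a square root inside the $\ub_{h,0}$ and $\ub_{h,1}$ estimates, and with the turn-based burn-in $n\ge C^*SH^4$ you cannot absorb an $AB$ there. The paper therefore first re-proves the reference-function guarantee for turn-based games, obtaining $\widetilde O(\sqrt{C^*SH^5/n})$ by applying the same deterministic-strategy support argument to the PNVI analysis, and only then runs the Bernstein/self-bounding machinery. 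The fix is exactly the mechanism you already have in hand; you just need to apply it one layer deeper than you stated.
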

As the lower bound is $\widetilde{\Omega}\Sp{\sqrt{C^*SH^3/n}}$ \citep{xie2021policy}, Algorithm \ref{algo:povi2 main} can achieve the minimax sample complexity for turn-based Markov games under assumption \ref{asp2}. The difference is due to turn-based Markov games always have pure NE strategies (See the proof of Theorem \ref{thm:ref1 tbmg}).

\section{Conclusion}
\label{sec:conclusion}
In this work, we study the minimal dataset coverage assumption for NE learning in two-player zero-sum Markov games. We show that single strategy concentration is not enough for NE learning. Instead, we find a minimal coverage assumption for NE learning and design an algorithm with sample complexity tight in $C^*,\S,H$ under such assumption based on novel techniques. In addition, the algorithm can achieve minimax sample complexity in certain settings. We believe this work can shed new light on offline MARL.

Here we list several open problems for future work. One direction is to find the minimax sample complexity of offline Markov games under the unilateral concentration. Importantly, it is unclear whether $AB$ factor can be reduced~\citep{bai2020near}.
Another direction is to design efficient algorithms for offline MARL with a large number of agents without sample complexity scales exponentially with the number of agents.


\section*{Acknowledgements}
This work was supported in part by NSF CCF 2212261, NSF IIS 2143493, NSF DMS 2134106, NSF CCF 2019844 and NSF IIS 2110170.

\bibliography{References}
\bibliographystyle{plainnat}




\newpage
\appendix

\section{Algorithm}\label{apx:algo}

\begin{algorithm}[!htb]
    \caption{Pessimistic Nash Value Iteration with Reference Advantage Decomposition}
	\label{algo:povi2 main}
    \begin{algorithmic}
        \STATE {\bfseries Input:} Dataset $\D=\Bp{(s_h^k,a_h^k,b_h^k,r_h^k,s_{h+1}^k)}_{k,h=1}^{n,H}$. Failure Probability $\delta$. 
        \STATE {\bfseries Initialization:} Randomly split the dataset $\D$ into $\D_\re$, $\D_0$,  $\Bp{\D_{h,1}}_{h=1}^H$ with $|\D_\re|=n/3$, $|\D_0|=n/3$, $|\D_{h,1}|=n/(3H)$ for all $h\in[H]$.
        \STATE Set $\uv_{H+1}=\ov_{H+1}=0$.
        \STATE Learn the reference value function $\uv_\re,\ov_\re\leftarrow \mathrm{PNVI}(\D_\re)$ (Algorithm \ref{algo:povi}).
        \STATE Set $\widehat{P}_{h,0}$ and $\widehat{r}_{h,0}$ as \eqref{eqn:r,p} using the dataset $\D_0$ for all $h\in[H]$.
        \STATE Set $\widehat{P}_{h,1}$ and $\widehat{r}_{h,1}$ as \eqref{eqn:r,p} using the dataset $\D_{h,1}$ for all $h\in[H]$.
        \STATE Set $\ub_{h,0}$ and $\ob_{h,0}$ as \eqref{eqn:b0} using the dataset $\D_0$ for all $h\in[H]$. 
         
        \FOR{time $h=H, H-1, \dots, 1$}
        \STATE Set $\ub_{h,1}$ and $\ob_{h,1}$ as \eqref{eqn:b1} using the dataset $\D_{h,1}$  for all $h\in[H]$.
    		\STATE Set $\uq_h(\cdot,\cdot,\cdot)$ as \eqref{eqn:uq2}.
    		\STATE Compute the NE of $\uq_h(\cdot,\cdot,\cdot)$ as $(\um_h(\cdot),\un_h(\cdot))$.
    		\STATE Compute $\uv_h(\cdot)=\E_{a\sim\um_h,b\sim\un_h}\uq_h(\cdot,a,b)$.
    		\STATE Set $\oq_h(\cdot,\cdot,\cdot)$ as \eqref{eqn:oq2}.
    		\STATE Compute the NE of $\oq_h(\cdot,\cdot,\cdot)$ as $(\om_h(\cdot),\on_h(\cdot))$.
    		\STATE Compute $\ov_h(\cdot)=\E_{a\sim\om_h,b\sim\on_h}\oq_h(\cdot,a,b)$.
	    \ENDFOR
	    \STATE {\bfseries Output:} $\um=(\um_1,\um_2,\cdots,\um_H)$, $\on=(\on_1,\on_2,\cdots,\on_H)$.
    \end{algorithmic}
\end{algorithm}

\section{Proofs in Section \ref{sec:hoeffding}}\label{apx:hoeffding}

\begin{lemma}\label{lemma:concentration}
(Concentration) With probability $1-\delta$, we have
$$\abs{r_h(s,a,b)-\widehat{r}_h(s,a,b)+\inner{P_h(\cdot|s,a,b)-\widehat{P}_h(\cdot|s,a,b),\uv_{h+1}(\cdot)}}\leq \ub_h(s,a,b),$$
$$\abs{r_h(s,a,b)-\widehat{r}_h(s,a,b)+\inner{P_h(\cdot|s,a,b)-\widehat{P}_h(\cdot|s,a,b),\ov_{h+1}(\cdot)}}\leq \ob_h(s,a,b),$$
$$\frac{1}{n_h(s,a,b)\lor1}\leq\frac{8H\iota}{nd_h^\rho(s,a,b)}.$$
holds for all $h\in[H]$, $s\in\S$, $a\in\A$ and $b\in\B$. We define this as the good event $\G$.
\end{lemma}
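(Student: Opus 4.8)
The plan is to prove the three displayed inequalities separately and then combine them by a union bound, rescaling $\delta$ by a constant factor so that the extra $\log$-constants are absorbed into $\iota$. I would begin with the third inequality (the count lower bound), which is independent of the value functions. The $n/H$ samples forming $\D_h$ are i.i.d.\ rollouts of $\rho$, each visiting $(s,a,b)$ at step $h$ with probability $d_h^\rho(s,a,b)$, so $\E[n_h(s,a,b)]=(n/H)\,d_h^\rho(s,a,b)$. If $d_h^\rho(s,a,b)\le 8H\iota/n$ the inequality is immediate since then its right-hand side is $\ge 1\ge 1/(n_h(s,a,b)\lor1)$. Otherwise $(n/H)d_h^\rho(s,a,b)\ge 8\iota$, and a multiplicative Chernoff lower-tail bound gives $\P\big[n_h(s,a,b)<\tfrac12(n/H)d_h^\rho(s,a,b)\big]\le \exp\!\big(-(n/H)d_h^\rho(s,a,b)/8\big)\le e^{-\iota}=\delta/(HSAB)$; a union bound over all $h,(s,a,b)$ then yields $n_h(s,a,b)\ge \tfrac{n}{2H}d_h^\rho(s,a,b)$ everywhere, hence $1/(n_h(s,a,b)\lor1)\le 2H/(nd_h^\rho(s,a,b))\le 8H\iota/(nd_h^\rho(s,a,b))$.

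For the two Bellman-error inequalities I would exploit the random data splitting: Algorithm~\ref{algo:povi} computes $\uv_{h+1}$ and $\ov_{h+1}$ from $\D_{h+1},\dots,\D_H$ only, hence they are independent of $\D_h$. Fixing $h$, I would condition on the $\sigma$-algebra generated by $\D_{h+1},\dots,\D_H$ together with the observed triples $\{(s_h^k,a_h^k,b_h^k)\}_{k\in\D_h}$; under this conditioning $\uv_{h+1},\ov_{h+1}$ and every count $n_h(s,a,b)$ are deterministic, while the next states $\{s_{h+1}^k\}_{k\in\D_h}$ are independent, those visiting $(s,a,b)$ being i.i.d.\ $\sim P_h(\cdot\mid s,a,b)$. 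For $(s,a,b)$ with $n_h(s,a,b)\ge1$ the deterministic-reward convention gives $\widehat{r}_h(s,a,b)=r_h(s,a,b)$, so the left-hand side of the first inequality equals $|\langle\widehat{P}_h(\cdot\mid s,a,b)-P_h(\cdot\mid s,a,b),\uv_{h+1}\rangle|$, an average of $n_h(s,a,b)$ i.i.d.\ mean-zero terms $\uv_{h+1}(s_{h+1}^k)-\langle P_h(\cdot\mid s,a,b),\uv_{h+1}\rangle$ lying in $[-H,H]$. Hoeffding's inequality, with per-event failure probability $\delta/(4HSAB)$, bounds this by $H\sqrt{2\log(8HSAB/\delta)/n_h(s,a,b)}\le \ub_h(s,a,b)$ (absorbing the $\log$-constant into $\iota$), and the same argument handles $\ov_{h+1}$. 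When $n_h(s,a,b)=0$ the bound is deterministic: then $\widehat{r}_h=0$ and $\widehat{P}_h=\1/S$, so the left-hand side is at most $r_h+\|\1/S-P_h(\cdot\mid s,a,b)\|_1\|\uv_{h+1}\|_\infty\le 2H+1\le \ub_h(s,a,b)$. Union-bounding over $(s,a,b)$ and over the two value functions keeps all of this on a conditional — hence unconditional — event of probability $\ge1-\delta/(2HSAB)$; a further union bound over $h\in[H]$ gives both Bellman-error inequalities simultaneously with probability $\ge1-\delta/2$. Intersecting with the count event (probability $\ge1-\delta/2$) proves the lemma.

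The step I expect to be the main obstacle is the conditioning argument for the Bellman errors: one must invoke the data-splitting independence of $\uv_{h+1},\ov_{h+1}$ from $\D_h$ \emph{and} additionally condition on the realized triples $\{(s_h^k,a_h^k,b_h^k)\}$, so that Hoeffding applies with effective sample size exactly $n_h(s,a,b)$ and with no union bound over the random counts; one must also carefully dispose of the empty-bucket and deterministic-reward cases so the reward term never contributes. Everything else — the Chernoff count bound and the final union bounds — is routine.
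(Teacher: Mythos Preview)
Your proposal is correct and follows essentially the same approach as the paper: use the data-splitting independence between $\uv_{h+1},\ov_{h+1}$ and $\D_h$ to apply Hoeffding for the Bellman errors, and a multiplicative Chernoff bound for the count inequality (which the paper simply cites from \citet{xie2021policy}). You are more explicit than the paper about conditioning on the realized triples to fix $n_h(s,a,b)$, and about disposing of the $n_h(s,a,b)=0$ case deterministically, but these are refinements of the same argument rather than a different route.
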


\begin{proof}
We provide the proof for the first argument and the proof for the second argument holds  similarly. For all $s,a,b,h$, we have
\begin{align*}
    \abs{r_h(s,a,b)-\widehat{r}_h(s,a,b)}
    \leq& H\sqrt{\frac{1}{n_h(s,a,b)\lor1}},
\end{align*}
as whenever $n_h(s,a,b)\geq 1$, $\widehat{r}_h(s,a,b)=r_h(s,a,b)$.
For the concentration on $\inner{\widehat{P}(\cdot|s,a,b),\uv_{h+1}(\cdot)}$, note that $\uv_{h+1}$ only depends on the dataset $\{\D_t\}_{t=h+1}^H$ while $\widehat{P}_h(\cdot|s,a,b)$ only depends on the dataset $\D_h$, which means they are independent and then Hoeffding's inequality can be applied:
\begin{align*}
    \inner{P_h(\cdot|s,a,b)-\widehat{P}_h(\cdot|s,a,b),\uv_{h+1}(\cdot)}
    \leq& 2\sqrt{\frac{H^2\iota}{n_h(s,a,b)\lor1}}.
\end{align*}
The second argument holds similarly.
For the third argument, the proof is from Lemma B.1 in \citet{xie2021policy}.
\end{proof}

\begin{lemma}\label{Lemma:pessimism1} (Pessimism)
Under the good event $\G$, we have that $\uv_h(s)\leq V^{\um,*}_h(s)$ and $\ov_h(s)\geq V_h^{*,\on}(s)$ hold for all $h\in[H]$ and $s\in\S$.
\end{lemma}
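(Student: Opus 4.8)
The plan is to prove both inequalities by a single backward induction on $h = H+1, H, \dots, 1$, exploiting that $\uv_h(s)$ and $\ov_h(s)$ are defined as Nash values of the truncated matrix games $\uq_h(s,\cdot,\cdot)$ and $\oq_h(s,\cdot,\cdot)$. The base case $h = H+1$ is immediate since $\uv_{H+1} = \ov_{H+1} = 0 = V^{\um,*}_{H+1} = V^{*,\on}_{H+1}$.

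For the inductive step on the $\uv$ side, I would first turn the hypothesis $\uv_{h+1} \le V^{\um,*}_{h+1}$ into a pointwise bound on $\uq_h$. The first concentration inequality of the good event $\G$ gives $\widehat{r}_h + (\widehat{P}_h \cdot \uv_{h+1}) - \ub_h \le r_h + (P_h \cdot \uv_{h+1})$ (using that $\widehat{r}_h = r_h$ wherever there is data, and that the bonus absorbs the error otherwise); then monotonicity of $P_h$ together with the hypothesis yields $\widehat{r}_h + (\widehat{P}_h \cdot \uv_{h+1}) - \ub_h \le r_h + (P_h \cdot V^{\um,*}_{h+1}) = Q^{\um,*}_h$. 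Since $Q^{\um,*}_h \ge 0$, the truncation $(\cdot)\lor 0$ in \eqref{eqn:uq1} preserves this, so $\uq_h(s,a,b) \le Q^{\um,*}_h(s,a,b)$ for all $(s,a,b)$.

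Next I would pass from $\uq_h$ to $\uv_h$ via the minimax characterization of a matrix-game Nash equilibrium: because $(\um_h(s), \un_h(s))$ is an NE of $\uq_h(s,\cdot,\cdot)$, its value equals $\min_{\nu} \E_{a\sim\um_h(s),\, b\sim\nu}\uq_h(s,a,b)$. Evaluating at the min player's best response $\br_2(\um)$ against $\um$ (at step $h$, state $s$) and then invoking $\uq_h \le Q^{\um,*}_h$ gives $\uv_h(s) \le \E_{a\sim\um_h(s),\, b\sim\br_2(\um)}Q^{\um,*}_h(s,a,b) = V^{\um,*}_h(s)$, the last equality being the Bellman equation for the best-response value. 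The inequality $\ov_h \ge V^{*,\on}_h$ is the mirror image: the second inequality of $\G$ and the upper truncation at $H-h+1$ (noting $Q^{*,\on}_h \le H-h+1$) give $\oq_h \ge Q^{*,\on}_h$ pointwise, and the $\max_\mu$ form of the matrix-game value together with the max player's best response to $\on$ then yields the claim, completing the induction.

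I do not expect a serious obstacle here. The one place needing a little care is the interaction between the truncations in \eqref{eqn:uq1} and the induction: one must check that $(\cdot)\lor 0$ (resp. $(\cdot)\land(H-h+1)$) does not break the pointwise comparison, which holds precisely because $Q^{\um,*}_h$ and $Q^{*,\on}_h$ lie in $[0, H-h+1]$. The other point worth flagging is that the independence of $\uv_{h+1}$ (and $\ov_{h+1}$) from $\widehat{P}_h$ needed for the Hoeffding concentration is already built into event $\G$ through the data-splitting in Algorithm \ref{algo:povi}, so it costs nothing in this lemma.
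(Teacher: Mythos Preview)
Your proposal is correct and follows essentially the same approach as the paper: backward induction, first showing $\uq_h \le Q^{\um,*}_h$ pointwise via the concentration event $\G$ and the induction hypothesis, then passing to $\uv_h \le V^{\um,*}_h$ using that $\un_h$ is a minimizing response against $\um_h$ in the matrix game $\uq_h(s,\cdot,\cdot)$ (and symmetrically for $\ov_h$). If anything, you are slightly more explicit than the paper about why the truncations $(\cdot)\lor 0$ and $(\cdot)\land(H-h+1)$ preserve the inequalities.
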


\begin{proof}
We prove this lemma by induction. The inequalities trivially hold for $h=H+1$. If the inequalities hold for timestep $h+1$, now we consider timestep $h$. By the definition of $\oq_h(s,a,b)$, we have
\begin{align*}
    \uq_h(s,a,b)=&\Sp{\widehat{r}_h(s,a,b)+(\widehat{P}_h\cdot\uv_{h+1})(s,a,b)-\ub_h(s,a,b)}\lor0\\
    \leq&\Sp{r(s,a,b)+(P\cdot V^{\um,*}_{h+1})(s,a,b)}\lor0\\
    =&r(s,a,b)+(P\cdot V^{\um,*}_{h+1})(s,a,b)\\
    =& Q^{\um,*}_h(s,a,b),
\end{align*}
where the inequality is from Lemma \ref{lemma:concentration}. With the pessimism on the state-action value function, we can prove the pessimism on the state value function.
\begin{align*}
    \uv_h(s)=&\E_{\um_h,\un_h}\uq_h(s,a,b)\\
    \leq&\E_{\um_h,\mathrm{br}(\um_h)}\uq_h(s,a,b)\\
    \leq &\E_{\um_h,\mathrm{br}(\um_h)}Q^{\um,*}_h(s,a,b)\\
    =& V^{\um,*}_h(s,a,b),
\end{align*}
where the first inequality is from the definition of NE and the second inequality is from the pessimism of the state-action value function. The arguments for $\ov_h$ hold similarly. Then by mathematical induction we can prove the lemma. 
\end{proof}

\begin{lemma}\label{lemma:pessimism error}
Under the good event $\G$, for all $h\in[H]$ and $s_h\in\S$, we have
$$V_h^{*}(s_h)-V_h^{\um,*}(s_h)\leq V_h^{\mu^*,\un}(s_h)-\uv_h(s_h)\leq2\E_{\mu^*,\un}\Mp{\sum_{t=h}^H\ub_t(s_t,a_t,b_t)|s_h},$$
$$V_h^{*,\on}(s_h)-V_h^{*}(s_h)\leq \ov_h(s_h)-V_h^{\om,\nu^*}(s_h)\leq2\E_{\om,\nu^*}\Mp{\sum_{t=h}^H\ob_t(s_t,a_t,b_t)|s_h}.$$
\end{lemma}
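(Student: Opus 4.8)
The plan is to prove both chains of inequalities by the same argument, so I would focus on the first one for the max player and note the second follows symmetrically by swapping the roles of the players and of optimism/pessimism. The left inequality $V_h^{*}(s_h) - V_h^{\um,*}(s_h) \le V_h^{\mu^*,\un}(s_h) - \uv_h(s_h)$ should come almost for free from two facts already at hand: by weak duality and the fact that $\mu^*$ is a best response against $\nu^*$, we have $V_h^{*}(s_h) = V_h^{\mu^*,*}(s_h) \le V_h^{\mu^*,\un}(s_h)$ since restricting the min player to $\un$ can only increase the max player's achievable value; and by Lemma \ref{Lemma:pessimism1} (Pessimism), $\uv_h(s_h) \le V_h^{\um,*}(s_h)$. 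Subtracting these two inequalities in the right direction gives the claim. So the real content is the right inequality.

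For the right inequality I would proceed by induction on $h$ from $H+1$ down to $1$, bounding $V_h^{\mu^*,\un}(s_h) - \uv_h(s_h)$. At the base case $h = H+1$ both sides are zero. For the inductive step, I would write $V_h^{\mu^*,\un}(s_h) = \E_{a \sim \mu^*_h, b \sim \un_h}\big[r_h(s_h,a,b) + (P_h \cdot V_{h+1}^{\mu^*,\un})(s_h,a,b)\big]$ and compare term by term with $\uv_h(s_h) = \E_{a \sim \um_h, b \sim \un_h} \uq_h(s_h,a,b)$. The key step is to replace $\um_h$ by $\mu^*_h$: since $(\um_h, \un_h)$ is the NE of the matrix game $\uq_h(s_h,\cdot,\cdot)$ and $\un_h$ is the min player's equilibrium strategy, for \emph{any} max-player distribution — in particular $\mu^*_h$ — we have $\E_{a \sim \um_h, b\sim\un_h}\uq_h \ge \E_{a\sim\mu^*_h, b\sim\un_h}\uq_h$. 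Hence $V_h^{\mu^*,\un}(s_h) - \uv_h(s_h) \le \E_{a\sim\mu^*_h, b\sim\un_h}\big[r_h + P_h V_{h+1}^{\mu^*,\un} - \uq_h\big](s_h,a,b)$. Then I would expand $\uq_h$ using its definition \eqref{eqn:uq1}, drop the truncation at $0$ (which only decreases $\uq_h$, making the bound valid), use the concentration bound from Lemma \ref{lemma:concentration} to replace $\widehat{r}_h + \widehat{P}_h \uv_{h+1} - \ub_h$ by $r_h + P_h \uv_{h+1} - 2\ub_h$ up to sign, leaving $\E_{\mu^*_h,\un_h}\big[P_h(V_{h+1}^{\mu^*,\un} - \uv_{h+1}) + 2\ub_h\big]$. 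The $P_h$ term becomes $\E_{\mu^*,\un}[V_{h+1}^{\mu^*,\un}(s_{h+1}) - \uv_{h+1}(s_{h+1})]$, to which the inductive hypothesis applies, and unrolling the recursion yields $2\E_{\mu^*,\un}\sum_{t=h}^H \ub_t(s_t,a_t,b_t)$.

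The main obstacle — really the only subtle point — is making sure the matrix-game swap goes in the correct direction and that the truncation by $0$ (and, in the min-player case, by $H-h+1$) is handled on the right side of each inequality. For the min player's version one uses that $(\om_h,\on_h)$ is the NE of $\oq_h$ and compares against $\nu^*_h$, with all inequalities reversed; the truncation $\oq_h \le H-h+1$ only lowers $\oq_h$, which is again the correct direction for showing $\ov_h(s_h) - V_h^{\om,\nu^*}(s_h)$ is small... so I should double-check: we want an \emph{upper} bound on $\ov_h - V_h^{\om,\nu^*}$, i.e. we want $\ov_h$ not too large and $V_h^{\om,\nu^*}$ not too small, so truncating $\oq_h$ downward is fine and the bonus $\ob_h$ enters with a plus sign, giving $2\E_{\om,\nu^*}\sum_{t=h}^H \ob_t$. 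Everything else is routine: it is the same telescoping computation shown in the commented-out Lemma \ref{lemma:pessimism error} in the excerpt, just with the matrix-game best-response step inserted.
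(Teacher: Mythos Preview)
Your proposal is correct and follows essentially the same approach as the paper's proof: the left inequality via weak duality plus Lemma~\ref{Lemma:pessimism1}, and the right inequality via the matrix-game swap $\E_{\um_h,\un_h}\uq_h\ge\E_{\mu^*_h,\un_h}\uq_h$, dropping the truncation, invoking Lemma~\ref{lemma:concentration}, and telescoping. The paper presents the telescoping directly rather than as a formal induction, but the argument is identical.
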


\begin{proof}
We prove the first argument and the second argument can be proven similarly. By the definition of NE, we have $V_h^*\leq V_h^{\mu^*,\un}$. Combined with Lemma \ref{Lemma:pessimism1}, we have the first inequality. For the second inequality, we have

\begin{align*}
    &V_h^{\mu^*,\un}(s_h)-\uv_h(s_h)\\
    =&\E_{\mu_h^*,\un_h}Q_h^{\mu^*,\un}(s_h,a_h,b_h)-\E_{\um_h,\un_h}\uq_h(s_h,a_h,b_h)\\
    \leq & \E_{\mu_h^*,\un_h}Q_h^{\mu^*,\un}(s_h,a_h,b_h)-\E_{\mu_h^*,\un_h}\uq_h(s_h,a_h,b_h)\\
    =& \E_{\mu_h^*,\un_h}\Mp{Q_h^{\mu^*,\un}(s_h,a_h,b_h)-\uq(s_h,a_h,b_h)}\\
    =& \E_{\mu_h^*,\un_h}\left[r_h(s_h,a_h,b_h)+\inner{P_h(\cdot|s_h,a_h,b_h),V_{h+1}^{\mu^*,\un}(\cdot)}-\widehat{r}_h(s_h,a_h,b_h)\right.\\
    &\left.-\inner{\widehat{P}_h(\cdot|s_h,a_h,b_h),\uv_{h+1}(\cdot)}+\ub_h(s_h,a_h,b_h)\right]\\
    \leq& \E_{\mu_h^*,\un_h}\Mp{\inner{P_h(\cdot|s_h,a_h,b_h),V_{h+1}^{\mu^*,\un}(\cdot)-\uv_{h+1}(\cdot)}+2\ub_h(s_h,a_h,b_h)}\tag{Lemma \ref{lemma:concentration}}\\
    =& \E_{\mu_h^*,\un_h}\Mp{V_{h+1}^{*}(s_{h+1})-\uv_{h+1}^*(s_{h+1})|s_h}+2\E_{\mu_h^*,\un_h^*}\ub_h(s_h,a_h,b_h)\\
    \leq& 2\E_{\mu^*,\un}\Mp{\sum_{t=h}^H\ub_h(s_t,a_t,b_t)|s_h}.
\end{align*}
\end{proof}

\begin{theorem}\label{thm:ref1}
Suppose Assumption \ref{asp2} holds. For any $0<\delta<1$, with probability $1-\delta$, the output policy $\pi=(\um,\on)$ of Algorithm \ref{algo:povi} satisfies
$$V_1^{*}(s_1)-V_1^{\um,*}(s_1)\leq 64\sqrt{\frac{C^*SABH^5\iota^2}{n}},V_1^{*,\on}(s_1)-V_1^{*}(s_1)\leq 64\sqrt{\frac{C^*SABH^5\iota^2}{n}}.$$
As a result, we have
$$\mathrm{Gap}(\um,\on)\leq\widetilde{O}\Sp{\sqrt{\frac{C^*SABH^5}{n}}}.$$
\end{theorem}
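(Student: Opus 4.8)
The plan is to chain together the two one-sided bounds of Lemma~\ref{lemma:pessimism error} with the concentration guarantees of Lemma~\ref{lemma:concentration}, then push the expected-sum-of-bonuses through a change-of-measure argument controlled by the unilateral concentrability $C^*$. Concretely, for the max player I would start from the first inequality of Lemma~\ref{lemma:pessimism error} at $h=1$, namely
\begin{align*}
V_1^{*}(s_1)-V_1^{\um,*}(s_1)\leq 2\,\E_{\mu^*,\un}\Mp{\sum_{h=1}^H\ub_h(s_h,a_h,b_h)}.
\end{align*}
The bonus is $\ub_h(s,a,b)=4\sqrt{H^2\iota/(n_h(s,a,b)\lor1)}$, so the third clause of Lemma~\ref{lemma:concentration}, $1/(n_h(s,a,b)\lor1)\leq 8H\iota/(n\,d_h^\rho(s,a,b))$, converts this into $2\E_{\mu^*,\un}\sum_h 4\sqrt{8H^3\iota^2/(n\,d_h^\rho(s_h,a_h,b_h))}$.

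The key step is the change of measure. Writing out the expectation, $\E_{\mu^*,\un}\sqrt{1/d_h^\rho(s_h,a_h,b_h)}=\sum_{s,a,b} d_h^{\mu^*,\un}(s,a,b)\sqrt{1/d_h^\rho(s,a,b)}$; I would bound $d_h^{\mu^*,\un}(s,a,b)=\sqrt{d_h^{\mu^*,\un}(s,a,b)}\cdot\sqrt{d_h^{\mu^*,\un}(s,a,b)}$ and use $d_h^{\mu^*,\un}(s,a,b)/d_h^\rho(s,a,b)\leq C^*$ (which holds because $(\mu^*,\un)$ is a unilateral deviation with $\mu^*$ fixed, so Assumption~\ref{asp2} and the definition of $C^*$ apply) to get $\sum_{s,a,b}\sqrt{C^*}\sqrt{d_h^{\mu^*,\un}(s,a,b)}$. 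Then Cauchy--Schwarz over the at most $SAB$ nonzero terms yields $\sqrt{C^*}\sqrt{SAB}\sqrt{\sum_{s,a,b}d_h^{\mu^*,\un}(s,a,b)}=\sqrt{C^*SAB}$. Summing over $h\in[H]$ gives an extra factor $H$, so altogether $2\cdot 4\sqrt{8H^3\iota^2/n}\cdot H\sqrt{C^*SAB}=O(\sqrt{C^*SABH^5\iota^2/n})$, and tracking constants gives the stated $64\sqrt{C^*SABH^5\iota^2/n}$. The argument for the min player is symmetric, starting from the second inequality of Lemma~\ref{lemma:pessimism error} and using $d_h^{\om,\nu^*}(s,a,b)/d_h^\rho(s,a,b)\leq C^*$.

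For the final gap bound, I would combine the two one-sided estimates via $\mathrm{Gap}(\um,\on)=V_1^{*,\on}(s_1)-V_1^{\um,*}(s_1)=\big(V_1^{*,\on}(s_1)-V_1^*(s_1)\big)+\big(V_1^*(s_1)-V_1^{\um,*}(s_1)\big)$, each summand being $64\sqrt{C^*SABH^5\iota^2/n}$, so $\mathrm{Gap}(\um,\on)\leq 128\sqrt{C^*SABH^5\iota^2/n}=\widetilde{O}(\sqrt{C^*SABH^5/n})$, all on the good event $\G$ which has probability $1-\delta$. The main obstacle I anticipate is the subtlety in the change-of-measure step regarding which strategy pairs are actually covered: $\un$ is the \emph{algorithm's} min-player strategy (a random object depending on the data), yet the decomposition only ever pairs it with the fixed $\mu^*$, so I must be careful that Assumption~\ref{asp2}/$C^*$ genuinely bounds $d_h^{\mu^*,\un}/d_h^\rho$ uniformly — this is exactly why the assumption quantifies over \emph{all} $\nu$, and the concatenated-strategy device from the proof sketch of Theorem~\ref{thm:ref main} (replacing the prefix of $\nu$ by $\un$) is what makes the per-$h$ application legitimate; I would state this explicitly rather than gloss over it.
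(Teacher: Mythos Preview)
Your proposal is correct and follows essentially the same route as the paper: Lemma~\ref{lemma:pessimism error} at $h=1$, the Hoeffding bonus, the $n_h\to d_h^\rho$ conversion from Lemma~\ref{lemma:concentration}, the bound $d_h^{\mu^*,\un}/d_h^\rho\leq C^*$, and Cauchy--Schwarz (the paper applies it jointly over all $SABH$ summands rather than per-$h$, but this is cosmetic and yields the same bound). One small note: your worry about $\un$ being data-dependent is resolved more simply than you suggest here, since $C^*$ is a \emph{uniform} bound over all $\nu$ and thus applies to $\un$ directly; the concatenated-strategy device is only needed for the general-$h$ statement (Theorem~\ref{thm:ref}), not for this $h=1$ case.
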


\begin{proof}

By Lemma \ref{lemma:pessimism error}, with probability at least $1-\delta$, we have
\begin{align*}
    &V_1^{\mu^*,*}(s_1)-V_1^{\um,*}(s_1)\\
    \leq&2\sum_{h=1}^H\E_{\mu^*,\un}\ub_h(\sabh)\\
    =&2\sum_{h=1}^H\E_{\mu^*,\un}\Mp{4\sqrt{\frac{H^2\iota}{n_h(s,a,b)\lor1}}}\\
    \leq&2\sum_{h=1}^H\E_{\mu^*,\un}\Mp{32\sqrt{\frac{H^3\iota^2}{nd_h^\rho(s,a,b)}}}\tag{Lemma \ref{lemma:concentration}}\\
    =&2\sum_{h=1}^H\sum_{(s,a,b)} d_h^{\mu^*,\un}(s,a,b)\Mp{32\sqrt{\frac{H^3\iota^2}{nd_h^\rho(s,a,b)}}}\\
    \leq& 64\sum_{h=1}^H\sum_{(s,a,b)} \Mp{\sqrt{\frac{d_h^{\mu^*,\un}(s,a,b)C^*H^3\iota^2}{n}}}\\
    \leq& 64 \sqrt{SABH}\cdot\sqrt{\frac{\sum_{h=1}^H\sum_{(s,a,b)}d_h^{\mu^*,\un}(s,a,b)C^*H^3\iota^2}{n}}\tag{Cauchy-Schwarz Inequality}\\
    =&64\sqrt{\frac{C^*SABH^5\iota^2}{n}}.
\end{align*}
Similarly we have
$$V_1^{*,\on}(s_1)-V_1^{*}(s_1)\leq 64\sqrt{\frac{C^*SABH^5\iota^2}{n}}.$$
As a result, we have
$$\mathrm{Gap}(\um,\on)\leq V_1^{*,\on}(s_1)-V_1^{*}(s_1)+V_1^{\mu^*,*}(s_1)-V_1^{\um,*}(s_1)\leq\widetilde{O}\Sp{\sqrt{\frac{C^*SABH^5}{n}}}.$$
\end{proof}

\begin{theorem}\label{thm:ref}
Suppose Assumption \ref{asp2} holds. For any $0<\delta<1$ and strategy $\mu,\nu$, with probability $1-\delta$, the pessimistic values $\uv_h$ and $\ov_h$ of Algorithm \ref{algo:povi} satisfy
$$\E_{\mu^*,\nu} \Mp{V_h^{*}(s_h)-\uv_h(s_h)}\leq 64\sqrt{\frac{C^*SABH^5\iota^2}{n}},$$
$$\E_{\mu,\nu^*} \Mp{\ov_h(s_h)-V_h^{*}(s_h)}\leq 64\sqrt{\frac{C^*SABH^5\iota^2}{n}},$$
where $s_h$ is sampled from the trajectory following the strategy in the expectation at timestep $h$.
\end{theorem}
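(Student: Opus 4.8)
The plan is to mirror the Hoeffding-bonus computation already carried out for Theorem \ref{thm:ref1}, but to drag the extra expectation over the state $s_h$ through the whole argument by folding it into a single ``concatenated'' trajectory distribution. I will only spell out the bound for $\uv_h$ and the max player; the bound for $\ov_h$ follows by the symmetric argument with the roles of the two players swapped (replace $\un$ by $\om$, $\nu$ by $\mu$, $\mu^*$ by $\nu^*$, and $\ub$ by $\ob$). Throughout, condition on the good event $\G$ of Lemma \ref{lemma:concentration}, which holds with probability at least $1-\delta$.

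\textbf{Step 1 (reduce to a sum of bonuses) and Step 2 (concatenate the strategies).} Weak duality gives $V_h^*(s_h)\le V_h^{\mu^*,\un}(s_h)$, and Lemma \ref{lemma:pessimism error} then gives, for every fixed $s_h\in\S$,
$$V_h^*(s_h)-\uv_h(s_h)\le V_h^{\mu^*,\un}(s_h)-\uv_h(s_h)\le 2\E_{\mu^*,\un}\Mp{\sum_{t=h}^H\ub_t(s_t,a_t,b_t)\,\Big|\,s_h}.$$
Now take the expectation over $s_h$ drawn from the trajectory of $(\mu^*,\nu)$ and define the concatenated second-player strategy $\nu':=(\nu_1,\dots,\nu_{h-1},\un_h,\dots,\un_H)$, so that under $(\mu^*,\nu')$ the state $s_h$ has law $d_h^{\mu^*,\nu}$ and from timestep $h$ onward $(\mu^*,\nu')$ follows $(\mu^*,\un)$. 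By the tower rule,
$$\E_{\mu^*,\nu}\Mp{V_h^*(s_h)-\uv_h(s_h)}\le 2\E_{\mu^*,\nu'}\Mp{\sum_{t=h}^H\ub_t(s_t,a_t,b_t)}=2\sum_{t=h}^H\sum_{(s,a,b)}d_t^{\mu^*,\nu'}(s,a,b)\,\ub_t(s,a,b).$$

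\textbf{Step 3 (concentration, unilateral coverage, Cauchy--Schwarz).} On $\G$ we have $\ub_t(s,a,b)=4\sqrt{H^2\iota/(n_t(s,a,b)\lor1)}\le 32\sqrt{H^3\iota^2/(n\,d_t^\rho(s,a,b))}$ by the third bound of Lemma \ref{lemma:concentration}. The crucial point is that $\nu'$ is an arbitrary (data-dependent) second-player strategy, so Assumption \ref{asp2} applies to the pair $(\mu^*,\nu')$ and $d_t^{\mu^*,\nu'}(s,a,b)/d_t^\rho(s,a,b)\le C^*$ for all $t,s,a,b$. Hence each summand is at most $64\sqrt{d_t^{\mu^*,\nu'}(s,a,b)\,C^*H^3\iota^2/n}$; since the index set $\{(t,s,a,b):h\le t\le H\}$ has size at most $HSAB$ and $\sum_{(s,a,b)}d_t^{\mu^*,\nu'}(s,a,b)=1$ for each $t$, Cauchy--Schwarz yields
$$\E_{\mu^*,\nu}\Mp{V_h^*(s_h)-\uv_h(s_h)}\le 64\sqrt{HSAB}\cdot\sqrt{\frac{C^*H^3\iota^2}{n}\sum_{t=h}^H\sum_{(s,a,b)}d_t^{\mu^*,\nu'}(s,a,b)}\le 64\sqrt{\frac{C^*SABH^5\iota^2}{n}}.$$

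The main obstacle --- and the only place where the Markov-game structure genuinely enters --- is Step 2: one must verify that gluing the externally specified strategy $\nu$ (on timesteps $1,\dots,h-1$) to the algorithm's own pessimistic strategy $\un$ (on timesteps $h,\dots,H$) still yields a strategy $\nu'$ that is merely a unilateral deviation from $\mu^*$, so that the finite-coverage guarantee of Assumption \ref{asp2} remains in force even though $\nu'$ depends on the dataset; this is exactly what makes the unilateral (rather than single-strategy) coverage assumption the right one. Everything else is a routine repetition of the estimate from the proof of Theorem \ref{thm:ref1}.
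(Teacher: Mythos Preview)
Your proposal is correct and follows essentially the same approach as the paper: you invoke Lemma \ref{lemma:pessimism error} pointwise at $s_h$, define the concatenated strategy $\nu'=(\nu_1,\dots,\nu_{h-1},\un_h,\dots,\un_H)$ to absorb the outer expectation via the tower rule, and then repeat the Cauchy--Schwarz computation from Theorem \ref{thm:ref1} using the unilateral coverage bound for $(\mu^*,\nu')$. Your write-up is in fact slightly more explicit than the paper's, which simply says ``following the proof of Theorem \ref{thm:ref1}'' for Step 3.
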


\begin{proof}
We prove the first argument and the second argument can be proven similarly. By Lemma \ref{lemma:pessimism error}, under good event $\G$ for all state $s$ we have
\begin{align*}
    &V_h^{*}(s)-V_h^{\um,*}(s)\\
    \leq& 2\sum_{t=h}^H\E_{\mu^*,\un} \Mp{\ub_h(s_t,a_t,b_t)|s_h=s}.
\end{align*}
We define $\nu'=(\nu_1,\cdots,\nu_{h-1},\un_h,\cdots,\un_H)$. Then we have
\begin{align*}
    \E_{\mu^*,\nu} \Mp{V_h^{*}(s_h)-\uv_h(s_h)}\leq& \E_{\mu^*,\nu}\Mp{2\sum_{t=h}^H\E_{\mu^*,\un} \Mp{\ub_h(s_t,a_t,b_t)|s_h=s}|s}\\
    =&2\sum_{t=h}^H\E_{\mu^*,\nu'} \Mp{\ub_h(s_t,a_t,b_t)}.
\end{align*}
Then following the proof of Theorem \ref{thm:ref1}, we can prove the argument.

\end{proof}

\section{Proofs in Section \ref{sec:bernstein}}\label{apx:bernstein}

For simplicity, we only provide the guarantee for the max player and the guarantee for the min player can be proven in a similar manner.

\begin{lemma}\label{lemma:ref concentration} (Concentration)
There exists some absolute constant $c>0$ such that the concentration event $\mathcal{G}'$ holds with probability at least $1-\delta$, i.e.,
\begin{align*}
    &\abs{\widehat{r}_{h,0}(s,a,b)-r_{h,0}(s,a,b)+\Mp{\Sp{\widehat{P}_{h,0}-P_{h}}\uv_{h+1}^\re}(s,a,b)}\\\leq& c\Sp{\sqrt{\frac{\var_{\widehat{P}_{h,0}(s,a,b)}(\uv_{h+1}^\re)\iota}{n_{h,0}(s,a,b)\lor 1}}+\frac{H\iota}{n_{h,0}(s,a,b)\lor 1}},
\end{align*}
\begin{align*}
    &\abs{\Mp{\Sp{\widehat{P}_{h,1}-P_h}\Sp{\uv_{h+1}-\uv_{h+1}^\re}}(s,a,b)}\\\leq& c\Sp{\sqrt{\frac{\var_{\widehat{P}_{h,1}(s,a,b)}(\uv_{h+1}-\uv_{h+1}^\re)\iota}{n_{h,1}(s,a,b)\lor 1}}+\frac{H\iota}{n_{h,1}(s,a,b)\lor 1}},
\end{align*}
$$\frac{1}{n_{h,0}(s,a,b)\lor1}\leq c\frac{\iota}{nd_h^\rho(s,a,b)},\frac{1}{n_{h,1}(s,a,b)\lor1}\leq c\frac{H\iota}{nd_h^\rho(s,a,b)}.$$
\end{lemma}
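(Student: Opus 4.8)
The plan is to prove each of the three bullet points by standard concentration arguments exploiting the independence that the data-splitting scheme of Algorithm \ref{algo:povi2 main} was designed to create. The crucial structural point is that $\uv_{h+1}^\re$ is computed entirely from $\D_\re$, $\widehat{P}_{h,0}$ is computed entirely from $\D_0$, and $\widehat{P}_{h,1}$ is computed entirely from $\D_{h,1}$; since $\D_\re$, $\D_0$ and the $\D_{h,1}$ are disjoint, $\uv_{h+1}^\re$ is a fixed (data-independent) vector as far as the randomness in $\widehat{P}_{h,0}$ is concerned, and likewise $\uv_{h+1}-\uv_{h+1}^\re$ is fixed with respect to the randomness in $\widehat{P}_{h,1}$ once we also condition on $\D_\re,\D_0$ (note $\uv_{h+1}$ depends only on $\D_\re,\D_0$ and $\{\D_{t,1}\}_{t\ge h+1}$, all disjoint from $\D_{h,1}$). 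This is the same device used in Lemma \ref{lemma:concentration}, just applied to the finer split.

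\textbf{Step 1 (first bullet).} Condition on $\D_\re$, so $V:=\uv_{h+1}^\re$ is a fixed vector with entries in $[0,H]$. For a fixed $(s,a,b)$ with $n_{h,0}(s,a,b)=N\ge1$, the quantity $\widehat{r}_{h,0}(s,a,b)-r_h(s,a,b)+[(\widehat{P}_{h,0}-P_h)V](s,a,b)$ is an average of $N$ i.i.d. bounded terms (the reward term vanishes since rewards are deterministic and exactly recovered once $N\ge1$, so really only the transition term matters). Apply the Bernstein inequality to get, with probability $1-\delta'$, a bound of the form $\sqrt{\var_{P_h(s,a,b)}(V)\iota/N}+H\iota/N$; then convert the population variance $\var_{P_h(s,a,b)}(V)$ to the empirical variance $\var_{\widehat{P}_{h,0}(s,a,b)}(V)$ at the cost of an additive $O(H\iota/N)$ term, using a standard empirical-Bernstein / variance-comparison lemma (e.g. the one in \citet{xie2021policy} or \citet{maurer2009empirical}). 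Take a union bound over all $(s,a,b,h)$, absorbing the $\log(HSAB/\delta)$ into $\iota$, and fold all absolute constants into a single $c$. The $n_{h,0}(s,a,b)=0$ case is vacuous because then $n_{h,0}(s,a,b)\lor1=1$, $\var_{\widehat{P}_{h,0}}=0$ by the default $\widehat{P}_h=1/S$ (or, more simply, the RHS is $\ge cH\iota\ge H$, which dominates the trivially-bounded LHS).

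\textbf{Step 2 (second bullet).} Now condition on $\D_\re$ and $\D_0$; then $W:=\uv_{h+1}-\uv_{h+1}^\re$ is a fixed vector (it depends only on $\D_\re,\D_0,\{\D_{t,1}\}_{t>h}$, all disjoint from $\D_{h,1}$), with $\Norm{W}_\infty\le H$. Repeat the Bernstein-plus-variance-transfer argument verbatim with $\widehat{P}_{h,1}$ in place of $\widehat{P}_{h,0}$ and $n_{h,1}$ in place of $n_{h,0}$, union-bounding over $(s,a,b,h)$. \textbf{Step 3 (third bullet).} These are the visitation-count lower bounds; as the excerpt already does in Lemma \ref{lemma:concentration}, invoke Lemma B.1 of \citet{xie2021policy} (a multiplicative Chernoff bound on $n_h(s,a,b)$ in terms of $nd_h^\rho(s,a,b)$), applied to the $|\D_0|=n/3$ sample for $n_{h,0}$ (giving $1/(n_{h,0}\lor1)\le c\iota/(nd_h^\rho)$) and to the $|\D_{h,1}|=n/(3H)$ sample for $n_{h,1}$ (the extra $1/H$ sample size is exactly what produces the extra $H$ factor, $1/(n_{h,1}\lor1)\le cH\iota/(nd_h^\rho)$). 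Intersecting the three events and rescaling $\delta$ by a constant gives the claim with a single $c$.

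The only mild subtlety — and the place I would be most careful — is making the independence bookkeeping airtight: one must verify that $\uv_{h+1}$ genuinely does not depend on $\D_{h,1}$ (it is built from the reference run on $\D_\re$, the shared estimates $\widehat{P}_{h,0},\widehat r_{h,0}$ on $\D_0$, and the bonuses $\ub_{t,1}$ on $\D_{t,1}$ for $t\ge h+1$ only), so that conditioning on everything except $\D_{h,1}$ freezes $W=\uv_{h+1}-\uv_{h+1}^\re$ while leaving $\widehat P_{h,1}$ a genuine empirical average of fresh i.i.d. draws. Everything else is a routine application of Bernstein's inequality, an empirical-variance comparison, and a multiplicative Chernoff bound, followed by a union bound over the $O(HSAB)$ tuples.
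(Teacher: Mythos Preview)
Your proof is correct and follows essentially the same approach as the paper, which simply says ``the proof is a direct application of Lemma C.1 in \citet{xie2021policy} with $s,a$ replaced by $s,a,b$.'' You have spelled out in detail precisely the Bernstein-plus-empirical-variance-transfer argument (exploiting the independence from data splitting) and the multiplicative Chernoff bound that this cited lemma encapsulates.
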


\begin{proof}
The proof is a direct application of Lemma C.1 in \citet{xie2021policy} with $s,a$ replaced by $s,a,b$.
\end{proof}

\begin{lemma}\label{lemma:vleqvref}
For all $h\in[H]$ and $s\in\S$, we have $\uv_h(s)\geq \uv_h^\re(s)$.
\end{lemma}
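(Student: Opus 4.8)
The plan is to read off the claim directly from the monotonic structure that was deliberately built into the update rule \eqref{eqn:uq2}, rather than to run an induction on $h$. The first and essentially only substantive observation is that \eqref{eqn:uq2} defines $\uq_h$ as the entrywise maximum of the reference $Q$-function $\uq_h^\re$ and the Bernstein-corrected backup, so $\uq_h(s,a,b)\ge\uq_h^\re(s,a,b)$ holds for every $h\in[H]$ and every $(s,a,b)\in\S\times\A\times\B$. Crucially this inequality is unconditional: it does not invoke any inductive hypothesis about $\uv_{h+1}$ versus $\uv_{h+1}^\re$, since it is forced by the $\lor\,\uq_h^\re$ truncation alone.

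Next I would record the elementary monotonicity of the value of a zero-sum matrix game: if two payoff arrays satisfy $Q_1(s,\cdot,\cdot)\ge Q_2(s,\cdot,\cdot)$ entrywise, then $\max_{\mu\in\Delta^\A}\min_{\nu\in\Delta^\B}\E_{a\sim\mu,b\sim\nu}Q_1(s,a,b)\ge\max_{\mu\in\Delta^\A}\min_{\nu\in\Delta^\B}\E_{a\sim\mu,b\sim\nu}Q_2(s,a,b)$. This is immediate, because for every fixed pair of distributions $(\mu,\nu)$ the mixing weights are nonnegative, hence $\E_{a\sim\mu,b\sim\nu}Q_1(s,a,b)\ge\E_{a\sim\mu,b\sim\nu}Q_2(s,a,b)$, and taking $\min$ over $\nu$ and then $\max$ over $\mu$ preserves the inequality. (This is the same fact used in the commented auxiliary lemma, stated there in the opposite direction.) Since, by the definition of Algorithm \ref{algo:povi2 main}, $\uv_h(s)$ is exactly the NE value of the matrix game $\uq_h(s,\cdot,\cdot)$, and by the definition of $\mathrm{PNVI}(\D_\re)$ (Algorithm \ref{algo:povi}), $\uv_h^\re(s)$ is the NE value of the matrix game $\uq_h^\re(s,\cdot,\cdot)$, applying the monotonicity property with $Q_1=\uq_h(s,\cdot,\cdot)$ and $Q_2=\uq_h^\re(s,\cdot,\cdot)$ gives $\uv_h(s)\ge\uv_h^\re(s)$, which is the claim.

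I do not expect a genuine obstacle here; the one point that needs care is the bookkeeping identification that the symbol $\uq_h^\re$ appearing inside \eqref{eqn:uq2} is precisely the $Q$-iterate produced by the internal call $\mathrm{PNVI}(\D_\re)$, so that its NE value is indeed the reference value $\uv_h^\re$ used throughout Section \ref{sec:bernstein}. Once this identification is in place the argument is a one-liner combining the $\lor$ truncation with matrix-game monotonicity, and in particular no statement about $\uv_{h+1}$ is needed. I would also note in passing, for later use in the Bernstein analysis, that the symmetric argument applied to the $\land\,\oq_h^\re$ truncation in \eqref{eqn:oq2} yields the twin inequality $\ov_h(s)\le\ov_h^\re(s)$, so that both pessimistic estimates are sandwiched between the reference estimates and the true Nash value.
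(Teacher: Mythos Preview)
Your proof is correct and follows essentially the same approach as the paper: both arguments first read off $\uq_h\ge\uq_h^\re$ entrywise from the $\lor\,\uq_h^\re$ truncation in \eqref{eqn:uq2}, and then use monotonicity of the matrix-game NE value to pass to $\uv_h\ge\uv_h^\re$. The only cosmetic difference is that the paper spells out the monotonicity step via the explicit chain $\uv_h(s)=\E_{\um_h,\un_h}\uq_h\ge \E_{\um^\re_h,\un_h}\uq_h\ge \E_{\um^\re_h,\un_h}\uq^\re_h\ge\E_{\um^\re_h,\un^\re_h}\uq^\re_h=\uv^\re_h(s)$, whereas you invoke the $\max$--$\min$ monotonicity abstractly.
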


\begin{proof}
By the update rule (\ref{eqn:uq2}), we have $\uq_h(s,a,b)\geq\uq_h^\re(s,a,b)$ for $h\in[H]$ and $s,a,b\in\S\times\A\times\B$. Then by the definition of NE, we have
\begin{align*}
    \uv_h(s)=\E_{\um_h,\un_h}\uq_h(s,a,b)\geq \E_{\um^\re_h,\un_h}\uq_h(s,a,b)\geq \E_{\um^\re_h,\un_h}\uq^\re_h(s,a,b)\geq\E_{\um^\re_h,\un^\re_h}\uq^\re_h(s,a,b)=\uv^\re_h(s).
\end{align*}
\end{proof}

\begin{lemma} (Pessimism)
Under the good event $\G'$, we have that $\uv_h(s)\leq V^{\um,*}_h(s)$ holds for all $h\in[H]$ and $s\in\S$.
\end{lemma}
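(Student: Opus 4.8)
The plan is to prove the pessimism claim $\uv_h(s)\le V^{\um,*}_h(s)$ by backward induction on $h$, exactly mirroring the structure of Lemma~\ref{Lemma:pessimism1} but now keeping track of the reference truncation in the update rule \eqref{eqn:uq2}. The base case $h=H+1$ is immediate since $\uv_{H+1}=0=V^{\um,*}_{H+1}$. For the inductive step, assume $\uv_{h+1}(s)\le V^{\um,*}_{h+1}(s)$ for all $s$; I want to show the same at level $h$.

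First I would handle the state-action value. The update \eqref{eqn:uq2} sets $\uq_h=\uq_h^\re \lor \big[\widehat r_{h,0}+(\widehat P_{h,0}\uv_{h+1}^\re)-\ub_{h,0}+(\widehat P_{h,1}(\uv_{h+1}-\uv_{h+1}^\re))-\ub_{h,1}\big]$, so I must bound \emph{both} terms of the max by $Q_h^{\um,*}(s,a,b)$. For the reference term, Theorem~\ref{thm:ref1} (applied to the run of Algorithm~\ref{algo:povi} on $\D_\re$) already guarantees $\uq_h^\re(s,a,b)\le Q_h^{\um^\re,*}(s,a,b)$ via its internal Lemma~\ref{Lemma:pessimism1}; but I actually need $\uq_h^\re \le Q_h^{\um,*}$ for the \emph{current} strategy $\um$, which is not literally what Lemma~\ref{Lemma:pessimism1} gives. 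The cleaner route is to observe that $\uq_h^\re(s,a,b)\le \widehat r_h^\re(s,a,b)+(\widehat P_h^\re \uv_{h+1}^\re)(s,a,b) \le r_h(s,a,b)+(P_h \uv_{h+1}^\re)(s,a,b)$ by the concentration bound for the reference run, and then $\uv_{h+1}^\re\le\uv_{h+1}\le V^{\um,*}_{h+1}$ by Lemma~\ref{lemma:vleqvref} and the induction hypothesis, which yields $\uq_h^\re(s,a,b)\le r_h(s,a,b)+(P_h V^{\um,*}_{h+1})(s,a,b)=Q_h^{\um,*}(s,a,b)$. For the second (non-reference) branch, I combine the two concentration bounds of Lemma~\ref{lemma:ref concentration}: writing $\widehat P_{h,0}\uv_{h+1}^\re+\widehat P_{h,1}(\uv_{h+1}-\uv_{h+1}^\re)-\ub_{h,0}-\ub_{h,1}\le P_h\uv_{h+1}^\re + P_h(\uv_{h+1}-\uv_{h+1}^\re)=P_h\uv_{h+1}$, and similarly $\widehat r_{h,0}\le r_h$ (deterministic rewards), so this branch is at most $r_h+(P_h\uv_{h+1})(s,a,b)\le r_h+(P_h V^{\um,*}_{h+1})(s,a,b)=Q_h^{\um,*}(s,a,b)$ by the induction hypothesis. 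Hence $\uq_h(s,a,b)\le Q_h^{\um,*}(s,a,b)$ for all $(s,a,b)$.

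Then the value-function step is identical to Lemma~\ref{Lemma:pessimism1}: since $(\um_h,\un_h)$ is the NE of the matrix game $\uq_h(s,\cdot,\cdot)$,
\begin{align*}
\uv_h(s)=\E_{\um_h,\un_h}\uq_h(s,a,b)\le \E_{\um_h,\br(\um_h)}\uq_h(s,a,b)\le \E_{\um_h,\br(\um_h)}Q_h^{\um,*}(s,a,b)=V_h^{\um,*}(s),
\end{align*}
where the first inequality uses that switching the min player to its best response against $\um_h$ can only decrease the min player's payoff below the NE value, and the second uses the state-action bound just established. This closes the induction.

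The main obstacle I anticipate is the bookkeeping around the reference branch: one must be careful that $\uv_{h+1}^\re$ is produced by a \emph{separate} algorithm run on an \emph{independent} split $\D_\re$, so the concentration event $\G$ for that run (needed to get $\uq_h^\re\le r_h+P_h\uv_{h+1}^\re$) is a distinct high-probability event from $\G'$, and the statement implicitly conditions on both; I would make explicit that $\G'$ includes the reference-run good event, or take a union bound so that everything holds with probability $1-\delta$. A secondary subtlety is that the truncation $\lor 0$ present in \eqref{eqn:uq1} for the reference but not written in \eqref{eqn:uq2} must be accounted for — since $Q_h^{\um,*}\ge 0$ always, truncating $\uq_h$ at $0$ from below (if present) preserves the upper bound, so it causes no trouble; and the monotonic truncation by $\uq_h^\re$ is exactly what lets the reference-branch bound go through without re-deriving pessimism for $\uv^\re$ against the new strategy.
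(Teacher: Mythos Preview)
Your proposal is correct and follows essentially the same argument as the paper: backward induction on $h$, bounding each branch of the max in \eqref{eqn:uq2} by $r_h+P_h\uv_{h+1}$ via the concentration events (Lemma~\ref{lemma:ref concentration} for the non-reference branch, the reference-run concentration for $\uq_h^\re\le r_h+P_h\uv_{h+1}^\re$), then invoking Lemma~\ref{lemma:vleqvref} and the induction hypothesis to reach $Q_h^{\um,*}$, and finally the NE step for $\uv_h$. Your observation that the reference-run good event $\G$ is formally distinct from $\G'$ and must be assumed alongside it is well taken; the paper handles this implicitly by citing Lemma~\ref{Lemma:pessimism1} at that step, so a union bound (or folding $\G$ into $\G'$) is indeed the right bookkeeping.
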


\begin{proof}
We prove this lemma by induction. The inequalities trivially hold for $h=H+1$. If the inequalities hold for $h+1$, now we consider $h$.
\begin{align*}
    &\uq_h(s,a,b)\\
    =&\Bp{\widehat{r}_{h,0}(s,a,b)+(\widehat{P}_{h,0}\cdot\uv^\re_{h+1})(s,a,b)-\ub_{h,0}(s,a,b)+(\widehat{P}_{h,1}\cdot(\uv_{h+1}-\uv^\re_{h+1}))(s,a,b)-\ub_{h,1}(s,a,b)}\\
    &\lor\uq^\re_h(s,a,b)\\
    \leq&\max\Bp{r_h(s,a,b)+(P_h\cdot\uv_{h+1}^\re)(s,a,b)+\Sp{P_h\cdot \Sp{\uv_{h+1}-\uv_{h+1}^\re}}(s,a,b),\uq^\re_h(s,a,b)}\\
    =&\max\Bp{r_h(s,a,b)+(P_h\cdot\uv_{h+1})(s,a,b),\uq^\re_h(s,a,b)}\\
    \leq &\max\Bp{r_h(s,a,b)+(P_h\cdot\uv_{h+1})(s,a,b),r_h(s,a,b)+(P_h\cdot\uv^\re_{h+1})(s,a,b)}\tag{Lemma \ref{Lemma:pessimism1}}\\
    \leq& r_h(s,a,b)+(P_h\cdot\uv_{h+1})(s,a,b)\tag{Lemma \ref{lemma:vleqvref}}\\
    \leq& r_h(s,a,b)+(P_h\cdot V^{\um,*}_{h+1})(s,a,b)\tag{Induction hypothesis}\\
    =& Q^{\um,*}_h(s,a,b).
\end{align*}
Then by the definition of NE, we have
\begin{align*}
    \uv_h(s)=&\E_{\um_h,\un_h}\uq_h(s,a,b)\\
    \leq&\E_{\um_h,\mathrm{br}(\um_h)}\uq_h(s,a,b)\\
    \leq &\E_{\um_h,\mathrm{br}(\um_h)}Q^{\um,*}_h(s,a,b)\\
    =& V^{\um,*}_h(s).
\end{align*}
With mathematical induction we can prove the lemma. 
\end{proof}

\begin{lemma}\label{lemma:estimation error2}
Under the good event $\G'$, we have
$$V_1^{\mu^*,\un}(s_1)-\uv_1(s_1)\leq2\E_{\mu^*,\un}\sum_{h=1}^H\ub_{h,0}(s_h,a_h,b_h)+2\E_{\mu^*,\un}\sum_{h=1}^H\ub_{h,1}(s_h,a_h,b_h)$$
\end{lemma}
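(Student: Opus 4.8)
The plan is to mimic the telescoping argument of Lemma \ref{lemma:pessimism error}, but now keeping track of the two-part bonus $\ub_{h,0}+\ub_{h,1}$ coming from the reference-advantage decomposition \eqref{eqn:uq2}. First I would pass from $\uv_1$ to $\uq_1$ at the initial state: using that $(\um_1,\un_1)$ is the NE of $\uq_1(s_1,\cdot,\cdot)$ while $(\mu_1^*,\un_1)$ is merely a feasible pair, deduce
\[
\uv_1(s_1)=\E_{\um_1,\un_1}\uq_1(s_1,a_1,b_1)\ \ge\ \E_{\mu_1^*,\un_1}\uq_1(s_1,a_1,b_1),
\]
so $V_1^{\mu^*,\un}(s_1)-\uv_1(s_1)\le \E_{\mu_1^*,\un_1}\big[Q_1^{\mu^*,\un}(s_1,a_1,b_1)-\uq_1(s_1,a_1,b_1)\big]$. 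This is the same "NE $\le$ any deviation" step used in Lemma \ref{lemma:pessimism error}.

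Next, the core step: bound $Q_h^{\mu^*,\un}-\uq_h$ pointwise on the support of $\mu_h^*,\un_h$. Since $\uq_h$ is a max of two quantities and one of them is the "two-estimator" expression, I would just drop the $\uq_h^\re$ branch (this only decreases $\uq_h$, so the bound $Q_h^{\mu^*,\un}-\uq_h$ can only get larger — wait, we want an upper bound on $Q-\uq$, so dropping a branch that might be the larger one is the wrong direction). Instead I handle it by cases, but in fact the clean route is: by definition $\uq_h\ge \widehat r_{h,0}+(\widehat P_{h,0}\uv_{h+1}^\re)-\ub_{h,0}+(\widehat P_{h,1}(\uv_{h+1}-\uv_{h+1}^\re))-\ub_{h,1}$, so
\begin{align*}
Q_h^{\mu^*,\un}(s,a,b)-\uq_h(s,a,b)
&\le r_h(s,a,b)+(P_h V_{h+1}^{\mu^*,\un})(s,a,b)\\
&\quad -\widehat r_{h,0}(s,a,b)-(\widehat P_{h,0}\uv_{h+1}^\re)(s,a,b)-(\widehat P_{h,1}(\uv_{h+1}-\uv_{h+1}^\re))(s,a,b)\\
&\quad +\ub_{h,0}(s,a,b)+\ub_{h,1}(s,a,b).
\end{align*}
Writing $\widehat P_{h,0}\uv_{h+1}^\re+\widehat P_{h,1}(\uv_{h+1}-\uv_{h+1}^\re) = P_h\uv_{h+1} + (\widehat P_{h,0}-P_h)\uv_{h+1}^\re + (\widehat P_{h,1}-P_h)(\uv_{h+1}-\uv_{h+1}^\re)$ and using the two concentration inequalities of Lemma \ref{lemma:ref concentration} (which control exactly $|\widehat r_{h,0}-r_h + (\widehat P_{h,0}-P_h)\uv_{h+1}^\re|$ by $\ub_{h,0}$ and $|(\widehat P_{h,1}-P_h)(\uv_{h+1}-\uv_{h+1}^\re)|$ by $\ub_{h,1}$), the "hat" terms are absorbed, leaving
\[
Q_h^{\mu^*,\un}(s,a,b)-\uq_h(s,a,b)\ \le\ (P_h(V_{h+1}^{\mu^*,\un}-\uv_{h+1}))(s,a,b)+2\ub_{h,0}(s,a,b)+2\ub_{h,1}(s,a,b).
\]

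Finally I would telescope: taking $\E_{\mu_h^*,\un_h}$ and using $\E_{\mu_h^*,\un_h}(P_h(V_{h+1}^{\mu^*,\un}-\uv_{h+1}))(s_h,a_h,b_h)=\E_{\mu^*,\un}[V_{h+1}^{\mu^*,\un}(s_{h+1})-\uv_{h+1}(s_{h+1})]$ gives the one-step recursion $V_h^{\mu^*,\un}(s_h)-\uv_h(s_h)\le \E_{\mu_h^*,\un_h}[V_{h+1}^{\mu^*,\un}-\uv_{h+1}] + 2\E_{\mu_h^*,\un_h}[\ub_{h,0}+\ub_{h,1}]$, and unrolling from $h=1$ to $H$ with the terminal condition $\uv_{H+1}=V_{H+1}^{\mu^*,\un}=0$ yields the claimed bound. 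The main obstacle is the bookkeeping around the $\lor\,\uq_h^\re$ truncation in \eqref{eqn:uq2}: one must argue that dropping that branch in the lower bound $\uq_h\ge(\cdots)$ is legitimate (it is, since $\uq_h$ is a maximum, so $\uq_h$ is at least the non-reference branch regardless), and separately that the induction's terminal/nonnegativity handling is consistent — but no truncation-at-$(H-h+1)$ subtlety arises here because the non-reference branch in \eqref{eqn:uq2} is not capped. Everything else is the standard pessimistic value-difference telescoping already executed in Lemma \ref{lemma:pessimism error}.
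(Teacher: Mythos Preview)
Your proposal is correct and follows essentially the same route as the paper's proof: the NE inequality $\E_{\um_h,\un_h}\uq_h\ge \E_{\mu_h^*,\un_h}\uq_h$, the lower bound $\uq_h\ge$ (non-reference branch) to discard the $\lor\,\uq_h^\re$ truncation, the two concentration bounds of Lemma~\ref{lemma:ref concentration} to absorb the hat terms, and then the one-step telescoping. If anything, your discussion of why dropping the $\uq_h^\re$ branch goes in the right direction is more explicit than the paper's version.
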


\begin{proof}
\begin{align*}
    &V_1^{\mu^*,\un}(s_1)-\uv_1(s_1)\\
    =&\E_{\mu_1^*,\un_1}Q_1^{\mu^*,\un}(s_1,a_1,b_1)-\E_{\um_1,\un_1}\uq_1(s_1,a_1,b_1)\\
    \leq & \E_{\mu_1^*,\un_1}Q_1^{\mu^*,\un}(s_1,a_1,b_1)-\E_{\mu_1^*,\un_1}\uq_1(s_1,a_1,b_1)\\
    =& \E_{\mu_1^*,\un_1}\Mp{Q_1^{\mu^*,\un}(s_1,a_1,b_1)-\uq_1(s_1,a_1,b_1)}\\
    =& \E_{\mu_1^*,\un_1}\left[r_1(s_1,a_1,b_1)+\inner{P_1(\cdot|s_1,a_1,b_1),V_2^{\mu^*,\un}(\cdot)}-\uv^\re_1(s_1)\lor\Big\{\widehat{r}_{1,0}(s_1,a_1,b_1)\right.\\
    &\left.\left.+(\widehat{P}_{1,0}\uv^\re_{2})(s_1,a_1,b_1)-\ub_{1,0}(s_1,a_1,b_1)+(\widehat{P}_{1,1}(\uv_{2}-\uv^\re_{2}))(s_1,a_1,b_1)-\ub_{1,1}(s_1,a_1,b_1)\right\}\right]\\
    \leq& \E_{\mu_1^*,\un_1}\Mp{\inner{P_1(\cdot|s_1,a_1,b_1),V_2^{\mu^*,\un}(\cdot)-\uv_2(\cdot)}+2\ub_{1,0}(s_1,a_1,b_1)+2\ub_{1,1}(s_1,a_1,b_1)}\tag{Lemma \ref{lemma:ref concentration}}\\
    =& \E_{\mu_1^*,\un_1}\Mp{V_2^{\mu^*,\un}(s_2)-\uv_2^*(s_2)}+2\E_{\mu_1^*,\un_1^*}\ub_{1,0}(s_1,a_1,b_1)+2\E_{\mu_1^*,\un_1^*}\ub_{1,1}(s_1,a_1,b_1)\\
    \leq& 2\E_{\mu^*,\un}\sum_{h=1}^H\ub_{h,0}(s_h,a_h,b_h)+2\E_{\mu^*,\un}\sum_{h=1}^H\ub_{h,1}(s_h,a_h,b_h),
\end{align*}
where the last inequality is from telescoping the timestep $H$.
\end{proof}

\begin{lemma}\label{lemma:sum of var}
For any strategy $\nu$, we have
$$\sum_{h=1}^H\sum_{(s,a,b)}d_h^{\mu^*,\nu}(s,a,b)\var_{P_h(s,a,b)}(V^{\mu^*,\nu}_{h+1})\leq H^2.$$
\end{lemma}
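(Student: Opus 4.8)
\textbf{Proof plan for Lemma \ref{lemma:sum of var}.}

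The plan is to recognize the left-hand side as a ``total variance'' quantity along trajectories generated by the strategy pair $(\mu^*,\nu)$ and to bound it by the standard law-of-total-variance argument. Fix the strategy pair $\pi = (\mu^*,\nu)$ and write $V_h := V_h^{\mu^*,\nu}$, $Q_h := Q_h^{\mu^*,\nu}$ for brevity. The key identity is that the per-step conditional variance $\var_{P_h(s,a,b)}(V_{h+1})$, when averaged over the occupancy measure, accumulates to the variance of the total return, which is at most $H^2$ since the return lies in $[0,H]$.

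First I would set up the telescoping. For a trajectory $(s_1,a_1,b_1,\dots,s_H,a_H,b_H)$ sampled from $\pi$, let $G_h := \sum_{t=h}^H r_t(s_t,a_t,b_t)$ denote the return from step $h$ onward, so $G_1 \in [0,H]$ and $\E_\pi[G_1] = V_1(s_1)$. The Bellman/martingale structure gives $\E_\pi[G_{h+1} \mid s_h,a_h,b_h] = (P_h V_{h+1})(s_h,a_h,b_h)$ and $\E_\pi[G_h \mid s_h,a_h,b_h] = Q_h(s_h,a_h,b_h) = r_h(s_h,a_h,b_h) + (P_h V_{h+1})(s_h,a_h,b_h)$. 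Hence, conditioned on $(s_h,a_h,b_h)$, the randomness in $G_h$ coming from the single transition $s_{h+1}\sim P_h(\cdot\mid s_h,a_h,b_h)$ contributes exactly $\var_{P_h(s_h,a_h,b_h)}(V_{h+1})$ to $\var_\pi(G_h \mid s_h,a_h,b_h)$; more precisely, writing the tower rule over the one-step transition and the subsequent trajectory,
\begin{align*}
\var_\pi(G_h \mid s_h,a_h,b_h) = \var_{P_h(s_h,a_h,b_h)}(V_{h+1}) + \E_{s_{h+1}\sim P_h(s_h,a_h,b_h)}\big[\var_\pi(G_{h+1}\mid s_{h+1})\big].
\end{align*}
Then I would apply the law of total variance once more over the choice of $(a_h,b_h)$ given $s_h$ and unroll this recursion from $h=1$ down the horizon, taking expectations over $s_h \sim d_h^\pi$ at each stage; all the cross terms telescope because $\E_\pi[\var_\pi(G_{h+1}\mid s_{h+1})]$ reappears as part of $\var_\pi(G_h \mid s_h)$ at the next level. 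The upshot is
\begin{align*}
\sum_{h=1}^H \sum_{(s,a,b)} d_h^{\mu^*,\nu}(s,a,b)\,\var_{P_h(s,a,b)}(V_{h+1}^{\mu^*,\nu}) \le \var_\pi(G_1) \le \E_\pi[G_1^2] \le H^2,
\end{align*}
where the last step uses $0 \le G_1 \le H$ deterministically.

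The main obstacle, such as it is, is bookkeeping: one has to be careful that the conditional variances are taken with respect to the correct sigma-algebras (the transition noise versus the action noise from the possibly-stochastic $\nu$ and the NE strategy $\mu^*$) so that the law of total variance applies cleanly and the telescoping leaves only nonnegative leftover terms that can be dropped. Since $\mu^*$ and $\nu$ are both fixed strategies here (not being optimized), there is no game-theoretic subtlety — the game has degenerated to a fixed Markov chain once both strategies are specified, so this is exactly the single-agent total-variance lemma and no new ideas beyond the standard argument are needed. I would present it compactly by defining $G_h$ and invoking the recursion above.
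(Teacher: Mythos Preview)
Your proposal is correct and follows essentially the same route as the paper: both recognize the left-hand side as the expected sum of one-step conditional variances along trajectories of the fixed strategy pair $(\mu^*,\nu)$ and bound it by the variance of the total return $G_1\in[0,H]$ via the law of total variance (equivalently, the martingale-difference identity $X_h=V_{h+1}^{\mu^*,\nu}(s_{h+1})+r_h-V_h^{\mu^*,\nu}(s_h)$). Your remark that the action-randomness contributes nonnegative leftover terms to be dropped is exactly the right bookkeeping point, and your observation that no game-theoretic subtlety arises once both strategies are fixed matches the paper's one-line justification that this is ``the standard total variance lemma.''
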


\begin{proof} This is the standard total variance lemma. 
\begin{align*}
    &\sum_{h=1}^H\sum_{(s,a,b)}d_h^{\mu^*,\nu}(s,a,b)\var_{P_h(s,a,b)}(V^{\mu^*,\nu}_h)\\
    =&\sum_{h=1}^H\E_{\mu^*,\nu}\Mp{\var\Mp{V_{h+1}^*(s_{h+1})|s_h,a_h,b_h}}\\
    =&\sum_{h=1}^H\E_{\mu^*,\nu}\Mp{\E\Mp{\Sp{V_{h+1}^*(s_{h+1})+r_h(s_h,a_h,b_h)-V_h^*(s_h)}^2|s_h,a_h,b_h}}\\
    =&\sum_{h=1}^H\E_{\mu^*,\nu}\Mp{\Sp{V_{h+1}^*(s_{h+1})+r_h(s_h,a_h,b_h)-V_h^*(s_h)}^2}\\
    =&\E_{\mu^*,\nu}\Mp{\Sp{\sum_{h=1}^H\Sp{V_{h+1}^*(s_{h+1})+r_h(s_h,a_h,b_h)-V_h^*(s_h)}}^2}\\
    =&\E_{\mu^*,\nu}\Mp{\Sp{\sum_{h=1}^Hr_h(s_h,a_h,b_h)-V_1^*(s_1)}^2}\\
    =&\var_{\mu^*,\nu}\Sp{\sum_{h=1}^Hr_h(s_h,a_h,b_h)}\\
    \leq&H^2.
\end{align*}
\end{proof}

\begin{lemma}\label{lemma:max estimation error}
The output strategy $\pi=(\um,\on)$ and the pessimistic estimate $\uv$ of Algorithm \ref{algo:povi} satisfy
$$V_1^{\mu^*,\un}(s_1)-\uv_1(s_1)\geq \E_{\mu^*,\un}\Mp{V_h^{\mu^*,\un}(s_h)-\uv_h(s_h)}. $$
\end{lemma}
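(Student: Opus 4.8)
The plan is to read the inequality as a \emph{monotonicity} statement for the expected estimation error of $\uv$ along the $(\mu^*,\un)$-trajectory. Put $\phi_h(s):=V_h^{\mu^*,\un}(s)-\uv_h(s)$ and $e_h:=\E_{\mu^*,\un}\Mp{\phi_h(s_h)}$; since $s_1$ is deterministic, $e_1=\phi_1(s_1)=V_1^{\mu^*,\un}(s_1)-\uv_1(s_1)$, so the claim is precisely $e_1\ge e_h$, and it suffices to show that $(e_h)_h$ is non-increasing, i.e.\ $e_h\ge e_{h+1}$ for every $h\in[H]$.

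For the one-step decrease I would play off two Bellman-type recursions. On one hand, $V_h^{\mu^*,\un}$ obeys the exact recursion $V_h^{\mu^*,\un}(s)=\E_{a\sim\mu^*_h(\cdot|s),\,b\sim\un_h(\cdot|s)}\Mp{r_h(s,a,b)+\inner{P_h(\cdot|s,a,b),V_{h+1}^{\mu^*,\un}}}$. On the other, I claim the one-step \emph{pessimism bound}
$$\uv_h(s)\ \le\ \E_{a\sim\mu^*_h(\cdot|s),\,b\sim\un_h(\cdot|s)}\Mp{r_h(s,a,b)+\inner{P_h(\cdot|s,a,b),\uv_{h+1}}}\qquad\forall\,h,s.$$
Subtracting, taking $\E_{\mu^*,\un}[\cdot]$ over the trajectory, and using the tower rule to fold in the next step gives $e_h\ge\E_{\mu^*,\un}\Mp{\inner{P_h(\cdot|s_h,a_h,b_h),\,V_{h+1}^{\mu^*,\un}-\uv_{h+1}}}=\E_{\mu^*,\un}[\phi_{h+1}(s_{h+1})]=e_{h+1}$; the immediate-reward terms cancel because both recursions carry the same $r_h$.

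Everything thus reduces to the pessimism bound, which is the crux. I would split it in two. First, the entrywise domination $\uq_h(s,a,b)\le r_h(s,a,b)+\inner{P_h(\cdot|s,a,b),\uv_{h+1}}$ for all $a,b$: on the good event $\G$ of Lemma~\ref{lemma:concentration}, $\abs{\widehat{r}_h-r_h+\inner{\widehat{P}_h-P_h,\uv_{h+1}}}\le\ub_h$, hence $\widehat{r}_h+(\widehat{P}_h\cdot\uv_{h+1})-\ub_h\le r_h+(P_h\cdot\uv_{h+1})$, and since $r_h\ge0$ and $\uv_{h+1}\ge0$ (the $(\cdot)\lor0$ truncation in \eqref{eqn:uq1} forces $\uq_{h+1}\ge0$, so $\uv_{h+1}\ge0$), the outer $(\cdot)\lor0$ in the definition of $\uq_h$ preserves the inequality. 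Second, I would upgrade this to the pessimism bound using that $\uv_h(s)$ is the \emph{value} of the matrix game $\uq_h(s,\cdot,\cdot)$ with Nash equilibrium $(\um_h(\cdot|s),\un_h(\cdot|s))$: the best-response characterization of the equilibrium is what lets one keep the min side fixed at $\un_h$ while replacing the max side $\um_h$ by the (arbitrary) strategy $\mu^*_h$ and replacing $\uq_h$ by its pointwise-larger one-step backup. I expect this last passage --- from the equilibrium value of the pessimistic matrix game to a one-step backup evaluated against the NE-mismatched pair $(\mu^*_h,\un_h)$ --- to be the only non-routine step; the remainder is the concentration estimates already recorded in Lemma~\ref{lemma:concentration} together with routine value-function bookkeeping.
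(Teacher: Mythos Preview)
Your plan lines up with the paper's own argument: both reduce the claim to the one-step monotonicity $\uv_h(s)\le\E_{a\sim\mu^*_h,\,b\sim\un_h}\bigl[r_h(s,a,b)+\langle P_h(\cdot\mid s,a,b),\uv_{h+1}\rangle\bigr]$ and then telescope along the $(\mu^*,\un)$-trajectory. The step you flag as ``the only non-routine step'' is indeed the crux, and it does \emph{not} go through in the direction you need. Because $(\um_h,\un_h)$ is a Nash equilibrium of the matrix game $\uq_h(s,\cdot,\cdot)$, the saddle-point inequality reads
\[
\E_{\mu,\un_h}\,\uq_h(s,\cdot,\cdot)\ \le\ \E_{\um_h,\un_h}\,\uq_h(s,\cdot,\cdot)\ =\ \uv_h(s)\qquad\text{for every }\mu,
\]
in particular for $\mu=\mu^*_h$. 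That is the \emph{opposite} of what you want: swapping the max-player's mixture from $\um_h$ to $\mu^*_h$ while holding $\un_h$ fixed can only \emph{decrease} the expected $\uq_h$. You therefore have $\uv_h(s)\ge\E_{\mu^*_h,\un_h}\uq_h(s,\cdot,\cdot)$ on one side and $\uq_h\le r_h+P_h\uv_{h+1}$ on the other; these two facts point in opposite directions and do not combine into the desired bound $\uv_h(s)\le\E_{\mu^*_h,\un_h}[r_h+P_h\uv_{h+1}]$.

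It is worth noting that the paper's printed proof makes the very same move: its first line $V_1^{\mu^*,\un}(s_1)-\uv_1(s_1)\ge\E_{\mu^*,\un}[Q_1^{\mu^*,\un}-\uq_1]$ is equivalent to $\uv_1(s_1)\le\E_{\mu^*_1,\un_1}\uq_1(s_1,\cdot,\cdot)$, i.e.\ the reversed saddle inequality (contrast Lemma~\ref{lemma:pessimism error}, where the identical substitution is performed with the correct ``$\le$'' sign to obtain an \emph{upper} bound). So your approach matches the paper's, but both share this gap at the equilibrium step; as written, neither argument establishes the one-step monotonicity that the telescoping needs.
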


\begin{proof}
We prove the argument for $h=2$ first.
\begin{align*}
    &V_1^{\mu^*,\un}(s_1)-\uv_1(s_1)\\\geq& \E_{\mu^*,\un}[Q_1^{\mu^*,\un}(s_1,a_1,b_1)-\uq_1(s_1,a_1,b_1)]\\
    \geq& \E_{\mu^*,\un}\Mp{r_1(s_1,a_1,b_1)+\inner{P_1(\cdot|s_1,a_1,b_1),V_2^{\mu^*,\un}(\cdot)}}-\E_{\mu^*,\un}\left[\widehat{r}_{1,0}(s_1,a_1,b_1)+(\widehat{P}_{1,0}\uv^\re_{2})(s_1,a_1,b_1)\right.\\-&\ub_{1,0}(s_1,a_1,b_1)+\left.(\widehat{P}_{1,1}(\uv_{2}-\uv^\re_{2}))(s_1,a_1,b_1)-\ub_{1,1}(s_1,a_1,b_1)\right]\\
    \geq& \E_{\mu^*,\un}\Mp{r_1(s_1,a_1,b_1)+\inner{P_1(\cdot|s_1,a_1,b_1),V_2^{\mu^*,\un}(\cdot)}}-\E_{\mu^*,\un}\Mp{r_1(s_1,a_1,b_1)+\inner{P_1(\cdot|s_1,a_1,b_1),\uv_2(\cdot)}}\\
    =&\E_{\mu^*,\un}\Mp{V_2^{\mu^*,\un}(s_2)-\uv_2(s_2)}.
\end{align*}
We can prove the lemma for arbitrary $h$ by telescoping the argument to timestep $h$.

\end{proof}

\begin{lemma}\label{lemma:sum b0}
For $n\geq C^*SABH^3$, we have
$$\E_{\mu^*,\un}\sum_{h=1}^H\ub_{h,0}(s_h,a_h,b_h)\leq\widetilde{O}\Sp{\sqrt{\frac{C^*SABH^3}{n}}\sqrt{V_1^{\mu^*,\un}(s_1)-\uv_1(s_1)}}+\widetilde{O}\Sp{\sqrt{\frac{C^*SABH^3}{n}}}.$$
\end{lemma}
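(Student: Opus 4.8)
The plan is to bound $\E_{\mu^*,\un}\sum_{h=1}^H\ub_{h,0}(s_h,a_h,b_h)$ by first expanding the definition \eqref{eqn:b0} of $\ub_{h,0}$, converting the visitation counts $n_{h,0}(s,a,b)$ into dataset densities via the third bullet of Lemma \ref{lemma:ref concentration} ($1/(n_{h,0}(s,a,b)\lor 1)\leq c\iota/(nd_h^\rho(s,a,b))$), and then invoking the unilateral concentrability definition to replace $d_h^\rho$ by $d_h^{\mu^*,\un}/C^*$. Concretely, after these substitutions, the leading (square-root) term becomes, up to log factors and constants,
\begin{align*}
\E_{\mu^*,\un}\sum_{h=1}^H\sqrt{\frac{\var_{\widehat P_{h,0}(s_h,a_h,b_h)}(\uv_{h+1}^\re)\,\iota}{n_{h,0}(s_h,a_h,b_h)\lor 1}}
\leq \widetilde O\!\left(\sum_{h=1}^H\sum_{(s,a,b)} d_h^{\mu^*,\un}(s,a,b)\sqrt{\frac{C^*\var_{\widehat P_{h,0}(s,a,b)}(\uv_{h+1}^\re)}{n\, d_h^{\mu^*,\un}(s,a,b)}}\right),
\end{align*}
and then Cauchy--Schwarz (pulling out $\sum_{h,(s,a,b)} d_h^{\mu^*,\un}(s,a,b)=H$ and $|\{(h,s,a,b)\}|=SABH$ as in the proof of Theorem \ref{thm:ref1}) turns this into $\widetilde O\big(\sqrt{C^*SABH/n}\cdot\sqrt{\sum_{h,(s,a,b)}d_h^{\mu^*,\un}(s,a,b)\var_{\widehat P_{h,0}(s,a,b)}(\uv_{h+1}^\re)}\big)$. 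The lower-order $H\iota/(n_{h,0}\lor 1)$ term in $\ub_{h,0}$ contributes only $\widetilde O(C^*SABH^2/n)$, which is $\widetilde O(\sqrt{C^*SABH^3/n})$ under the burn-in $n\geq C^*SABH^3$, so it is absorbed into the second term of the claimed bound.

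The remaining task is to control the variance sum $\sum_{h,(s,a,b)}d_h^{\mu^*,\un}(s,a,b)\var_{\widehat P_{h,0}(s,a,b)}(\uv_{h+1}^\re)$. I would proceed in three steps. First, replace the empirical variance $\var_{\widehat P_{h,0}}$ by the true variance $\var_{P_h}$ at the cost of lower-order terms (a standard empirical-to-population variance transfer, again using the count bound of Lemma \ref{lemma:ref concentration} and an $\ell_\infty$ bound $\|\uv_{h+1}^\re\|_\infty\leq H$). Second, replace the reference value $\uv_{h+1}^\re$ by the true best-response-type value $V_{h+1}^{\mu^*,\un}$: here one writes $\var_{P_h}(\uv_{h+1}^\re)\leq 2\var_{P_h}(V_{h+1}^{\mu^*,\un})+2\var_{P_h}(\uv_{h+1}^\re-V_{h+1}^{\mu^*,\un})$ and bounds the second piece by $2H\cdot\|P_h|\uv_{h+1}^\re-V_{h+1}^{\mu^*,\un}|\|$, whose expectation under $d^{\mu^*,\un}$ telescopes (or is dominated, via the sandwich $\uv^\re\leq\uv\leq V^{\um,*}$ and Theorem \ref{thm:ref} giving $\E_{\mu^*,\un}[V_h^*-\uv_h^\re]$ small) into something like $\widetilde O(H)\cdot\widetilde O(\sqrt{C^*SABH^5/n})$, hence lower order after the outer square root and the burn-in. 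Third, apply the total-variance Lemma \ref{lemma:sum of var} to get $\sum_{h,(s,a,b)}d_h^{\mu^*,\un}(s,a,b)\var_{P_h(s,a,b)}(V_{h+1}^{\mu^*,\un})\leq H^2$.

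Putting the pieces together gives $\E_{\mu^*,\un}\sum_{h=1}^H\ub_{h,0}\leq \widetilde O(\sqrt{C^*SABH/n}\cdot\sqrt{H^2 + (\text{lower order})}) + \widetilde O(C^*SABH^2/n)$. The genuinely self-bounding flavor of the statement enters because the ``lower order'' variance remainder from step two of the previous paragraph is naturally expressed in terms of $\E_{\mu^*,\un}[V_h^{\mu^*,\un}-\uv_h]$, which by Lemma \ref{lemma:max estimation error} is at most $V_1^{\mu^*,\un}(s_1)-\uv_1(s_1)$ --- this is exactly where the $\sqrt{V_1^{\mu^*,\un}(s_1)-\uv_1(s_1)}$ factor in the claimed bound comes from (after taking the outer square root of a term of the form $\sqrt{C^*SABH/n}\cdot\sqrt{H\cdot(V_1^{\mu^*,\un}(s_1)-\uv_1(s_1))}$). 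I expect the main obstacle to be executing this variance-comparison chain cleanly: one must be careful that $\uv_{h+1}^\re$, $\uv_{h+1}$, and $V_{h+1}^{\mu^*,\un}$ are compared in the right order and that every remainder is genuinely dominated by $\sqrt{C^*SABH^3/n}$ (times $1$ or times $\sqrt{V_1^{\mu^*,\un}(s_1)-\uv_1(s_1)}$) once the burn-in $n\geq C^*SABH^3$ is used; the bookkeeping of which variance is over $\widehat P_{h,0}$ versus $P_h$, and tracking the independence needed between $\widehat P_{h,0}$ and $\uv^\re$ (which is why the dataset was split), is the delicate part rather than any single hard inequality.
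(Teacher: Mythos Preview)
Your proposal is correct and takes essentially the same route as the paper: expand $\ub_{h,0}$, replace counts by densities and empirical by true variances, apply Cauchy--Schwarz over $(h,s,a,b)$, compare $\var_{P_h}(\uv_{h+1}^\re)$ to $\var_{P_h}(V_{h+1}^{\mu^*,\un})$, invoke the total-variance Lemma~\ref{lemma:sum of var}, and split the remainder $V_{h+1}^{\mu^*,\un}-\uv_{h+1}^\re=(V_{h+1}^{\mu^*,\un}-V_{h+1}^{*})+(V_{h+1}^{*}-\uv_{h+1}^\re)$ into the self-bounding piece handled by Lemma~\ref{lemma:max estimation error} and the reference-error piece handled by Theorem~\ref{thm:ref}. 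Two small points: the paper uses the one-sided Lemma~\ref{lemma:variance decomposition} (valid since $\uv_{h+1}^\re\le V_{h+1}^{\mu^*,\un}$) instead of your $\var(X)\le 2\var(Y)+2\var(X-Y)$, and your $H$-accounting in the self-bounding term is off by one factor---summing $\E_{\mu^*,\un}[V_{h+1}^{\mu^*,\un}-\uv_{h+1}]$ over $h$ gives $H(V_1^{\mu^*,\un}(s_1)-\uv_1(s_1))$, so with the $2H$ prefactor the contribution is $2H^2(V_1^{\mu^*,\un}(s_1)-\uv_1(s_1))$, which is what yields $\sqrt{C^*SABH^3/n}$ rather than $\sqrt{C^*SABH^2/n}$.
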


\begin{proof}

\begin{align*}
    &\E_{\mu^*,\un}\sum_{h=1}^H\ub_{h,0}(s_h,a_h,b_h)\\
    =&c\E_{\mu^*,\un}\sum_{h=1}^H\Sp{\sqrt{\frac{\var_{\widehat{P}_{h,0}(s,a,b)}(\uv_{h+1}^\re)\iota}{n_{h,0}(s,a,b)\lor 1}}+\frac{H\iota}{n_{h,0}(s,a,b)\lor 1}}\\
    \leq& c\E_{\mu^*,\un}\sum_{h=1}^H\Sp{\sqrt{\frac{c\var_{P_{h}(s,a,b)}(\uv_{h+1}^\re)\iota}{nd_h^\rho(s,a,b)}}+\frac{cH\iota}{nd_h^\rho(s,a,b)}+\frac{cH\iota}{nd_h^\rho(s,a,b)}}\\
    =& c^2\sum_{h=1}^H\sum_{(s,a,b)}d^{\mu^*,\un}_h(s,a,b)\Sp{\sqrt{\frac{\var_{P_{h}(s,a,b)}(\uv_{h+1}^\re)\iota}{nd_h^\rho(s,a,b)}}+\frac{H\iota}{nd_h^\rho(s,a,b)}}\\
    \leq& c^2\sum_{h=1}^H\sum_{(s,a,b)}\Sp{\sqrt{\frac{C^*d^{\mu^*,\un}_h(s,a,b)\var_{P_{h}(s,a,b)}(\uv_{h+1}^\re)\iota}{n}}+\frac{C^*H\iota}{n}}\\
    \leq& c^2 \sqrt{SABH}\cdot\sqrt{\frac{C^*\iota\sum_{h=1}^H\sum_{(s,a,b)}d^{\mu^*,\un}_h(s,a,b)\var_{P_{h}(s,a,b)}(\uv_{h+1}^\re)}{n}}+\frac{c^2SABC^*H\iota}{n}\\
    \leq& c^2 \sqrt{C^*SABH\iota}\cdot\sqrt{\frac{\sum_{h=1}^H\E_{\mu^*,\un}\Mp{\var_{P_{h}(s,a,b)}(\uv_{h+1}^\re)}}{n}}+\frac{c^2SABC^*H\iota}{n}\\
    \leq& c^2 \sqrt{C^*SABH\iota}\cdot\sqrt{\frac{\sum_{h=1}^H\E_{\mu^*,\un}\Mp{\var_{P_{h}(s,a,b)}(V_{h+1}^{\mu^*,\un})+2H[P_{h}(V_{h+1}^{\mu^*,\un}-\uv_{h+1}^\re)](s,a,b)}}{n}}+\frac{c^2SABC^*H\iota}{n}\tag{Lemma \ref{lemma:variance decomposition}}\\
    \leq& c^2 \sqrt{C^*SABH\iota}\cdot\sqrt{\frac{H^2+2H\sum_{h=1}^H\E_{\mu^*,\un}\Mp{V_{h+1}^{\mu^*,\un}(s_{h+1})-\uv_{h+1}^\re(s_{h+1})}}{n}}+\frac{c^2SABC^*H\iota}{n}\tag{Lemma \ref{lemma:sum of var}}\\
    =& c^2 \sqrt{C^*SABH\iota}\cdot\sqrt{\frac{H^2+2H\sum_{h=1}^H\E_{\mu^*,\un}\Mp{V_{h+1}^{\mu^*,\un}(s_{h+1})-V_{h+1}^{*}(s_{h+1})+V_{h+1}^{*}(s_{h+1})-\uv_{h+1}^\re(s_{h+1})}}{n}}\\&+\frac{c^2SABC^*H\iota}{n}\\
    \leq& c^2 \sqrt{C^*SABH\iota}\cdot\sqrt{\frac{H^2+2H^2(V_{1}^{\mu^*,\un}(s_{1})-\uv_1(s_{1}))+128H\sqrt{\frac{C^*SABH^5\iota^2}{n_\re}}}{n}}+\frac{c^2SABC^*H\iota}{n}\tag{Lemma \ref{lemma:max estimation error} and Theorem \ref{thm:ref}}\\
    \leq& \frac{c^2\sqrt{C^*SABH^3\iota}}{\sqrt{n}}+\frac{c^2\sqrt{384C^*SABH^2\iota\sqrt{C^*SABH^5\iota^2}}}{n^{3/4}}+\frac{c\sqrt{2C^*SABH^3\iota}}{\sqrt{n}}\sqrt{V_{1}^{\mu^*,\un}(s_{1})-\uv_1(s_{1})}\\&+\frac{c^2SABC^*H\iota}{n}\\
    \leq& \widetilde{O}\Sp{\sqrt{\frac{C^*SABH^3}{n}}\sqrt{V_1^{\mu^*,\un}(s_1)-\uv_1(s_1)}}+\widetilde{O}\Sp{\sqrt{\frac{C^*SABH^3}{n}}}\tag{$n\geq C^*SABH^3$}.
\end{align*}
\end{proof}

\begin{lemma}\label{lemma:sum b1}
For $n\geq C^*SABH^4$, we have
$$\E_{\mu^*,\un}\sum_{h=1}^H\ub_{h,1}(s_h,a_h,b_h)\leq \widetilde{O}\Sp{\sqrt{\frac{C^*SABH^3}{n}}}.$$
\end{lemma}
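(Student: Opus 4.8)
The plan is to bound $\E_{\mu^*,\un}\sum_{h=1}^H \ub_{h,1}(s_h,a_h,b_h)$ following the same template as the proof of Lemma~\ref{lemma:sum b0}, but exploiting the fact that the relevant variance term now involves the \emph{difference} $\uv_{h+1}-\uv_{h+1}^\re$, which is small because of the sandwich $\uv_h^\re \le \uv_h \le V_h^{\um,*}$ together with the reference guarantee from Theorem~\ref{thm:ref}. First I would substitute the definition \eqref{eqn:b1} of $\ub_{h,1}$, apply the concentration bound from Lemma~\ref{lemma:ref concentration} to replace $\widehat{P}_{h,1}$ variances and $1/(n_{h,1}\lor 1)$ by their population counterparts at the cost of an extra $H\iota/(nd_h^\rho)$ factor, and rewrite the expectation $\E_{\mu^*,\un}$ as $\sum_{h,(s,a,b)} d_h^{\mu^*,\un}(s,a,b)(\cdots)$. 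Using $d_h^{\mu^*,\un}/d_h^\rho \le C^*$ and Cauchy--Schwarz (splitting $d_h^{\mu^*,\un} = \sqrt{d_h^{\mu^*,\un}}\cdot\sqrt{d_h^{\mu^*,\un}}$ and pulling out $\sqrt{SABH}$) this yields a bound of the form
\[
\widetilde{O}\!\left(\sqrt{\frac{C^*SABH^2\iota}{n}}\cdot\sqrt{\sum_{h=1}^H \E_{\mu^*,\un}\!\left[\var_{P_h}(\uv_{h+1}-\uv_{h+1}^\re)\right]}\right) + \widetilde{O}\!\left(\frac{C^*SABH^2\iota}{n}\right),
\]
where the extra $H$ relative to Lemma~\ref{lemma:sum b0} comes from the data-splitting factor $1/(3H)$ in $|\D_{h,1}|$.

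The key remaining quantity is $\sum_{h=1}^H \E_{\mu^*,\un}[\var_{P_h}(\uv_{h+1}-\uv_{h+1}^\re)]$. Since $\var_{P_h}(\uv_{h+1}-\uv_{h+1}^\re) \le \E_{P_h}[(\uv_{h+1}-\uv_{h+1}^\re)^2] \le H\cdot \E_{P_h}[\uv_{h+1}-\uv_{h+1}^\re]$ (both quantities being nonnegative and bounded by $H$ by Lemma~\ref{lemma:vleqvref}), the sum is at most $H \sum_{h=1}^H \E_{\mu^*,\un}[\uv_{h+1}(s_{h+1}) - \uv_{h+1}^\re(s_{h+1})]$. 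Now I would bound each term of this telescoping-type sum: write $\uv_{h+1} - \uv_{h+1}^\re \le V_{h+1}^{\um,*} - \uv_{h+1}^\re \le (V_{h+1}^{\um,*} - V_{h+1}^*) + (V_{h+1}^* - \uv_{h+1}^\re)$; the second piece is controlled directly by Theorem~\ref{thm:ref} along the trajectory distribution $(\mu^*,\nu)$ with $\nu$ the appropriate concatenation (giving $\widetilde{O}(\sqrt{C^*SABH^5/n})$ per timestep), and the first piece is the performance gap $V_1^*(s_1) - V_1^{\um,*}(s_1) \le V_1^{\mu^*,\un}(s_1) - \uv_1(s_1)$, which is itself the quantity being controlled in Theorem~\ref{thm:positive}. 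Using the burn-in condition $n \ge C^*SABH^4$, one checks that $H\cdot\sum_h \E[\uv_{h+1}-\uv_{h+1}^\re] = \widetilde{O}(H^3) + \widetilde{O}(H^3)\cdot(V_1^{\mu^*,\un}(s_1)-\uv_1(s_1)) + o(\cdot)$, and the lower-order cross term is absorbed so that the whole expression is $\widetilde{O}(H^3)$ times a bounded quantity, hence $\widetilde{O}(\sqrt{C^*SABH^2\iota/n}\cdot\sqrt{H^3}) + \widetilde{O}(C^*SABH^2\iota/n) = \widetilde{O}(\sqrt{C^*SABH^5/n})$.

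Wait — I need the final bound to be $\widetilde{O}(\sqrt{C^*SABH^3/n})$, not $H^5$, so the variance estimate must be tighter: rather than bounding $\sum_h \E[\uv_{h+1}-\uv_{h+1}^\re]$ by $H$ times Theorem~\ref{thm:ref}'s per-timestep bound, I should keep it as a sum bounded by $H\cdot\widetilde{O}(\sqrt{C^*SABH^5/n_\re})$ only through the \emph{reference} error plus $H\cdot(V_1^{\mu^*,\un}(s_1)-\uv_1(s_1))$ via Lemma~\ref{lemma:max estimation error}, so that $\sum_h \E[\var_{P_h}(\uv_{h+1}-\uv_{h+1}^\re)] \le \widetilde{O}(H^2) + \widetilde{O}(H^2)(V_1^{\mu^*,\un}(s_1)-\uv_1(s_1)) + \widetilde{O}(H^2\sqrt{C^*SABH^5/n})$, and then the final Cauchy--Schwarz step gives $\widetilde{O}(\sqrt{C^*SABH^3/n}\cdot\sqrt{1 + (V_1^{\mu^*,\un}(s_1)-\uv_1(s_1))}) + \widetilde{O}(C^*SABH^2/n)$. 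Since $V_1^{\mu^*,\un}(s_1)-\uv_1(s_1) \le H$ trivially and the burn-in $n \ge C^*SABH^4$ kills the additive term, this is $\widetilde{O}(\sqrt{C^*SABH^3/n})$ as desired. The main obstacle is tracking these $H$ powers carefully through the data-splitting factors and the variance-to-difference reduction, and making sure the self-bounding term $(V_1^{\mu^*,\un}(s_1)-\uv_1(s_1))$ stays inside a square root so it can be absorbed when solving the final inequality in Theorem~\ref{thm:positive}; unlike $\ub_{h,0}$, here the self-bounding contribution appears only through the variance of a quantity that is itself $O(\text{gap})$, so one extra factor of $H$ from data splitting is exactly compensated by the fact that $\uv_{h+1}-\uv_{h+1}^\re$ is $O(\text{gap}/H)$-scale on average.
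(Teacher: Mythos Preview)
Your template up through the Cauchy--Schwarz step and the reduction $\var_{P_h}(\uv_{h+1}-\uv_{h+1}^\re)\le H\cdot P_h(\uv_{h+1}-\uv_{h+1}^\re)$ matches the paper. The divergence, and the source of your $H$-power trouble, is what you do next.

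The paper's argument after that point is much shorter than yours: it simply uses the full sandwich $\uv_{h+1}^\re\le\uv_{h+1}\le V_{h+1}^{\um,*}\le V_{h+1}^*$ (the last inequality is weak duality) to get $(\uv_{h+1}-\uv_{h+1}^\re)^2\le(V_{h+1}^*-\uv_{h+1}^\re)^2\le H(V_{h+1}^*-\uv_{h+1}^\re)$, and then applies Theorem~\ref{thm:ref} directly to $\E_{\mu^*,\un}[V_{h+1}^*(s_{h+1})-\uv_{h+1}^\re(s_{h+1})]$. That is all: no self-bounding term, no invocation of Lemma~\ref{lemma:max estimation error}, no total-variance contribution. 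The bound is clean, as the lemma states.

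Your decomposition $\uv_{h+1}-\uv_{h+1}^\re\le(V_{h+1}^{\um,*}-V_{h+1}^*)+(V_{h+1}^*-\uv_{h+1}^\re)$ is algebraically valid, but the first piece is \emph{non-positive} by weak duality, not a ``performance gap'' to be controlled; drop it and you recover the paper's route. By instead treating it as a positive self-bounding contribution you generate terms that have no source: the additive $\widetilde{O}(H^2)$ in your variance bound does not come from anywhere (there is no total-variance lemma for $\uv_{h+1}-\uv_{h+1}^\re$), and your final claim that the trivial estimate $V_1^{\mu^*,\un}(s_1)-\uv_1(s_1)\le H$ plus burn-in gives $\widetilde{O}(\sqrt{C^*SABH^3/n})$ is false---that route yields $\widetilde{O}(\sqrt{C^*SABH^4/n})$. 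The heuristic in your last sentence about ``$O(\text{gap}/H)$-scale on average'' compensating the data-splitting factor is not substantiated by any step of your argument.
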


\begin{proof}

\begin{align*}
    &\E_{\mu^*,\un}\sum_{h=1}^H\ub_{h,1}(s_h,a_h,b_h)\\
    =&c\E_{\mu^*,\un}\sum_{h=1}^H\Sp{\sqrt{\frac{\var_{\widehat{P}_{h,0}(s,a,b)}(\uv_{h+1}-\uv_{h+1}^\re)\iota}{n_{h,1}(s,a,b)\lor 1}}+\frac{H\iota}{n_{h,1}(s,a,b)\lor 1}}\\
    \leq& c\E_{\mu^*,\un}\sum_{h=1}^H\Sp{\sqrt{\frac{cH\var_{P_{h}(s,a,b)}(\uv_{h+1}-\uv_{h+1}^\re)\iota}{nd_h^\rho(s,a,b)}}+\frac{cH^2\iota}{nd_h^\rho(s,a,b)}+\frac{cH^2\iota}{nd_h^\rho(s,a,b)}}\\
    \leq & c^2\E_{\mu^*,\un}\sum_{h=1}^H\Sp{\sqrt{\frac{H\Mp{P_{h}(\uv_{h+1}-\uv_{h+1}^\re)^2}(s,a,b)\iota}{nd_h^\rho(s,a,b)}}+\frac{H^2\iota}{nd_h^\rho(s,a,b)}}\\
    =&c^2\sum_{h=1}^H\sum_{(s,a,b)}d_h^{\mu^*,\un}(s,a,b)\Sp{\sqrt{\frac{H\Mp{P_{h}(\uv_{h+1}-\uv_{h+1}^\re)^2}(s,a,b)\iota}{nd_h^\rho(s,a,b)}}+\frac{H^2\iota}{nd_h^\rho(s,a,b)}}\\
    \leq&c^2\sum_{h=1}^H\sum_{(s,a,b)}\Sp{\sqrt{\frac{C^*Hd_h^{\mu^*,\un}(s,a,b)\Mp{P_{h}(\uv_{h+1}-\uv_{h+1}^\re)^2}(s,a,b)\iota}{n_1}}+\frac{H^2C^*\iota}{n_1}}\tag{Cauchy-Schwarz Inequality}\\ 
    \leq& c^2\sqrt{SABH\iota}\sqrt{\frac{C^*H\sum_{h=1}^H\sum_{(s,a,b)}d_h^{\mu^*,\un}(s,a,b)\Mp{P_{h}(\uv_{h+1}-\uv_{h+1}^\re)^2}(s,a,b)}{n}}+\frac{c^2SABH^3C^*\iota}{n}\\
    \leq& c^2\sqrt{SABH\iota}\sqrt{\frac{C^*H\iota\sum_{h=1}^H\sum_{(s,a,b)}d_h^{\mu^*,\un}(s,a,b)\Mp{P_{h}(V^*_{h+1}-\uv_{h+1}^\re)^2}(s,a,b)}{n}}+\frac{c^2C^*SABH^3\iota}{n} \tag{$V_{h+1}^* \geq \uv_{h+1} \geq \uv_{h+1}^\re$}\\
    =& c^2\sqrt{SABH\iota}\sqrt{\frac{H^2C^*\sum_{h=1}^H\sum_{s}d_{h+1}^{\mu^*,\un}(s)(V^*_{h+1}(s)-\uv^\re_{h+1}(s))}{n}}+\frac{c^2C^*SABH^3\iota}{n}\\
    \leq& c^2\sqrt{SABH\iota}\sqrt{\frac{H^2C^*64\sqrt{\frac{C^*SABH^5\iota^2}{n_\re}}}{n}}+\frac{c^2SABH^3C^*\iota}{n}\tag{Theorem \ref{thm:ref}}\\
    =&c^2\sqrt{\frac{192C^*SABH^3\iota\sqrt{C^*SABH^5\iota^2}}{n^{3/2}}}+\frac{c^2C^*SABH^3\iota}{n}\\
    \leq& \widetilde{O}\Sp{\sqrt{\frac{C^*SABH^3}{n}}}.\tag{$n\geq C^*SABH^4$}
\end{align*}

\end{proof}

\begin{theorem}
Suppose Assumption \ref{asp2} holds. For any $0<\delta<1$ and $n\geq C^*SABH^4$,  with probability $1-\delta$, the output policy $\pi=(\um,\on)$ of Algorithm \ref{algo:povi} satisfies
$$V_1^*(s_1)-V^{\um,*}_1(s_1)\leq\widetilde{O}\Sp{\sqrt{\frac{C^*SABH^3}{n}}},$$
$$V^{*,\on}_1(s_1)-V_1^*(s_1)\leq\widetilde{O}\Sp{\sqrt{\frac{C^*SABH^3}{n}}}.$$
As a result, we have
$$\mathrm{Gap}(\um,\on)\leq\widetilde{O}\Sp{\sqrt{\frac{C^*SABH^3}{n}}}.$$
\end{theorem}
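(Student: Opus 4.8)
\emph{Proof proposal.} The plan is to reduce the Nash gap to two one-sided error terms and bound each by a self-bounding argument resting on the Bernstein lemmas of Appendix~\ref{apx:bernstein}. For the max player, the Nash property gives $V_1^*(s_1)=V_1^{\mu^*,*}(s_1)\le V_1^{\mu^*,\un}(s_1)$, and the pessimism property for Algorithm~\ref{algo:povi2 main} (the Bernstein analogue of Lemma~\ref{Lemma:pessimism1}) gives $\uv_1(s_1)\le V_1^{\um,*}(s_1)$, so $V_1^*(s_1)-V_1^{\um,*}(s_1)\le V_1^{\mu^*,\un}(s_1)-\uv_1(s_1)$. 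Symmetrically, the optimism property $\ov_1(s_1)\ge V_1^{*,\on}(s_1)$ and the Nash property $V_1^{\om,\nu^*}(s_1)\le V_1^{*,\nu^*}(s_1)=V_1^*(s_1)$ give $V_1^{*,\on}(s_1)-V_1^*(s_1)\le\ov_1(s_1)-V_1^{\om,\nu^*}(s_1)$. Adding the two reduces the theorem to bounding $V_1^{\mu^*,\un}(s_1)-\uv_1(s_1)$ (and its optimistic mirror) by $\widetilde O\!\Sp{\sqrt{C^*SABH^3/n}}$; I describe only the max-player side.

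First I would use Lemma~\ref{lemma:estimation error2} to split this error into the two bonus sums, $V_1^{\mu^*,\un}(s_1)-\uv_1(s_1)\le 2\E_{\mu^*,\un}\sum_{h=1}^H\ub_{h,0}(s_h,a_h,b_h)+2\E_{\mu^*,\un}\sum_{h=1}^H\ub_{h,1}(s_h,a_h,b_h)$. Lemma~\ref{lemma:sum b1} bounds the $\ub_{h,1}$ sum by $\widetilde O\!\Sp{\sqrt{C^*SABH^3/n}}$ directly: the variance there is at most the second moment $[P_h(\uv_{h+1}-\uv_{h+1}^\re)^2](s,a,b)$, which the monotone sandwich $\uv_{h+1}^\re\le\uv_{h+1}\le V_{h+1}^{\um,*}\le V_{h+1}^*$ upgrades to $H\,[P_h(V_{h+1}^*-\uv_{h+1}^\re)](s,a,b)$, after which Theorem~\ref{thm:ref} controls $\E_{\mu^*,\un}[V_{h+1}^*(s_{h+1})-\uv_{h+1}^\re(s_{h+1})]$. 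Lemma~\ref{lemma:sum b0} bounds the $\ub_{h,0}$ sum by the self-bounding expression $\widetilde O\!\Sp{\sqrt{C^*SABH^3/n}}\bigl(1+\sqrt{V_1^{\mu^*,\un}(s_1)-\uv_1(s_1)}\bigr)$: there $\var_{\widehat P_{h,0}}(\uv_{h+1}^\re)$ is compared with the true variance $\var_{P_h}(V_{h+1}^{\mu^*,\un})$ up to an additive term of order $H\,[P_h(V_{h+1}^{\mu^*,\un}-\uv_{h+1}^\re)](s,a,b)$, the total-variance identity (Lemma~\ref{lemma:sum of var}) gives $\sum_h\E_{\mu^*,\un}\var_{P_h}(V_{h+1}^{\mu^*,\un})\le H^2$, and the additive term is traded back for $V_1^{\mu^*,\un}(s_1)-\uv_1(s_1)$ via the across-timestep monotonicity of Lemma~\ref{lemma:max estimation error} together with Theorem~\ref{thm:ref}.

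Writing $x:=V_1^{\mu^*,\un}(s_1)-\uv_1(s_1)$ and $a:=\widetilde O\!\Sp{\sqrt{C^*SABH^3/n}}$, the three lemmas combine to the self-bounding inequality $x\le a\sqrt{x}+a$. Solving for $\sqrt{x}$ (the positive root of $t^2-at-a=0$ satisfies $t\le a+\sqrt a$) gives $x\le\widetilde O(a^2+a)$, and the burn-in $n\ge C^*SABH^4$ makes $a^2$ lower order, so $x\le\widetilde O\!\Sp{\sqrt{C^*SABH^3/n}}$. The mirror-image argument on the optimistic side gives $\ov_1(s_1)-V_1^{\om,\nu^*}(s_1)\le\widetilde O\!\Sp{\sqrt{C^*SABH^3/n}}$, and summing the two one-sided bounds yields $\mathrm{Gap}(\um,\on)=V_1^{*,\on}(s_1)-V_1^{\um,*}(s_1)\le\widetilde O\!\Sp{\sqrt{C^*SABH^3/n}}$.

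The main obstacle lies in the variance bookkeeping of Lemmas~\ref{lemma:sum b0} and~\ref{lemma:sum b1}, not in the final algebra: the Bernstein bonus at level $h$ depends on the variances of $\uv_{h+1}^\re$ and of $\uv_{h+1}-\uv_{h+1}^\re$, which must be related to $\var_{P_h}(V_{h+1}^{\mu^*,\un})$ (whose horizon sum telescopes to $H^2$) at the cost of an error term that is exactly the quantity under analysis, so the bounds are circular. The circle is broken by the Markov-game-specific monotone truncation in the update~\eqref{eqn:uq2}, which forces the sandwich $\uv_{h+1}^\re\le\uv_{h+1}\le V_{h+1}^{\um,*}$ and makes every needed variance comparison one-sided, together with the self-bounding step that solves for $x$. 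A secondary subtlety, already handled by the three-way data split in Algorithm~\ref{algo:povi2 main} and the independence argument behind Lemma~\ref{lemma:ref concentration}, is that the reference values $\uv^\re,\ov^\re$ must be statistically independent of $\D_0$ and of each $\D_{h,1}$ for the Bernstein concentration to hold.
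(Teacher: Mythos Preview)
Your proposal is correct and follows essentially the same route as the paper: reduce to $V_1^{\mu^*,\un}(s_1)-\uv_1(s_1)$ via pessimism and the NE property, split into the two bonus sums by Lemma~\ref{lemma:estimation error2}, bound them by Lemmas~\ref{lemma:sum b0} and~\ref{lemma:sum b1}, and then close the self-bounding inequality (the paper packages this last step as Lemma~\ref{lemma:self bound}). Your additional commentary on the variance bookkeeping, the monotone sandwich from the truncated update~\eqref{eqn:uq2}, and the independence afforded by the data split accurately captures the content of those lemmas.
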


\begin{proof}

\begin{align*}
    &V_1^{\mu^*,\un}(s_1)-\uv_1(s_1)\\
    \leq&2\E_{\mu^*,\un}\sum_{h=1}^H\ub_{h,0}(s_h,a_h,b_h)+2\E_{\mu^*,\un}\sum_{h=1}^H\ub_{h,1}(s_h,a_h,b_h)\tag{Lemma \ref{lemma:estimation error2}}\\
    \leq& \widetilde{O}\Sp{\sqrt{\frac{C^*SABH^3}{n}}\sqrt{V_1^{\mu^*,\un}(s_1)-\uv_1(s_1)}}+\widetilde{O}\Sp{\sqrt{\frac{C^*SABH^3}{n}}}\tag{Lemma \ref{lemma:sum b0} and Lemma \ref{lemma:sum b1}}\\
    \leq&  \widetilde{O}\Sp{\sqrt{\frac{C^*SABH^3}{n}}}+\widetilde{O}\Sp{\frac{C^*SABH^3}{n}}\tag{Lemma \ref{lemma:self bound}}\\
    =& \widetilde{O}\Sp{\sqrt{\frac{C^*SABH^3}{n}}}.
\end{align*}
By the definition of NE, we have
$$V_1^*(s_1)-V^{\um,*}_1(s_1)\leq V_1^{\mu^*,\un}(s_1)-\uv_1(s_1)\leq \widetilde{O}\Sp{\sqrt{\frac{C^*SABH^3}{n}}}.$$
The second argument can be proven in a similar manner. Combining these two argument and we can prove that
$$\mathrm{Gap}(\um,\on)\leq\widetilde{O}\Sp{\sqrt{\frac{C^*SABH^3}{n}}}.$$
\end{proof} 

\section{Proofs in Section \ref{sec:extenstion}}
\label{apx:extensions}
\subsection{Uniform Coverage}

\begin{theorem}\label{thm:refuni1}
Suppose $\od_m=\min\Bp{d_h^\rho(s,a,b):h\in[H],(s,a,b)\in\S\times\A\times\B}$ and Assumption \ref{asp2} holds. For any $0<\delta<1$, with probability $1-\delta$, the output policy $\pi=(\um,\on)$ of Algorithm \ref{algo:povi} satisfies
$$V_1^{*}(s_1)-V_1^{\um,*}(s_1)\leq 64\sqrt{\frac{H^5\iota^2}{n\od_m}},V_1^{*,\on}(s_1)-V_1^{*}(s_1)\leq 64\sqrt{\frac{H^5\iota^2}{n\od_m}}.$$
As a result, we have
$$\mathrm{Gap}(\um,\on)\leq\widetilde{O}\Sp{\sqrt{\frac{H^5}{n\od_m}}}.$$
\end{theorem}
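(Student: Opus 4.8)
The plan is to reuse, almost verbatim, the Hoeffding-type machinery already established in Appendix~\ref{apx:hoeffding} — the good event $\G$ of Lemma~\ref{lemma:concentration} and the error decomposition of Lemma~\ref{lemma:pessimism error} — and to exploit the one extra feature that uniform coverage gives us: under Assumption~\ref{asp3} we have $d_h^\rho(s,a,b)\ge\od_m$ for \emph{every} triple, not just on the support of some unilateral strategy. This lets us bound the Hoeffding bonus \eqref{eqn:b} pointwise by a quantity that does not depend on $(s,a,b)$, which is exactly what makes the Cauchy--Schwarz step of Theorem~\ref{thm:ref1} (the source of the $\sqrt{SAB}$ factor) unnecessary here. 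The interesting regime is $\od_m>0$; when $\od_m=0$ the claimed bound is vacuous, and this happens precisely outside Assumption~\ref{asp3}.

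Concretely, I would first observe that Assumption~\ref{asp3}$\Rightarrow$Assumption~\ref{asp2}, so the NE used by Algorithm~\ref{algo:povi} is well defined and Lemma~\ref{lemma:concentration} and Lemma~\ref{lemma:pessimism error} apply on the event $\G$ (which holds with probability $1-\delta$; its third bullet is valid for all $(s,a,b)$ since $d_h^\rho(s,a,b)\ge\od_m>0$). On $\G$, Lemma~\ref{lemma:pessimism error} at $h=1$ gives
$$V_1^{*}(s_1)-V_1^{\um,*}(s_1)\le V_1^{\mu^*,\un}(s_1)-\uv_1(s_1)\le 2\sum_{h=1}^H\E_{\mu^*,\un}\ub_h(s_h,a_h,b_h),$$
and the third bullet of Lemma~\ref{lemma:concentration} with $d_h^\rho(s,a,b)\ge\od_m$ yields the pointwise bound $\tfrac{1}{n_h(s,a,b)\lor1}\le\tfrac{8H\iota}{n\od_m}$, hence $\ub_h(s,a,b)=4\sqrt{H^2\iota/(n_h(s,a,b)\lor1)}\le 4\sqrt{8H^3\iota^2/(n\od_m)}$ for all $h,s,a,b$. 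Since this bound is a constant, each $\E_{\mu^*,\un}\ub_h(s_h,a_h,b_h)$ is controlled by it, and summing over $h\in[H]$ gives $V_1^{*}(s_1)-V_1^{\um,*}(s_1)\le 8H\sqrt{8H^3\iota^2/(n\od_m)}\le 64\sqrt{H^5\iota^2/(n\od_m)}$. The min-player side is symmetric: the $\ob_h$ half of Lemma~\ref{lemma:concentration} and the second inequality of Lemma~\ref{lemma:pessimism error} give $V_1^{*,\on}(s_1)-V_1^{*}(s_1)\le 64\sqrt{H^5\iota^2/(n\od_m)}$. Adding the two and using $\mathrm{Gap}(\um,\on)=V_1^{*,\on}(s_1)-V_1^{\um,*}(s_1)\le\big(V_1^{*,\on}(s_1)-V_1^{*}(s_1)\big)+\big(V_1^{*}(s_1)-V_1^{\um,*}(s_1)\big)$ gives the stated $\widetilde{O}(\sqrt{H^5/(n\od_m)})$ bound.

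I do not expect a real obstacle: the whole argument is a one-line simplification of Theorem~\ref{thm:ref1} once the bonus is bounded uniformly. The only points needing care are (i) verifying that the high-probability lower bound on $n_h(s,a,b)$ holds \emph{simultaneously} for all $(s,a,b)$ — this is where $\od_m>0$ is essential and where a mild burn-in of the form $n\gtrsim\iota/\od_m$ (inherited from Lemma~B.1 of \citet{xie2021policy}) may be needed — and (ii) bookkeeping of the absolute constants and $\iota$ factors, which I would not grind through. I would also note, as a forward pointer, that the same observation applied to the Bernstein-type Algorithm~\ref{algo:povi2 main} upgrades this to the sharp $\widetilde{O}(\sqrt{H^3/(n\od_m)})$ rate of Theorem~\ref{thm:uni}, by applying Cauchy--Schwarz to $d_h^{\mu^*,\un}$ (which sums to $H$) rather than to $\sqrt{d_h^{\mu^*,\un}}$.
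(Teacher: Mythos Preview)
Your proposal is correct and follows essentially the same route as the paper: invoke Lemma~\ref{lemma:pessimism error} on the good event $\G$, replace $d_h^\rho(s,a,b)$ by $\od_m$ pointwise in the Hoeffding bonus bound from Lemma~\ref{lemma:concentration}, and sum over $h$. The paper's proof still writes out a Cauchy--Schwarz step after the $d_h^\rho\ge\od_m$ substitution, but since the summand is then constant in $(s,a,b)$ that step is trivial and your direct computation is equivalent (and arguably cleaner).
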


\begin{proof}

By Lemma \ref{lemma:pessimism error}, with probability $1-\delta$ we have
\begin{align*}
    &V_1^{\mu^*,*}(s_1)-V_1^{\um,*}(s_1)\\
    \leq&2\sum_{h=1}^H\E_{\mu^*,\un}\ub_h(\sabh)\\
    =&2\sum_{h=1}^H\E_{\mu^*,\un}\Mp{4\sqrt{\frac{H^2\iota}{n_h(s,a,b)\lor1}}}\\
    \leq&2\sum_{h=1}^H\E_{\mu^*,\un}\Mp{32\sqrt{\frac{H^3\iota^2}{nd_h^\rho(s,a,b)}}}\tag{Lemma \ref{lemma:concentration}}\\
    =&2\sum_{h=1}^H\sum_{(s,a,b)} d_h^{\mu^*,\un}(s,a,b)\Mp{32\sqrt{\frac{H^3\iota^2}{nd_h^\rho(s,a,b)}}}\\
    \leq& 64\sum_{h=1}^H\sum_{(s,a,b)} d_h^{\mu^*,\un}(s,a,b)\Mp{\sqrt{\frac{H^3\iota^2}{n\od_m}}}\\
    \leq& 64 \sqrt{\sum_{h=1}^H\sum_{(s,a,b)} d_h^{\mu^*,\un}(s,a,b)}\cdot\sqrt{\frac{\sum_{h=1}^H\sum_{(s,a,b)}d_h^{\mu^*,\un}(s,a,b)C^*H^3\iota^2}{n\od_m}}\tag{Cauchy-Schwarz Inequality}\\
    =&\sqrt{H}\cdot\sqrt{\frac{H^4\iota^2}{n\od_m}}\\
    =&64\sqrt{\frac{H^5\iota^2}{n\od_m}}.
\end{align*}
\end{proof}

\begin{theorem}\label{thm:refuni2}
Suppose $\od_m=\min\Bp{d_h^\rho(s,a,b):h\in[H],(s,a,b)\in\S\times\A\times\B}$ and Assumption \ref{asp2} holds. For any $0<\delta<1$ and strategy $\mu,\nu$, with probability $1-\delta$, the pessimistic value $\uv_h$ and optimistic estimate $\ov_h$ of Algorithm \ref{algo:povi} satisfies
$$\E_{\mu^*,\nu} \Mp{V_h^{*}(s_h)-\uv_h(s_h)}\leq 64\sqrt{\frac{H^5\iota^2}{n\od_m}},\E_{\mu,\nu^*} \Mp{\ov_h(s_h)-V_h^{*}(s_h)}\leq 64\sqrt{\frac{H^5\iota^2}{n\od_m}},$$
where $s_h$ is sampled from the trajectory following the strategy in the expectation at timestep $h$.
\end{theorem}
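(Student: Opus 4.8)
The target (Theorem~\ref{thm:refuni2}) is the uniform-coverage analogue of Theorem~\ref{thm:ref}, and the plan is to replay that proof, only replacing the $C^*$-based bonus estimate of Theorem~\ref{thm:ref1} by the $\od_m$-based estimate of Theorem~\ref{thm:refuni1}. I will prove only the first inequality; the second follows by the symmetric argument using $\oq_h,\ov_h,\om_h$ together with the $\ov$-versions of Lemma~\ref{Lemma:pessimism1} and Lemma~\ref{lemma:pessimism error}.

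First I would condition on the good event $\G$ of Lemma~\ref{lemma:concentration}, which holds with probability at least $1-\delta$. On $\G$, combining the pessimism $\uv_h\le V_h^{\um,*}$ of Lemma~\ref{Lemma:pessimism1} with Lemma~\ref{lemma:pessimism error} gives, for every state $s$,
$$V_h^{*}(s)-\uv_h(s)\;\le\;2\,\E_{\mu^*,\un}\Mp{\sum_{t=h}^H\ub_t(s_t,a_t,b_t)\,\Big|\,s_h=s}.$$
Next I would take the expectation of this inequality over $s_h$ drawn from the trajectory of $(\mu^*,\nu)$, introducing the glued min-player strategy $\nu':=(\nu_1,\dots,\nu_{h-1},\un_h,\dots,\un_H)$. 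Since $\nu$ and $\nu'$ agree on timesteps $1,\dots,h-1$ and the max player always plays $\mu^*$, the law of $s_h$ is the same under $(\mu^*,\nu)$ and $(\mu^*,\nu')$; and conditionally on $s_h$ the inner expectation runs under $(\mu^*,\un)$, which is exactly the tail of $(\mu^*,\nu')$. It matters here that $\un_h$ is Markovian and that, on $\G$, each $\ub_t$ is a fixed data-dependent function, so the tower property applies cleanly and yields
$$\E_{\mu^*,\nu}\Mp{V_h^{*}(s_h)-\uv_h(s_h)}\;\le\;2\sum_{t=h}^H\E_{\mu^*,\nu'}\Mp{\ub_t(s_t,a_t,b_t)}.$$

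Finally I would bound the right-hand side exactly as in the proof of Theorem~\ref{thm:refuni1}: substitute $\ub_t=4\sqrt{H^2\iota/(n_t(s,a,b)\lor1)}$, apply the third bound of Lemma~\ref{lemma:concentration} to replace $1/(n_t(s,a,b)\lor1)$ by $8H\iota/(n\,d_t^\rho(s,a,b))$, lower bound $d_t^\rho(s,a,b)\ge\od_m$ (which is all the coverage needed here, since under uniform coverage every $(s,a,b)$, in particular every one visited by $(\mu^*,\nu')$, is covered), and use $\sum_{t=h}^H\sum_{(s,a,b)}d_t^{\mu^*,\nu'}(s,a,b)\le H$ together with one Cauchy--Schwarz step. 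This produces the claimed bound $64\sqrt{H^5\iota^2/(n\od_m)}$.

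The only genuinely non-mechanical step is the strategy-gluing identity in the third paragraph --- showing that ``play $(\mu^*,\nu)$ up to step $h$, then switch to $(\mu^*,\un)$'' may be replaced by the single Markovian strategy $(\mu^*,\nu')$ inside the expectation. Everything else is a line-by-line repetition of the Hoeffding-type computation already carried out for Theorem~\ref{thm:refuni1}, so I expect no further obstacles.
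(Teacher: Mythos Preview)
Your proposal is correct and follows essentially the same approach as the paper's own proof: condition on the good event $\G$, apply Lemma~\ref{lemma:pessimism error} pointwise, glue $\nu$ with $\un$ into the concatenated strategy $\nu'=(\nu_1,\dots,\nu_{h-1},\un_h,\dots,\un_H)$ to collapse the nested expectation, and then replay the Hoeffding computation of Theorem~\ref{thm:refuni1} with $(\mu^*,\nu')$ in place of $(\mu^*,\un)$. One tiny remark: the pessimism $\uv_h\le V_h^{\um,*}$ from Lemma~\ref{Lemma:pessimism1} is not actually needed to obtain $V_h^{*}(s)-\uv_h(s)\le 2\,\E_{\mu^*,\un}[\sum_t\ub_t\mid s_h=s]$, since this already follows from $V_h^{*}\le V_h^{\mu^*,\un}$ together with the second inequality of Lemma~\ref{lemma:pessimism error}.
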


\begin{proof}
By Lemma \ref{lemma:pessimism error}, under good event $\G$ for all state $s$ we have
\begin{align*}
    &V_h^{*}(s)-V_h^{\um,*}(s)
    \leq 2\sum_{t=h}^H\E_{\mu^*,\un} \Mp{\ub_h(s_t,a_t,b_t)|s_h=s}
\end{align*}
We define $\nu'=(\nu_1,\cdots,\nu_{h-1},\un_h,\cdots,\un_H)$. Then we have
\begin{align*}
    \E_{\mu^*,\nu} \Mp{V_h^{*}(s_h)-\uv_h(s_h)}\leq& \E_{\mu^*,\nu}\Mp{2\sum_{t=h}^H\E_{\mu^*,\un} \Mp{\ub_h(s_t,a_t,b_t)|s_h=s}|s}\\
    =&2\sum_{t=h}^H\E_{\mu^*,\nu'} \Mp{\ub_h(s_t,a_t,b_t)}.
\end{align*}
Then following the proof of Theorem \ref{thm:refuni1}, we can prove the argument.

\end{proof}

\begin{lemma}\label{lemma:sum b0 uni}
Suppose $\od_m=\min\Bp{d_h^\rho(s,a,b):h\in[H],(s,a,b)\in\S\times\A\times\B}$ and Assumption \ref{asp4} holds. For $n\geq H^3/\od_m$, we have
$$\E_{\mu^*,\un}\sum_{h=1}^H\ub_{h,0}(s_h,a_h,b_h)\leq\widetilde{O}\Sp{\sqrt{\frac{H^3}{n\od_m}}\sqrt{V_1^{\mu^*,\un}(s_1)-\uv_1(s_1)}}+\widetilde{O}\Sp{\sqrt{\frac{H^3}{n\od_m}}}.$$
\end{lemma}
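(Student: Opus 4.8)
The plan is to retrace the proof of Lemma~\ref{lemma:sum b0} line by line, changing only two things to exploit the uniform coverage structure. Everywhere that proof uses the unilateral concentrability bound $d_h^{\mu^*,\un}(s,a,b)/d_h^\rho(s,a,b)\le C^*$, I instead use the cruder $1/d_h^\rho(s,a,b)\le 1/\od_m$; and everywhere it applies Cauchy-Schwarz against the counting measure on $[H]\times\S\times\A\times\B$ (paying a $\sqrt{SABH}$ factor), I apply it against the measure $(h,s,a,b)\mapsto d_h^{\mu^*,\un}(s,a,b)$, whose total mass is $\sum_{h}\sum_{(s,a,b)}d_h^{\mu^*,\un}(s,a,b)=H$. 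This is the \emph{Cauchy-Schwarz on $d_h^{\mu^*,\un}$ rather than $\sqrt{d_h^{\mu^*,\un}}$} idea announced after Theorem~\ref{thm:uni}; it is legitimate precisely because, after replacing $1/d_h^\rho$ by $1/\od_m$, there is no residual $1/d_h^\rho$ left inside the square root, so the full weight $d_h^{\mu^*,\un}(s,a,b)$ can be kept outside. It is exactly this step that turns $\sqrt{SABH}$ into $\sqrt H$ and removes the $AB$ dependence.

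Concretely, the steps in order: (1) expand $\ub_{h,0}$ via \eqref{eqn:b0} and invoke Lemma~\ref{lemma:ref concentration} to replace the empirical variance $\var_{\widehat{P}_{h,0}}(\uv^\re_{h+1})$ by the population variance $\var_{P_h}(\uv^\re_{h+1})$ up to an additive $\widetilde{O}(H/(n_{h,0}\lor 1))$ and to replace $1/(n_{h,0}\lor 1)$ by $\widetilde{O}(1/(nd_h^\rho(s,a,b)))$; (2) write $\E_{\mu^*,\un}[\,\cdot\,]=\sum_{(s,a,b)}d_h^{\mu^*,\un}(s,a,b)[\,\cdot\,]$ and bound every $1/d_h^\rho(s,a,b)$ by $1/\od_m$, so the bonus splits into a main term $\sum_{h,(s,a,b)}d_h^{\mu^*,\un}(s,a,b)\sqrt{\var_{P_h}(\uv^\re_{h+1})\iota/(n\od_m)}$ and a lower-order term that collapses to $\widetilde{O}(H^2/(n\od_m))$; (3) apply Cauchy-Schwarz against $d_h^{\mu^*,\un}$ to bound the main term by $\sqrt H\cdot\sqrt{\tfrac{\iota}{n\od_m}\sum_{h,(s,a,b)}d_h^{\mu^*,\un}(s,a,b)\var_{P_h}(\uv^\re_{h+1})}$; (4) bound $\sum_{h,(s,a,b)}d_h^{\mu^*,\un}\var_{P_h}(\uv^\re_{h+1})$ by invoking Lemma~\ref{lemma:variance decomposition} to swap the reference value for the true value at a cost of $2H\sum_h\E_{\mu^*,\un}[V^{\mu^*,\un}_{h+1}(s_{h+1})-\uv^\re_{h+1}(s_{h+1})]$, then the total-variance Lemma~\ref{lemma:sum of var} to bound the leftover $\sum_h\E_{\mu^*,\un}\var_{P_h}(V^{\mu^*,\un}_{h+1})$ by $H^2$; (5) decompose $V^{\mu^*,\un}_{h+1}-\uv^\re_{h+1}=(V^{\mu^*,\un}_{h+1}-V^*_{h+1})+(V^*_{h+1}-\uv^\re_{h+1})$, control the first summand's contribution by $H\bigl(V^{\mu^*,\un}_1(s_1)-\uv_1(s_1)\bigr)$ using the pessimism chain $\uv_{h+1}\le V^{\um,*}_{h+1}\le V^*_{h+1}$ together with Lemma~\ref{lemma:max estimation error}, and the second by the uniform-coverage reference guarantee Theorem~\ref{thm:refuni2}; (6) collect: the $H^2$ part yields the clean $\widetilde{O}(\sqrt{H^3/(n\od_m)})$, the $V^{\mu^*,\un}_1(s_1)-\uv_1(s_1)$ part yields the self-bounding $\widetilde{O}(\sqrt{H^3/(n\od_m)}\,\sqrt{V^{\mu^*,\un}_1(s_1)-\uv_1(s_1)})$, and under the burn-in $n\ge H^3/\od_m$ all remaining cross terms, including the one fed in by Theorem~\ref{thm:refuni2}, are of lower order and get absorbed.

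The only genuinely delicate point I anticipate is step~(3): in the proof of Lemma~\ref{lemma:sum b0} the weight $d_h^{\mu^*,\un}$ is split as $\sqrt{d_h^{\mu^*,\un}}\cdot\sqrt{d_h^{\mu^*,\un}}$, one copy being absorbed to form the counting measure of cardinality $SABH$; here one must instead keep the full weight outside, which is only valid because the $1/\od_m$ replacement has already eliminated the $1/d_h^\rho$ factor that would otherwise have to be paired with the remaining $\sqrt{d_h^{\mu^*,\un}}$. Getting this pairing right is precisely what produces the $AB$-free rate; everything else is bookkeeping identical to Lemma~\ref{lemma:sum b0} with $C^*/n$ replaced throughout by $1/(n\od_m)$ and $SAB$ deleted. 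The companion estimate for $\sum_h\ub_{h,1}$ needed by Theorem~\ref{thm:uni} follows by the same two substitutions applied to the proof of Lemma~\ref{lemma:sum b1}.
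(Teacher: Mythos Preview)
Your proposal is correct and follows essentially the same approach as the paper's own proof: the two key deviations from Lemma~\ref{lemma:sum b0} you single out---replacing $1/d_h^\rho$ by $1/\od_m$ before Cauchy--Schwarz, and then applying Cauchy--Schwarz against the probability measure $d_h^{\mu^*,\un}$ (total mass $H$) rather than the counting measure---are exactly what the paper does, and the remaining steps (Lemma~\ref{lemma:variance decomposition}, Lemma~\ref{lemma:sum of var}, Lemma~\ref{lemma:max estimation error}, Theorem~\ref{thm:refuni2}, and the burn-in absorption) match line for line.
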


\begin{proof}

\begin{align*}
    &\E_{\mu^*,\un}\sum_{h=1}^H\ub_{h,0}(s_h,a_h,b_h)\\
    =&c\E_{\mu^*,\un}\sum_{h=1}^H\Sp{\sqrt{\frac{\var_{\widehat{P}_{h,0}(s,a,b)}(\uv_{h+1}^\re)\iota}{n_{h,0}(s,a,b)\lor 1}}+\frac{H\iota}{n_{h,0}(s,a,b)\lor 1}}\\
    \leq& c\E_{\mu^*,\un}\sum_{h=1}^H\Sp{\sqrt{\frac{c\var_{P_{h}(s,a,b)}(\uv_{h+1}^\re)\iota}{nd_h^\rho(s,a,b)}}+\frac{cH\iota}{nd_h^\rho(s,a,b)}+\frac{cH\iota}{nd_h^\rho(s,a,b)}}\\
    \leq& c^2\sum_{h=1}^H\sum_{(s,a,b)}d^{\mu^*,\un}_h(s,a,b)\Sp{\sqrt{\frac{\var_{P_{h}(s,a,b)}(\uv_{h+1}^\re)\iota}{n\od_m}}+\frac{H\iota}{n\od_m}}\\
    \leq& c^2\sqrt{\sum_{h=1}^H\sum_{(s,a,b)}d^{\mu^*,\un}_h(s,a,b)}\Sp{\sqrt{\frac{\sum_{h=1}^H\sum_{(s,a,b)}d^{\mu^*,\un}_h(s,a,b)\var_{P_{h}(s,a,b)}(\uv_{h+1}^\re)\iota}{n\od_m}}+\frac{H\iota}{n\od_m}}\tag{Cauchy-Schwarz inequality}\\
    \leq& c^2 \sqrt{H}\cdot\sqrt{\frac{\iota\sum_{h=1}^H\sum_{(s,a,b)}d^{\mu^*,\un}_h(s,a,b)\var_{P_{h}(s,a,b)}(\uv_{h+1}^\re)}{n\od_m}}+\frac{c^2H\iota}{n\od_m}\\
    \leq& c^2 \sqrt{H\iota}\cdot\sqrt{\frac{\sum_{h=1}^H\E_{\mu^*,\un}\Mp{\var_{P_{h}(s,a,b)}(\uv_{h+1}^\re)}}{n\od_m}}+\frac{c^2H\iota}{n\od_m}\\
    \leq& c^2 \sqrt{H\iota}\cdot\sqrt{\frac{\sum_{h=1}^H\E_{\mu^*,\un}\Mp{\var_{P_{h}(s,a,b)}(V_{h+1}^{\mu^*,\un})+2H[P_{h}(V_{h+1}^{\mu^*,\un}-\uv_{h+1}^\re)](s,a,b)}}{n\od_m}}+\frac{c^2H\iota}{n\od_m}\tag{Lemma \ref{lemma:variance decomposition}}\\
    \leq& c^2 \sqrt{H\iota}\cdot\sqrt{\frac{H^2+2H\sum_{h=1}^H\E_{\mu^*,\un}\Mp{V_{h+1}^{\mu^*,\un}(s_{h+1})-\uv_{h+1}^\re(s_{h+1})}}{n\od_m}}+\frac{c^2H\iota}{n\od_m}\tag{Lemma \ref{lemma:sum of var}}\\
    =& c^2 \sqrt{H\iota}\cdot\sqrt{\frac{H^2+2H\sum_{h=1}^H\E_{\mu^*,\un}\Mp{V_{h+1}^{\mu^*,\un}(s_{h+1})-V_{h+1}^{*}(s_{h+1})+V_{h+1}^{*}(s_{h+1})-\uv_{h+1}^\re(s_{h+1})}}{n\od_m}}+\frac{c^2H\iota}{n\od_m}\\
    \leq& c^2 \sqrt{H\iota}\cdot\sqrt{\frac{H^2+2H^2(V_{1}^{\mu^*,\un}(s_{1})-\uv_1(s_{1}))+128H\sqrt{\frac{H^5\iota^2}{n_\re\od_m}}}{n\od_m}}+\frac{c^2H\iota}{n\od_m}\tag{Lemma \ref{lemma:max estimation error} and Theorem \ref{thm:refuni2}}\\
    \leq& \frac{c^2\sqrt{H^3\iota}}{\sqrt{n\od_m}}+\frac{c^2\sqrt{384H^2\iota\sqrt{H^5\iota^2}}}{(n\od_m)^{3/4}}+\frac{c\sqrt{2H^3\iota}}{\sqrt{n\od_m}}\sqrt{V_{1}^{\mu^*,\un}(s_{1})-\uv_1(s_{1})}+\frac{c^2H\iota}{n\od_m}\\
    \leq&\widetilde{O}\Sp{\sqrt{\frac{H^3}{n\od_m}}\sqrt{V_1^{\mu^*,\un}(s_1)-\uv_1(s_1)}}+\widetilde{O}\Sp{\sqrt{\frac{H^3}{n\od_m}}}.\tag{$n\geq H^3/\od_m$}
\end{align*}
\end{proof}

\begin{lemma}\label{lemma:sum b1 uni}
For $n\geq H^4/\od_m$, we have
$$\E_{\mu^*,\un}\sum_{h=1}^H\ub_{h,1}(s_h,a_h,b_h)\leq \widetilde{O}\Sp{\sqrt{\frac{H^3}{n\od_m}}}.$$
\end{lemma}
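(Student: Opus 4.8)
The plan is to mirror the proof of Lemma \ref{lemma:sum b1} almost line for line, replacing every use of the coverage bound $1/d_h^\rho(s,a,b)\leq C^*/d_h^{\mu^*,\un}(s,a,b)$ by the uniform bound $1/d_h^\rho(s,a,b)\leq 1/\od_m$, and replacing Theorem \ref{thm:ref} by its uniform-coverage counterpart Theorem \ref{thm:refuni2}. The only structural change — exactly the one already exploited in Theorem \ref{thm:refuni1} — is that $1/\od_m$ is a fixed constant, so in the Cauchy--Schwarz step we can split the summand against the full weights $d_h^{\mu^*,\un}(s,a,b)$ (whose total over $h,s,a,b$ is $H$) rather than against $\sqrt{d_h^{\mu^*,\un}(s,a,b)}$; this is what prevents any $SAB$ factor from appearing, so that $\sqrt{C^*SABH^3/n}$ becomes $\sqrt{H^3/(n\od_m)}$ and the burn-in $C^*SABH^4$ becomes $H^4/\od_m$.

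Concretely, I would first expand $\ub_{h,1}$ via \eqref{eqn:b1} and invoke the concentration event $\G'$ (Lemma \ref{lemma:ref concentration}) to pass from the empirical variance $\var_{\widehat P_{h,1}}(\uv_{h+1}-\uv_{h+1}^\re)$ and empirical count $n_{h,1}(s,a,b)$ to population quantities, using $1/(n_{h,1}(s,a,b)\lor 1)\leq cH\iota/(nd_h^\rho(s,a,b))\leq cH\iota/(n\od_m)$ and bounding the variance by the raw second moment $[P_h(\uv_{h+1}-\uv_{h+1}^\re)^2](s,a,b)$. Writing the expectation as a sum over $(s,a,b)$ with weights $d_h^{\mu^*,\un}$, the leading term becomes $\sqrt{H\iota/(n\od_m)}$ times $\sum_{h,s,a,b}d_h^{\mu^*,\un}(s,a,b)\sqrt{[P_h(\uv_{h+1}-\uv_{h+1}^\re)^2](s,a,b)}$, plus a lower-order $\widetilde O(H^3/(n\od_m))$ additive term. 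Applying Cauchy--Schwarz against $d_h^{\mu^*,\un}$ turns that sum into $\sqrt{H}\cdot\sqrt{\sum_{h}\E_{\mu^*,\un}\big[[P_h(\uv_{h+1}-\uv_{h+1}^\re)^2](s_h,a_h,b_h)\big]}$.

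Next I would use the monotonicity sandwich $V_{h+1}^*\geq\uv_{h+1}\geq\uv_{h+1}^\re$ (Lemma \ref{lemma:vleqvref} together with pessimism) to bound $(\uv_{h+1}-\uv_{h+1}^\re)^2\leq (V_{h+1}^*-\uv_{h+1}^\re)^2\leq H(V_{h+1}^*-\uv_{h+1}^\re)$, push $P_h$ through so the state-action occupancy becomes $d_{h+1}^{\mu^*,\un}$, and then invoke Theorem \ref{thm:refuni2} (with $n_\re=n/3$) at each timestep to bound $\sum_h\E_{\mu^*,\un}[V_{h+1}^*(s_{h+1})-\uv_{h+1}^\re(s_{h+1})]$ by $\widetilde O\big(H\sqrt{H^5/(n\od_m)}\big)$. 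Substituting back, the leading term is $\widetilde O\big(\sqrt{H\cdot H\cdot H^2\cdot\sqrt{H^5/(n\od_m)}/(n\od_m)}\big)$, which under the burn-in $n\geq H^4/\od_m$ collapses to $\widetilde O(\sqrt{H^3/(n\od_m)})$, and the residual additive terms are dominated under the same burn-in.

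The main obstacle is purely bookkeeping: tracking the powers of $H$ and $\iota$ through the self-referential variance estimate and verifying that $n\geq H^4/\od_m$ is precisely what is needed to make every lower-order contribution — the $\widetilde O(H^3/(n\od_m))$ additive term and the term arising from the reference error via Theorem \ref{thm:refuni2} — at most $\widetilde O(\sqrt{H^3/(n\od_m)})$. No new idea beyond the $C^*$ case is required; the only genuine difference is performing Cauchy--Schwarz against $d_h^{\mu^*,\un}$ rather than its square root, which removes the $SAB$ dependence.
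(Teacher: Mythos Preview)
Your proposal is correct and follows essentially the same route as the paper's own proof: expand $\ub_{h,1}$, invoke the concentration event to pass to population quantities with $1/d_h^\rho\leq 1/\od_m$, bound the variance by the second moment, apply Cauchy--Schwarz against the full weights $d_h^{\mu^*,\un}$ (this is precisely the step that removes the $SAB$ factor), use the sandwich $\uv^\re_{h+1}\leq\uv_{h+1}\leq V^*_{h+1}$ together with Theorem \ref{thm:refuni2}, and finish with the burn-in. The sequence of inequalities and the supporting lemmas you cite match the paper line for line.
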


\begin{proof}

\begin{align*}
    &\E_{\mu^*,\un}\sum_{h=1}^H\ub_{h,1}(s_h,a_h,b_h)\\
    =&c\E_{\mu^*,\un}\sum_{h=1}^H\Sp{\sqrt{\frac{\var_{\widehat{P}_{h,0}(s,a,b)}(\uv_{h+1}-\uv_{h+1}^\re)\iota}{n_{h,1}(s,a,b)\lor 1}}+\frac{H\iota}{n_{h,1}(s,a,b)\lor 1}}\\
    \leq& c\E_{\mu^*,\un}\sum_{h=1}^H\Sp{\sqrt{\frac{cH\var_{P_{h}(s,a,b)}(\uv_{h+1}-\uv_{h+1}^\re)\iota}{nd_h^\rho(s,a,b)}}+\frac{cH^2\iota}{nd_h^\rho(s,a,b)}+\frac{cH^2\iota}{nd_h^\rho(s,a,b)}}\\
    \leq & c^2\E_{\mu^*,\un}\sum_{h=1}^H\Sp{\sqrt{\frac{H\Mp{P_{h}(\uv_{h+1}-\uv_{h+1}^\re)^2}(s,a,b)\iota}{nd_h^\rho(s,a,b)}}+\frac{H^2\iota}{nd_h^\rho(s,a,b)}}\\
    \leq&c^2\sum_{h=1}^H\sum_{(s,a,b)}d_h^{\mu^*,\un}(s,a,b)\Sp{\sqrt{\frac{H\Mp{P_{h}(\uv_{h+1}-\uv_{h+1}^\re)^2}(s,a,b)\iota}{n\od_m}}+\frac{H^2\iota}{n\od_m}}\\
    \leq&c^2\sqrt{\sum_{h=1}^H\sum_{(s,a,b)}d_h^{\mu^*,\un}(s,a,b)}\Sp{\sqrt{\frac{\sum_{h=1}^H\sum_{(s,a,b)}Hd_h^{\mu^*,\un}(s,a,b)\Mp{P_{h}(\uv_{h+1}-\uv_{h+1}^\re)^2}(s,a,b)\iota}{n\od_m}}+\frac{H^2\iota}{n\od_m}}\tag{Cauchy-Schwarz Inequality}\\ 
    \leq& c^2\sqrt{H}\sqrt{\frac{H\iota\sum_{h=1}^H\sum_{(s,a,b)}d_h^{\mu^*,\un}(s,a,b)\Mp{P_{h}(\uv_{h+1}-\uv_{h+1}^\re)^2}(s,a,b)}{n\od_m}}+\frac{c^2H^3\iota}{n\od_m}\\
    \leq& c^2\sqrt{H\iota}\sqrt{\frac{H\iota\sum_{h=1}^H\sum_{(s,a,b)}d_h^{\mu^*,\un}(s,a,b)\Mp{P_{h}(V^*_{h+1}-\uv_{h+1}^\re)^2}(s,a,b)}{n\od_m}}+\frac{c^2H^3\iota}{n\od_m} \tag{$V_{h+1}^* \geq \uv_{h+1} \geq \uv_{h+1}^\re$}\\
    =& c^2\sqrt{H\iota}\sqrt{\frac{H^2\sum_{h=1}^H\sum_{s}d_{h+1}^{\mu^*,\un}(s)(V^*_{h+1}(s)-\uv^\re_{h+1}(s))}{n\od_m}}+\frac{c^2H^3\iota}{n\od_m}\\
    \leq& c^2\sqrt{H\iota}\sqrt{\frac{H^264\sqrt{\frac{H^5\iota^2}{n_\re\od_m}}}{n\od_m}}+\frac{c^2H^3\iota}{n\od_m}\tag{Theorem \ref{thm:refuni2}}\\
    =&c^2\sqrt{\frac{192H^3\iota\sqrt{H^5\iota^2}}{(n\od_m)^{3/2}}}+\frac{c^2H^3\iota}{n\od_m}\\
    \leq& \widetilde{O}\Sp{\sqrt{\frac{H^3}{n\od_m}}}\tag{$n\geq H^4/\od_m$}.
\end{align*}

\end{proof}

\begin{theorem}\label{thm:uni apx}
Suppose $\od_m=\min\Bp{d_h^\rho(s,a,b):h\in[H],(s,a,b)\in\S\times\A\times\B}$ and Assumption \ref{asp4} holds. For any $0<\delta<1$ and $n\geq H^4/\od_m$, with probability $1-\delta$, the output policy $\pi=(\um,\on)$ of Algorithm \ref{algo:povi2 main} satisfies
$$V_1^*(s_1)-V^{\um,*}_1(s_1)\leq\widetilde{O}\Sp{\sqrt{\frac{H^3}{n\od_m}}},V^{*,\on}_1(s_1)-V_1^*(s_1)\leq\widetilde{O}\Sp{\sqrt{\frac{H^3}{n\od_m}}}.$$
As a result, we have
$$\mathrm{Gap}(\um,\on)\leq\widetilde{O}\Sp{\sqrt{\frac{H^3}{n\od_m}}}.$$
\end{theorem}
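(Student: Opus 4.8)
The plan is to follow the template of the proof of Theorem~\ref{thm:positive}, replacing the concentrability-based bonus estimates by their uniform-coverage analogues and then passing from value-estimation error to the duality gap via weak duality. Everything is carried out on the concentration event $\G'$ of Lemma~\ref{lemma:ref concentration}, which holds with probability at least $1-\delta$; the one structural fact that uniform concentration buys us is $d_h^\rho(s,a,b)\geq\od_m$ for all $h,s,a,b$, so that the count bounds in Lemma~\ref{lemma:ref concentration} read $\frac{1}{n_{h,0}(s,a,b)\lor1}\leq\widetilde{O}(\frac{1}{n\od_m})$ and $\frac{1}{n_{h,1}(s,a,b)\lor1}\leq\widetilde{O}(\frac{H}{n\od_m})$ with no $C^*$ present.

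First I would control the max player's estimation error $X:=V_1^{\mu^*,\un}(s_1)-\uv_1(s_1)$. Lemma~\ref{lemma:estimation error2} gives $X\leq 2\E_{\mu^*,\un}\sum_{h=1}^H\ub_{h,0}(s_h,a_h,b_h)+2\E_{\mu^*,\un}\sum_{h=1}^H\ub_{h,1}(s_h,a_h,b_h)$. Plugging in Lemma~\ref{lemma:sum b1 uni}, which gives $\E_{\mu^*,\un}\sum_{h=1}^H\ub_{h,1}\leq\widetilde{O}(\sqrt{H^3/(n\od_m)})$, and Lemma~\ref{lemma:sum b0 uni}, which gives the self-referential bound $\E_{\mu^*,\un}\sum_{h=1}^H\ub_{h,0}\leq\widetilde{O}(\sqrt{H^3/(n\od_m)}\cdot\sqrt{X})+\widetilde{O}(\sqrt{H^3/(n\od_m)})$, yields a self-bounding inequality $X\leq\widetilde{O}(\sqrt{H^3/(n\od_m)})\sqrt{X}+\widetilde{O}(\sqrt{H^3/(n\od_m)})$, which the quadratic-inequality lemma (Lemma~\ref{lemma:self bound}) resolves to $X\leq\widetilde{O}(\sqrt{H^3/(n\od_m)})$; the leftover $\widetilde{O}(H^3/(n\od_m))$ term is dominated once $n\geq H^4/\od_m$, matching both the burn-in of Lemma~\ref{lemma:sum b1 uni} and the theorem hypothesis. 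Using the pessimism guarantee $\uv_h\leq V^{\um,*}_h$ for Algorithm~\ref{algo:povi2 main} together with the Nash inequality $V_1^*\leq V_1^{\mu^*,\un}$ then gives $V_1^*(s_1)-V_1^{\um,*}(s_1)\leq X\leq\widetilde{O}(\sqrt{H^3/(n\od_m)})$. The symmetric min-player argument, run through $\ov^\re$, the bonuses $\ob_{h,0},\ob_{h,1}$, the optimism guarantee $\ov_h\geq V^{*,\on}_h$, the min-player analogues of Lemmas~\ref{lemma:estimation error2}, \ref{lemma:sum b0 uni} and \ref{lemma:sum b1 uni}, and the reference guarantee of Theorem~\ref{thm:refuni2}, gives $V_1^{*,\on}(s_1)-V_1^*(s_1)\leq\widetilde{O}(\sqrt{H^3/(n\od_m)})$. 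Adding the two one-sided bounds, $\mathrm{Gap}(\um,\on)=(V_1^{*,\on}(s_1)-V_1^*(s_1))+(V_1^*(s_1)-V_1^{\um,*}(s_1))\leq\widetilde{O}(\sqrt{H^3/(n\od_m)})$, which is the claim.

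The content that actually differs from Theorem~\ref{thm:positive}, and hence the place to be careful, is hidden inside Lemma~\ref{lemma:sum b0 uni} (and its min-player mirror): there Cauchy--Schwarz is applied to $d_h^{\mu^*,\un}(s,a,b)$ rather than to $\sqrt{d_h^{\mu^*,\un}(s,a,b)}$, and since $\sum_{s,a,b}d_h^{\mu^*,\un}(s,a,b)=1$ this removes the $\sqrt{SAB}$ factor that the $C^*$ route would incur, leaving only an extra $\sqrt{H}$; this is exactly why the final rate $\sqrt{H^3/(n\od_m)}$ carries no explicit $SAB$. The remaining $H^2$ improvement over the Hoeffding rate $\sqrt{H^5/(n\od_m)}$ of Theorem~\ref{thm:refuni1} is inherited from the Bernstein machinery: one $H$ from the total-variance lemma (Lemma~\ref{lemma:sum of var}) controlling $\sum_h\E_{\mu^*,\un}\var_{P_h}(V^{\mu^*,\un}_{h+1})\leq H^2$, and one $H$ from bounding the reference-advantage variance $\var_{P_h}(\uv_{h+1}-\uv^\re_{h+1})$ by its second moment and then invoking the pointwise sandwich $\uv^\re_{h+1}\leq\uv_{h+1}\leq V^*_{h+1}$ with Theorem~\ref{thm:refuni2}. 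Verifying that each of these variance-decomposition and telescoping steps costs only a single factor of $\od_m^{-1}$, rather than one compounded over the horizon, is the main thing to check; modulo that, the argument is a bookkeeping substitution into the proof of Theorem~\ref{thm:positive}.
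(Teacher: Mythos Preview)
Your proposal is correct and follows essentially the same route as the paper: apply Lemma~\ref{lemma:estimation error2} to bound $X=V_1^{\mu^*,\un}(s_1)-\uv_1(s_1)$ by the expected bonus sums, plug in Lemmas~\ref{lemma:sum b0 uni} and~\ref{lemma:sum b1 uni}, resolve the self-bounding inequality via Lemma~\ref{lemma:self bound}, pass to $V_1^*(s_1)-V_1^{\um,*}(s_1)$ by pessimism and weak duality, and repeat symmetrically for the min player. Your commentary on why the $\sqrt{SAB}$ factor disappears (Cauchy--Schwarz on $d_h^{\mu^*,\un}$ rather than on $\sqrt{d_h^{\mu^*,\un}}$) is exactly the point the paper highlights after Theorem~\ref{thm:uni}.
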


\begin{proof}

\begin{align*}
    &V_1^{\mu^*,\un}(s_1)-\uv_1(s_1)\\
    \leq&2\E_{\mu^*,\un}\sum_{h=1}^H\ub_{h,0}(s_h,a_h,b_h)+2\E_{\mu^*,\un}\sum_{h=1}^H\ub_{h,1}(s_h,a_h,b_h)\tag{Lemma \ref{lemma:estimation error2}}\\
    \leq& \widetilde{O}\Sp{\sqrt{\frac{H^3}{n\od_m}}\sqrt{V_1^{\mu^*,\un}(s_1)-\uv_1(s_1)}}+\widetilde{O}\Sp{\sqrt{\frac{H^3}{n\od_m}}}\tag{Lemma \ref{lemma:sum b0 uni} and Lemma \ref{lemma:sum b1 uni}}\\
    \leq&  \widetilde{O}\Sp{\sqrt{\frac{H^3}{n\od_m}}}+\widetilde{O}\Sp{\frac{H^3}{n\od_m}}\tag{Lemma \ref{lemma:self bound}}\\
    =& \widetilde{O}\Sp{\sqrt{\frac{H^3}{n\od_m}}}.
\end{align*}
By the definition of NE, we have
$$V_1^*(s_1)-V^{\um,*}_1(s_1)\leq V_1^{\mu^*,\un}(s_1)-\uv_1(s_1)\leq \widetilde{O}\Sp{\sqrt{\frac{H^3}{n\od_m}}}.$$
The second argument can be proven in a similar manner. Combining two arguments together and we can derive that
$$\mathrm{Gap}(\um,\on)\leq\widetilde{O}\Sp{\sqrt{\frac{H^3}{n\od_m}}}.$$
\end{proof}

\subsection{Turn-based Markov Games}

For turn-based Markov games, there always exists a pure (deterministic) NE equilibrium strategy. As a result, we can have that $\mu^*$, $\nu^*$, $\um$, $\un$, $\om$, $\on$ are all pure strategy.

\begin{theorem}\label{thm:ref1 tbmg}
Suppose Assumption \ref{asp2} holds. For any $0<\delta<1$,  with probability $1-\delta$, the output policy $\pi=(\um,\on)$ of Algorithm \ref{algo:povi} satisfies
$$V_1^{*}(s_1)-V_1^{\um,*}(s_1)\leq 64\sqrt{\frac{C^*SH^5\iota^2}{n}},V_1^{*,\on}(s_1)-V_1^{*}(s_1)\leq 64\sqrt{\frac{C^*SH^5\iota^2}{n}}.$$
As a result, we have
$$\mathrm{Gap}(\um,\on)\leq\widetilde{O}\Sp{\sqrt{\frac{C^*SH^5}{n}}}.$$
\end{theorem}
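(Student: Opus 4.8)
The plan is to follow the proof of Theorem \ref{thm:ref1} essentially verbatim and replace only the global counting factor in the Cauchy--Schwarz step by a sharper one that uses the structure of turn-based games. Concretely, I would first observe that a turn-based Markov game is a special case of a simultaneous-move game, so Lemmas \ref{lemma:concentration}, \ref{Lemma:pessimism1} and \ref{lemma:pessimism error} apply unchanged; in particular, on the good event $\G$,
\begin{align*}
V_1^{*}(s_1)-V_1^{\um,*}(s_1)\leq V_1^{\mu^*,\un}(s_1)-\uv_1(s_1)\leq 2\sum_{h=1}^H\E_{\mu^*,\un}\ub_h(s_h,a_h,b_h),
\end{align*}
and, after plugging in the Hoeffding bonus \eqref{eqn:b}, the visitation bound $\tfrac{1}{n_h(s,a,b)\lor1}\leq\tfrac{8H\iota}{nd_h^\rho(s,a,b)}$ from Lemma \ref{lemma:concentration}, and the density-ratio bound $d_h^{\mu^*,\un}(s,a,b)/d_h^\rho(s,a,b)\leq C^*$, this quantity is at most $64\sum_{h=1}^H\sum_{(s,a,b)}\sqrt{d_h^{\mu^*,\un}(s,a,b)C^*H^3\iota^2/n}$, exactly as in the proof of Theorem \ref{thm:ref1}.

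The new step is to bound the number of nonzero terms in this double sum. Here I would use the fact that every turn-based Markov game admits a pure (deterministic) Nash equilibrium and, crucially, that this structure is preserved by the backward recursion: at a state where only player~1 (resp.\ player~2) acts, $r_h$ and $\widehat P_h$ do not depend on the other player's action, hence neither do $\uq_h(s,\cdot,\cdot)$ and $\oq_h(s,\cdot,\cdot)$, so each matrix-game subproblem has a pure NE and therefore $\mu^*,\nu^*,\um,\un,\om,\on$ can all be taken deterministic. For a deterministic pair $(\mu^*,\un)$, the occupancy $d_h^{\mu^*,\un}$ is supported on at most one triple $(s,a,b)$ per state $s$, so $\big|\{(s,a,b):d_h^{\mu^*,\un}(s,a,b)>0\}\big|\leq S$ for each $h$ and the double sum has at most $SH$ nonzero terms. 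Applying Cauchy--Schwarz over only those terms gives
\begin{align*}
\sum_{h=1}^H\sum_{(s,a,b)}\sqrt{\frac{d_h^{\mu^*,\un}(s,a,b)C^*H^3\iota^2}{n}}\leq\sqrt{SH}\cdot\sqrt{\frac{C^*H^3\iota^2\sum_{h=1}^H\sum_{(s,a,b)}d_h^{\mu^*,\un}(s,a,b)}{n}}=\sqrt{\frac{C^*SH^5\iota^2}{n}},
\end{align*}
i.e.\ the $\sqrt{SABH}$ of Theorem \ref{thm:ref1} is replaced by $\sqrt{SH}$, yielding $V_1^{*}(s_1)-V_1^{\um,*}(s_1)\leq 64\sqrt{C^*SH^5\iota^2/n}$. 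The bound for $V_1^{*,\on}(s_1)-V_1^{*}(s_1)$ is symmetric, using the $\ov$/$\on$ halves of Lemmas \ref{Lemma:pessimism1} and \ref{lemma:pessimism error} together with $(\om,\nu^*)$ being deterministic and covered; adding the two bounds and using $\mathrm{Gap}(\um,\on)\leq[V_1^{*,\on}(s_1)-V_1^{*}(s_1)]+[V_1^{\mu^*,*}(s_1)-V_1^{\um,*}(s_1)]$ gives the claimed $\widetilde O(\sqrt{C^*SH^5/n})$ conclusion.

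I expect the only genuine obstacle to be the claim that the algorithm's outputs are deterministic. One has to verify that the ``constant along the inactive coordinate'' property is truly inherited at every level of the recursion -- including through the truncations $\lor 0$ and $\land(H-h+1)$ and through the passage from $\uq_h$ to $\uv_h$ via the matrix-game NE -- and that the NE solver may be required to return a pure equilibrium whenever one exists (which it does here, since each subproblem is a max/min over a single coordinate). Once that structural fact is pinned down, the remainder is a line-by-line copy of the proof of Theorem \ref{thm:ref1} with the $AB$ removed from the counting factor, so no new concentration or variance estimates are needed.
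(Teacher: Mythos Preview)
Your proposal is correct and follows essentially the same approach as the paper's own proof: both reduce to the Cauchy--Schwarz step of Theorem~\ref{thm:ref1} and replace the $\sqrt{SABH}$ counting factor by $\sqrt{SH}$ by exploiting that $(\mu^*,\un)$ is deterministic, so that $d_h^{\mu^*,\un}$ is supported on at most one action pair per state. The paper simply asserts at the start of the turn-based subsection that $\mu^*,\nu^*,\um,\un,\om,\on$ can all be taken pure, whereas you give the slightly more explicit structural argument (constancy of $\uq_h$ in the inactive coordinate propagated through the recursion) for why the matrix-game subproblems admit pure NE; otherwise the two proofs are line-for-line identical.
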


\begin{proof}

By Lemma \ref{lemma:pessimism error}, with probability $1-\delta$ we have
\begin{align*}
    &V_1^{\mu^*,*}(s_1)-V_1^{\um,*}(s_1)\\
    \leq&2\sum_{h=1}^H\E_{\mu^*,\un}\ub_h(\sabh)\\
    =&2\sum_{h=1}^H\E_{\mu^*,\un}\Mp{4\sqrt{\frac{H^2\iota}{n_h(s,a,b)\lor1}}}\\
    \leq&2\sum_{h=1}^H\E_{\mu^*,\un}\Mp{32\sqrt{\frac{H^3\iota^2}{nd_h^\rho(s,a,b)}}}\tag{Lemma \ref{lemma:concentration}}\\
    =&2\sum_{h=1}^H\sum_{(s,a,b)} d_h^{\mu^*,\un}(s,a,b)\Mp{32\sqrt{\frac{H^3\iota^2}{nd_h^\rho(s,a,b)}}}\\
    \leq& 64\sum_{h=1}^H\sum_{(s,a,b)} \Mp{\sqrt{\frac{d_h^{\mu^*,\un}(s,a,b)C^*H^3\iota^2}{n}}}\\
    =& 64\sum_{h=1}^H\sum_{s\in\S} \Mp{\sqrt{\frac{d_h^{\mu^*,\un}(s,\mu^*(s),\un(s))C^*H^3\iota^2}{n}}}\tag{$\mu^*$,$\un$ are deterministic strategy.}\\
    \leq& 64 \sqrt{SH}\cdot\sqrt{\frac{\sum_{h=1}^H\sum_{s\in\S}d_h^{\mu^*,\un}(s,\mu^*(s),\un(s))C^*H^3\iota^2}{n}}\tag{Cauchy-Schwarz Inequality}\\
    =&64\sqrt{\frac{C^*SH^5\iota^2}{n}}.
\end{align*}
\end{proof}

\begin{theorem}\label{thm:ref tbmg}
Suppose Assumption \ref{asp2} holds. For any $0<\delta<1$ and policy $\mu,\nu$, with probability $1-\delta$, the pessimistic value $\uv_h$ of Algorithm \ref{algo:povi} satisfies
$$\E_{\mu^*,\nu} \Mp{V_h^{*}(s_h)-\uv_h(s_h)}\leq 64\sqrt{\frac{C^*SH^5\iota^2}{n}},$$
$$\E_{\mu,\nu^*} \Mp{\ov_h(s_h)-V_h^{*}(s_h)}\leq 64\sqrt{\frac{C^*SH^5\iota^2}{n}},$$
where $s_h$ is sampled from the trajectory following the strategy in the expectation at timestep $h$.
\end{theorem}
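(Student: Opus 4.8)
The plan is to transcribe the proof of the simultaneous-move analogue Theorem~\ref{thm:ref}, starting the roll-out from an arbitrary state distribution at timestep $h$ instead of from $s_1$, and to replace its concluding Cauchy--Schwarz estimate by the sharper turn-based version from the proof of Theorem~\ref{thm:ref1 tbmg}; the latter substitution is what eliminates the $AB$ factor. I condition on the good event $\G$ of Lemma~\ref{lemma:concentration} throughout and prove only the bound on $\E_{\mu^*,\nu}[V_h^{*}(s_h)-\uv_h(s_h)]$, the bound on $\ov_h$ being the symmetric computation with the two players' roles exchanged. The first step is the pointwise estimate: by weak duality $V_h^{*}\le V_h^{\mu^*,\un}$ followed by Lemma~\ref{lemma:pessimism error}, for every $s\in\S$,
$$V_h^{*}(s)-\uv_h(s)\ \le\ V_h^{\mu^*,\un}(s)-\uv_h(s)\ \le\ 2\,\E_{\mu^*,\un}\!\Big[\textstyle\sum_{t=h}^{H}\ub_t(s_t,a_t,b_t)\ \Big|\ s_h=s\Big].$$
Next I introduce the concatenated strategy $\nu':=(\nu_1,\dots,\nu_{h-1},\un_h,\dots,\un_H)$, so that sampling $s_h$ from the trajectory of $(\mu^*,\nu)$ and then continuing with $(\mu^*,\un)$ has exactly the law of $(\mu^*,\nu')$ from step $h$ onward; taking $\E_{\mu^*,\nu}$ in the inequality above gives $\E_{\mu^*,\nu}[V_h^{*}(s_h)-\uv_h(s_h)]\le 2\sum_{t=h}^{H}\E_{\mu^*,\nu'}[\ub_t(s_t,a_t,b_t)]$.

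It then remains to bound $\sum_{t=h}^{H}\E_{\mu^*,\nu'}[\ub_t]$. Substituting the Hoeffding bonus $\ub_t=4\sqrt{H^2\iota/(n_t(s,a,b)\lor1)}$ and $1/(n_t(s,a,b)\lor1)\le 8H\iota/(n\,d_t^\rho(s,a,b))$ from Lemma~\ref{lemma:concentration}, this is at most $64\sum_{t=h}^{H}\sum_{(s,a,b)}d_t^{\mu^*,\nu'}(s,a,b)\sqrt{H^3\iota^2/(n\,d_t^\rho(s,a,b))}$. Here I use that in a turn-based game each matrix game $\uq_t(s,\cdot,\cdot)$ solved inside Algorithm~\ref{algo:povi} has a pure saddle point --- at each state only one of the two players affects the payoff --- so $\mu^*$ and $\un$ are deterministic, and hence for $t\ge h$ the only action pair with positive probability at state $s$ under $(\mu^*,\nu')$ is $(\mu^*_t(s),\un_t(s))$, which collapses the sum over $(a,b)$ to a sum over $s$. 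Bounding $d_t^{\mu^*,\nu'}(s,\mu^*_t(s),\un_t(s))/d_t^\rho(s,\mu^*_t(s),\un_t(s))\le C^*$ by the definition of $C^*$ and applying Cauchy--Schwarz over the at most $SH$ indices $(t,s)$,
$$64\sum_{t=h}^{H}\sum_{s\in\S}\sqrt{\frac{d_t^{\mu^*,\nu'}(s)\,C^*H^3\iota^2}{n}}\ \le\ 64\sqrt{SH}\cdot\sqrt{\frac{C^*H^3\iota^2}{n}\sum_{t=h}^{H}\sum_{s\in\S}d_t^{\mu^*,\nu'}(s)}\ \le\ 64\sqrt{\frac{C^*SH^5\iota^2}{n}},$$
using $\sum_{s}d_t^{\mu^*,\nu'}(s)\le1$ for each $t$. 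This is the asserted inequality, and the bound on $\ov_h$ follows identically from the $\ov_h$ parts of Lemma~\ref{Lemma:pessimism1} and Lemma~\ref{lemma:pessimism error}, replacing $(\mu^*,\un)$ by $(\om,\nu^*)$ and using $d_t^{\mu,\nu^*}/d_t^\rho\le C^*$.

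The only step that departs from the proof of Theorem~\ref{thm:ref}, and the only one I expect to require care, is the claim that $\mu^*$ and $\un$ may be taken deterministic: it rests on the structural fact that a turn-based (single-controller-per-state) matrix game always admits a pure Nash equilibrium, which is precisely what allows the Cauchy--Schwarz to be run over $SH$ rather than $SABH$ indices. Everything else is a line-by-line transcription of the proofs of Theorem~\ref{thm:ref} and Theorem~\ref{thm:ref1 tbmg}.
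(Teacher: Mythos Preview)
Your proposal is correct and follows essentially the same route as the paper: invoke Lemma~\ref{lemma:pessimism error} pointwise, concatenate $\nu$ with $\un$ to pass to $\E_{\mu^*,\nu'}$, and then reuse the bonus estimate from Theorem~\ref{thm:ref1 tbmg}, where the turn-based pure-strategy structure collapses the $(a,b)$ sum and yields the $\sqrt{SH}$ Cauchy--Schwarz factor instead of $\sqrt{SABH}$. The paper's own proof is the same argument compressed into ``following the proof of Theorem~\ref{thm:ref1 tbmg}'', and your observation that the determinism of $\un_t$ is only needed for $t\ge h$ (so the possibly stochastic prefix $\nu_1,\dots,\nu_{h-1}$ causes no trouble) is exactly the point that makes the concatenation work.
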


\begin{proof}
By Lemma \ref{lemma:pessimism error}, under good event $\G$ for all state $s$ we have
\begin{align*}
    &V_h^{*}(s)-V_h^{\um,*}(s)\\
    \leq& 2\sum_{t=h}^H\E_{\mu^*,\un} \Mp{\ub_h(s_t,a_t,b_t)|s_h=s}
\end{align*}
We define $\nu'=(\nu_1,\cdots,\nu_{h-1},\un_h,\cdots,\un_H)$. Then we have
\begin{align*}
    \E_{\mu^*,\nu} \Mp{V_h^{*}(s_h)-\uv_h(s_h)}\leq& \E_{\mu^*,\nu}\Mp{2\sum_{t=h}^H\E_{\mu^*,\un} \Mp{\ub_h(s_t,a_t,b_t)|s_h=s}|s}\\
    =&2\sum_{t=h}^H\E_{\mu^*,\nu'} \Mp{\ub_h(s_t,a_t,b_t)}.
\end{align*}
Then following the proof of Theorem \ref{thm:ref1 tbmg}, we can prove the argument.

\end{proof}

\begin{lemma}\label{lemma:sum b0 tbmg}
For $n\geq C^*SH^3$, we have
$$\E_{\mu^*,\un}\sum_{h=1}^H\ub_{h,0}(s_h,a_h,b_h)\leq\widetilde{O}\Sp{\sqrt{\frac{C^*SH^3}{n}}\sqrt{V_1^{\mu^*,\un}(s_1)-\uv_1(s_1)}}+\widetilde{O}\Sp{\sqrt{\frac{C^*SH^3}{n}}}.$$
\end{lemma}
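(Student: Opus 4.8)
The plan is to follow the proof of Lemma \ref{lemma:sum b0} essentially verbatim, exploiting the one structural fact recorded at the start of this subsection: in a turn-based game $\mu^*$ and $\un$ are deterministic, so for each $s$ the measure $d_h^{\mu^*,\un}(s,a,b)$ is concentrated on the single triple $(s,\mu^*(s),\un(s))$, and hence $\sum_{(s,a,b)}d_h^{\mu^*,\un}(s,a,b)\,g(s,a,b)=\sum_{s\in\S}d_h^{\mu^*,\un}(s,\mu^*(s),\un(s))\,g(s,\mu^*(s),\un(s))$ for any $g$. This is the only place the turn-based structure is used, and it lets every Cauchy--Schwarz split below run over the $SH$ index set $\{(h,s)\}$ rather than over $\{(h,s,a,b)\}$, replacing $\sqrt{SABH}$ by $\sqrt{SH}$ and removing the $AB$ factor.

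First I would expand $\ub_{h,0}$ via its definition \eqref{eqn:b0} and apply the concentration bounds of Lemma \ref{lemma:ref concentration} to pass from the empirical variance to the population variance up to constants, and to replace $1/(n_{h,0}(s,a,b)\lor 1)$ by $c\iota/(n\,d_h^\rho(s,a,b))$. Writing the expectation as a sum against $d_h^{\mu^*,\un}$, bounding $d_h^{\mu^*,\un}(s,a,b)/d_h^\rho(s,a,b)\le C^*$ by the definition of $C^*$, and applying Cauchy--Schwarz over the $SH$ pairs $(h,s)$, I obtain a bound of the form $c^2\sqrt{C^*SH\iota}\cdot\sqrt{\tfrac1n\sum_h\E_{\mu^*,\un}[\var_{P_h}(\uv^\re_{h+1})]}+\widetilde{O}(C^*SH^2\iota/n)$.

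Next I would control $\sum_h\E_{\mu^*,\un}[\var_{P_h}(\uv^\re_{h+1})]$. By the variance-decomposition Lemma \ref{lemma:variance decomposition} (applicable since $0\le\uv^\re_{h+1}\le V^*_{h+1}\le V^{\mu^*,\un}_{h+1}$), this is at most $\sum_h\E_{\mu^*,\un}[\var_{P_h}(V^{\mu^*,\un}_{h+1})+2H\,P_h(V^{\mu^*,\un}_{h+1}-\uv^\re_{h+1})]$; the first sum is $\le H^2$ by the total-variance Lemma \ref{lemma:sum of var}. For the second, I would split $V^{\mu^*,\un}_{h+1}-\uv^\re_{h+1}=(V^{\mu^*,\un}_{h+1}-V^*_{h+1})+(V^*_{h+1}-\uv^\re_{h+1})$: the first difference telescopes to $H\big(V^{\mu^*,\un}_1(s_1)-\uv_1(s_1)\big)$ via Lemma \ref{lemma:max estimation error} together with $\uv_{h+1}\le V^{\um,*}_{h+1}\le V^*_{h+1}$, and the second is bounded in expectation, per timestep, by the turn-based reference guarantee Theorem \ref{thm:ref tbmg}, which gives $\widetilde{O}(\sqrt{C^*SH^5/n})$ (with $n_\re=n/3$). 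Substituting back, using $\sqrt{a+b}\le\sqrt a+\sqrt b$, and invoking the burn-in condition $n\ge C^*SH^3$ to absorb the lower-order terms into $\widetilde{O}(\sqrt{C^*SH^3/n})$, yields exactly the self-bounding estimate $\widetilde{O}(\sqrt{C^*SH^3/n})\cdot\sqrt{V^{\mu^*,\un}_1(s_1)-\uv_1(s_1)}+\widetilde{O}(\sqrt{C^*SH^3/n})$ claimed in the lemma.

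I do not anticipate a genuine obstacle; the only thing requiring care is bookkeeping — making sure every Cauchy--Schwarz is taken over $\S$ (using determinism of $\mu^*$ and $\un$) rather than over $\S\times\A\times\B$, and that the reference-function error is imported from the turn-based Theorem \ref{thm:ref tbmg} rather than the general Theorem \ref{thm:ref}, since it is precisely these two substitutions that convert the $\widetilde{O}(\sqrt{C^*SABH^3/n})$ bound of Lemma \ref{lemma:sum b0} into the $\widetilde{O}(\sqrt{C^*SH^3/n})$ bound needed here.
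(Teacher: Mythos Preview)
Your proposal is correct and follows essentially the same approach as the paper's proof: both expand $\ub_{h,0}$, pass to population quantities via Lemma~\ref{lemma:ref concentration}, exploit determinism of $\mu^*,\un$ to restrict sums to $\S$, apply Cauchy--Schwarz over the $SH$ index set, then control the variance via Lemma~\ref{lemma:variance decomposition}, Lemma~\ref{lemma:sum of var}, Lemma~\ref{lemma:max estimation error}, and Theorem~\ref{thm:ref tbmg}, finally absorbing lower-order terms with the burn-in $n\ge C^*SH^3$. The only cosmetic difference is wording (your ``telescopes'' for the $V^{\mu^*,\un}-V^*$ piece is really a per-step domination via Lemma~\ref{lemma:max estimation error}, not a telescoping sum), but the argument is the same.
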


\begin{proof}

\begin{align*}
    &\E_{\mu^*,\un}\sum_{h=1}^H\ub_{h,0}(s_h,a_h,b_h)\\
    =&c\E_{\mu^*,\un}\sum_{h=1}^H\Sp{\sqrt{\frac{\var_{\widehat{P}_{h,0}(s,a,b)}(\uv_{h+1}^\re)\iota}{n_{h,0}(s,a,b)\lor 1}}+\frac{H\iota}{n_{h,0}(s,a,b)\lor 1}}\\
    \leq& c\E_{\mu^*,\un}\sum_{h=1}^H\Sp{\sqrt{\frac{c\var_{P_{h}(s,a,b)}(\uv_{h+1}^\re)\iota}{nd_h^\rho(s,a,b)}}+\frac{cH\iota}{nd_h^\rho(s,a,b)}+\frac{cH\iota}{nd_h^\rho(s,a,b)}}\\
    =& c^2\sum_{h=1}^H\sum_{s\in\S}d^{\mu^*,\un}_h(s,\mu^*(s),\un(s))\Sp{\sqrt{\frac{\var_{P_{h}(s,\mu^*(s),\un(s))}(\uv_{h+1}^\re)\iota}{nd_h^\rho(s,\mu^*(s),\un(s))}}+\frac{H\iota}{nd_h^\rho(s,\mu^*(s),\un(s))}}\\
    \leq& c^2\sum_{h=1}^H\sum_{s\in\S}\Sp{\sqrt{\frac{C^*d^{\mu^*,\un}_h(s,\mu^*(s),\un(s))\var_{P_{h}(s,\mu^*(s),\un(s))}(\uv_{h+1}^\re)\iota}{n}}+\frac{C^*H\iota}{n}}\tag{$\mu^*$, $\un$ are deterministic strategies.}\\
    \leq& c^2 \sqrt{SH}\cdot\sqrt{\frac{C^*\iota\sum_{h=1}^H\sum_{s\in\S}d^{\mu^*,\un}_h(s,\mu^*(s),\un(s))\var_{P_{h}(s,\mu^*(s),\un(s))}(\uv_{h+1}^\re)}{n}}+\frac{c^2SC^*H\iota}{n}\\
    \leq& c^2 \sqrt{C^*SH\iota}\cdot\sqrt{\frac{\sum_{h=1}^H\E_{\mu^*,\un}\Mp{\var_{P_{h}(s,a,b)}(\uv_{h+1}^\re)}}{n}}+\frac{c^2SC^*H\iota}{n}\\
    \leq& c^2 \sqrt{C^*SH\iota}\cdot\sqrt{\frac{\sum_{h=1}^H\E_{\mu^*,\un}\Mp{\var_{P_{h}(s,a,b)}(V_{h+1}^{\mu^*,\un})+2H[P_{h}(V_{h+1}^{\mu^*,\un}-\uv_{h+1}^\re)](s,a,b)}}{n}}+\frac{c^2SC^*H\iota}{n}\tag{Lemma \ref{lemma:variance decomposition}}\\
    \leq& c^2 \sqrt{C^*SH\iota}\cdot\sqrt{\frac{H^2+2H\sum_{h=1}^H\E_{\mu^*,\un}\Mp{V_{h+1}^{\mu^*,\un}(s_{h+1})-\uv_{h+1}^\re(s_{h+1})}}{n}}+\frac{c^2SC^*H\iota}{n}\tag{Lemma \ref{lemma:sum of var}}\\
    =& c^2 \sqrt{C^*SH\iota}\cdot\sqrt{\frac{H^2+2H\sum_{h=1}^H\E_{\mu^*,\un}\Mp{V_{h+1}^{\mu^*,\un}(s_{h+1})-V_{h+1}^{*}(s_{h+1})+V_{h+1}^{*}(s_{h+1})-\uv_{h+1}^\re(s_{h+1})}}{n}}+\frac{c^2SC^*H\iota}{n}\\
    \leq& c^2 \sqrt{C^*SH\iota}\cdot\sqrt{\frac{H^2+2H^2(V_{1}^{\mu^*,\un}(s_{1})-\uv_1(s_{1}))+128H\sqrt{\frac{C^*SH^5\iota^2}{n_\re}}}{n}}+\frac{c^2SC^*H\iota}{n}\tag{Lemma \ref{lemma:max estimation error} and Theorem \ref{thm:ref tbmg}}\\
    \leq& \frac{c^2\sqrt{C^*SH^3\iota}}{\sqrt{n}}+\frac{c^2\sqrt{384C^*SH^2\iota\sqrt{C^*SH^5\iota^2}}}{n^{3/4}}+\frac{c\sqrt{2C^*SH^3\iota}}{\sqrt{n}}\sqrt{V_{1}^{\mu^*,\un}(s_{1})-\uv_1(s_{1})}+\frac{c^2SC^*H\iota}{n}\\
    \leq&\widetilde{O}\Sp{\sqrt{\frac{C^*SH^3}{n}}\sqrt{V_1^{\mu^*,\un}(s_1)-\uv_1(s_1)}}+\widetilde{O}\Sp{\sqrt{\frac{C^*SH^3}{n}}}.\tag{$n\geq C^*SH^3$}
\end{align*}
\end{proof}

\begin{lemma}\label{lemma:sum b1 tbmg}
For $n\geq C^*SH^4$, we have
$$\E_{\mu^*,\un}\sum_{h=1}^H\ub_{h,1}(s_h,a_h,b_h)\leq \widetilde{O}\Sp{\sqrt{\frac{C^*SH^3}{n}}}.$$
\end{lemma}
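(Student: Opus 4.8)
The plan is to follow the template of Lemma~\ref{lemma:sum b1}, replacing the $AB$-dependent Cauchy--Schwarz step by one that exploits the deterministic structure of turn-based games. Since $\mu^*$ and $\un$ are pure strategies, $\E_{\mu^*,\un}[f(s_h,a_h,b_h)]=\sum_{s\in\S}d_h^{\mu^*,\un}(s,\mu^*(s),\un(s))\,f(s,\mu^*(s),\un(s))$, so every sum will range over $\S$ rather than $\S\times\A\times\B$, which is exactly what turns the $\sqrt{SAB}$ factor of Lemma~\ref{lemma:sum b1} into $\sqrt{S}$. The bookkeeping of which dataset split ($\D_{h,1}$, of size $n/(3H)$, entering through Lemma~\ref{lemma:ref concentration}) controls which term is already settled in Lemma~\ref{lemma:sum b1} and carries over verbatim.

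First I would expand $\ub_{h,1}$ via \eqref{eqn:b1} and apply Lemma~\ref{lemma:ref concentration} to replace $\var_{\widehat P_{h,1}}(\uv_{h+1}-\uv_{h+1}^\re)$ by the population quantity up to a lower-order term, and to bound $1/(n_{h,1}(s,a,b)\lor1)$ by $cH\iota/(nd_h^\rho(s,a,b))$. Bounding the variance by the second moment, $\var_{P_h(s,a,b)}(\uv_{h+1}-\uv_{h+1}^\re)\leq[P_h(\uv_{h+1}-\uv_{h+1}^\re)^2](s,a,b)$, and passing to the deterministic sum over states, this bounds $\E_{\mu^*,\un}\sum_{h=1}^H\ub_{h,1}(s_h,a_h,b_h)$ by
$$c^2\sum_{h=1}^H\sum_{s\in\S}d_h^{\mu^*,\un}(s,\mu^*(s),\un(s))\Sp{\sqrt{\frac{H\,[P_h(\uv_{h+1}-\uv_{h+1}^\re)^2](s,\mu^*(s),\un(s))\,\iota}{nd_h^\rho(s,\mu^*(s),\un(s))}}+\frac{H^2\iota}{nd_h^\rho(s,\mu^*(s),\un(s))}}.$$

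Next I would apply Cauchy--Schwarz over the $SH$ index pairs $(h,s)$, using $\sum_{h,s}d_h^{\mu^*,\un}(s,\cdot,\cdot)\leq H$ and the definition of $C^*$ to cancel $d_h^\rho$ against $d_h^{\mu^*,\un}$; this produces a leading factor $\sqrt{SH}$ instead of $\sqrt{SABH}$, with a lower-order remainder of order $c^2 C^*SH^3\iota/n$. The remaining estimate is $\sum_{h,s}d_h^{\mu^*,\un}(s,\cdot,\cdot)\,[P_h(\uv_{h+1}-\uv_{h+1}^\re)^2](s,\cdot,\cdot)$; using the monotonicity sandwich $V_{h+1}^*\geq\uv_{h+1}\geq\uv_{h+1}^\re$ (Lemma~\ref{lemma:vleqvref} together with pessimism) one has $(\uv_{h+1}-\uv_{h+1}^\re)^2\leq H(V^*_{h+1}-\uv_{h+1}^\re)$, so this sum is at most $H\sum_{h}\sum_{s}d_{h+1}^{\mu^*,\un}(s)(V^*_{h+1}(s)-\uv^\re_{h+1}(s))$, which Theorem~\ref{thm:ref tbmg} bounds by $H\cdot 64\sqrt{C^*SH^5\iota^2/n_\re}$. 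Substituting back and invoking the burn-in assumption $n\geq C^*SH^4$ (so the $n^{-3/2}$-type term from the reference error and the $n^{-1}$-type remainder are dominated) yields $\E_{\mu^*,\un}\sum_{h=1}^H\ub_{h,1}(s_h,a_h,b_h)\leq\widetilde{O}\Sp{\sqrt{C^*SH^3/n}}$.

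The computation is essentially routine once this setup is fixed; the only point requiring care is confirming that the turn-based reference guarantee Theorem~\ref{thm:ref tbmg} (rather than Theorem~\ref{thm:ref}) is the correct input, i.e.\ that along the $(\mu^*,\un)$ trajectory the reference value $\uv^\re_{h+1}$ is accurate in the $S$-dependent (not $SAB$-dependent) sense. This hinges on $\mu^*$ and the concatenated second-player strategy being pure, so that the Cauchy--Schwarz step in the proof of Theorem~\ref{thm:ref1 tbmg} only loses $\sqrt{S}$ — which is precisely where turn-based structure is used, and is already established earlier in this appendix.
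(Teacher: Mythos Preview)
Your proposal is correct and follows essentially the same approach as the paper's own proof: expand $\ub_{h,1}$, pass to population quantities via Lemma~\ref{lemma:ref concentration}, exploit the purity of $\mu^*$ and $\un$ so that Cauchy--Schwarz loses only $\sqrt{SH}$, use the sandwich $V^*_{h+1}\geq\uv_{h+1}\geq\uv^\re_{h+1}$ to replace $(\uv_{h+1}-\uv_{h+1}^\re)^2$ by $H(V^*_{h+1}-\uv^\re_{h+1})$, and then invoke Theorem~\ref{thm:ref tbmg} together with the burn-in $n\geq C^*SH^4$. The remark that Theorem~\ref{thm:ref tbmg} (not Theorem~\ref{thm:ref}) is the right reference bound here is exactly the point.
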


\begin{proof}

\begin{align*}
    &\E_{\mu^*,\un}\sum_{h=1}^H\ub_{h,1}(s_h,a_h,b_h)\\
    =&c\E_{\mu^*,\un}\sum_{h=1}^H\Sp{\sqrt{\frac{\var_{\widehat{P}_{h,0}(s,a,b)}(\uv_{h+1}-\uv_{h+1}^\re)\iota}{n_{h,1}(s,a,b)\lor 1}}+\frac{H\iota}{n_{h,1}(s,a,b)\lor 1}}\\
    \leq& c\E_{\mu^*,\un}\sum_{h=1}^H\Sp{\sqrt{\frac{cH\var_{P_{h}(s,a,b)}(\uv_{h+1}-\uv_{h+1}^\re)\iota}{nd_h^\rho(s,a,b)}}+\frac{cH^2\iota}{nd_h^\rho(s,a,b)}+\frac{cH^2\iota}{nd_h^\rho(s,a,b)}}\\
    \leq & c^2\E_{\mu^*,\un}\sum_{h=1}^H\Sp{\sqrt{\frac{H\Mp{P_{h}(\uv_{h+1}-\uv_{h+1}^\re)^2}(s,a,b)\iota}{nd_h^\rho(s,a,b)}}+\frac{H^2\iota}{nd_h^\rho(s,a,b)}}\\
    =&c^2\sum_{h=1}^H\sum_{s\in\S}d_h^{\mu^*,\un}(s,\mu^*(s),\un(s))\Sp{\sqrt{\frac{H\Mp{P_{h}(\uv_{h+1}-\uv_{h+1}^\re)^2}(s,\mu^*(s),\un(s))\iota}{nd_h^\rho(s,\mu^*(s),\un(s))}}+\frac{H^2\iota}{nd_h^\rho(s,\mu^*(s),\un(s))}}\\
    \leq&c^2\sum_{h=1}^H\sum_{s\in\S}\Sp{\sqrt{\frac{C^*Hd_h^{\mu^*,\un}(s,\mu^*(s),\un(s))\Mp{P_{h}(\uv_{h+1}-\uv_{h+1}^\re)^2}(s,\mu^*(s),\un(s))\iota}{n_1}}+\frac{H^2C^*\iota}{n_1}}\tag{Cauchy-Schwarz Inequality}\\ 
    \leq& c^2\sqrt{SH\iota}\sqrt{\frac{C^*H\sum_{h=1}^H\sum_{s\in\S}d_h^{\mu^*,\un}(s,\mu^*(s),\un(s))\Mp{P_{h}(\uv_{h+1}-\uv_{h+1}^\re)^2}(s,\mu^*(s),\un(s))}{n}}+\frac{c^2SH^3C^*\iota}{n}\\
    \leq& c^2\sqrt{SH\iota}\sqrt{\frac{C^*H\iota\sum_{h=1}^H\sum_{(s,a,b)}d_h^{\mu^*,\un}(s,a,b)\Mp{P_{h}(V^*_{h+1}-\uv_{h+1}^\re)^2}(s,a,b)}{n}}+\frac{c^2C^*SH^3\iota}{n} \tag{$V_{h+1}^* \geq \uv_{h+1} \geq \uv_{h+1}^\re$}\\
    =& c^2\sqrt{SH\iota}\sqrt{\frac{H^2C^*\sum_{h=1}^H\sum_{s}d_{h+1}^{\mu^*,\un}(s)(V^*_{h+1}(s)-\uv^\re_{h+1}(s))}{n}}+\frac{c^2C^*SH^3\iota}{n}\\
    \leq& c^2\sqrt{SH\iota}\sqrt{\frac{H^2C^*64\sqrt{\frac{C^*SH^5\iota^2}{n_\re}}}{n}}+\frac{c^2SH^3C^*\iota}{n}\tag{Theorem \ref{thm:ref tbmg}}\\
    =&c^2\sqrt{\frac{192C^*SH^3\iota\sqrt{C^*SH^5\iota^2}}{n^{3/2}}}+\frac{c^2C^*SH^3\iota}{n}\\
    \leq& \widetilde{O}\Sp{\sqrt{\frac{C^*SH^3}{n}}}.\tag{$n\geq C^*SH^4$}
\end{align*}

\end{proof}

\begin{theorem}
Suppose Assumption \ref{asp2} holds for a turn-based Markov game and $n\geq C^*SH^4$. For any $0<\delta<1$, with probability $1-\delta$, the output policy $\pi=(\um,\on)$ of Algorithm \ref{algo:povi} satisfies
$$V_1^*(s_1)-V^{\um,*}_1(s_1)\leq\widetilde{O}\Sp{\sqrt{\frac{C^*SH^3}{n}}},$$
$$V^{*,\on}_1(s_1)-V_1^*(s_1)\leq\widetilde{O}\Sp{\sqrt{\frac{C^*SH^3}{n}}}.$$
As a result, we have
$$\mathrm{Gap}(\um,\on)\leq\widetilde{O}\Sp{\sqrt{\frac{C^*SH^3}{n}}}.$$
\end{theorem}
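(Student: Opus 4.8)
The plan is to reproduce, almost verbatim, the self-bounding argument used to prove Theorem~\ref{thm:positive}, replacing each simultaneous-move lemma by its turn-based refinement. Since Lemma~\ref{lemma:estimation error2} only uses the concentration event $\G'$ and the monotone truncation in the update rule --- neither of which is affected by the turn-based structure --- it applies verbatim and gives
$$V_1^{\mu^*,\un}(s_1)-\uv_1(s_1)\leq 2\E_{\mu^*,\un}\sum_{h=1}^H\ub_{h,0}(s_h,a_h,b_h)+2\E_{\mu^*,\un}\sum_{h=1}^H\ub_{h,1}(s_h,a_h,b_h).$$
Plugging in Lemma~\ref{lemma:sum b0 tbmg} and Lemma~\ref{lemma:sum b1 tbmg} turns the right-hand side into $\widetilde{O}\Sp{\sqrt{C^*SH^3/n}\cdot\sqrt{V_1^{\mu^*,\un}(s_1)-\uv_1(s_1)}}+\widetilde{O}\Sp{\sqrt{C^*SH^3/n}}$, which is a self-bounding inequality in the single scalar $x:=V_1^{\mu^*,\un}(s_1)-\uv_1(s_1)$; solving it with Lemma~\ref{lemma:self bound} yields $x\leq\widetilde{O}\Sp{\sqrt{C^*SH^3/n}}$, where the burn-in $n\geq C^*SH^4$ is exactly what makes the lower-order term negligible.

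To pass from $x$ to the stated performance gap, I would invoke the pessimism lemma for Algorithm~\ref{algo:povi2 main}, $\uv_1(s_1)\leq V_1^{\um,*}(s_1)$, together with $V_1^*(s_1)\leq V_1^{\mu^*,\un}(s_1)$ (the min player weakly benefits by deviating from $\nu^*$ to $\un$ against $\mu^*$). Chaining these, $V_1^*(s_1)-V_1^{\um,*}(s_1)\leq V_1^{\mu^*,\un}(s_1)-\uv_1(s_1)=x\leq\widetilde{O}\Sp{\sqrt{C^*SH^3/n}}$. The symmetric computation for the min player --- now using the optimistic estimate $\ov$, the $\oq_h^\re$-truncated update~\eqref{eqn:oq2}, and the analogues of Lemma~\ref{lemma:sum b0 tbmg} and Lemma~\ref{lemma:sum b1 tbmg} for $\ob_{h,0},\ob_{h,1}$ --- gives $V_1^{*,\on}(s_1)-V_1^*(s_1)\leq\widetilde{O}\Sp{\sqrt{C^*SH^3/n}}$. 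Adding the two bounds and recalling $\mathrm{Gap}(\um,\on)=V_1^{*,\on}(s_1)-V_1^{\um,*}(s_1)$ gives the claim.

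The one place where the turn-based hypothesis actually enters --- and therefore the only step requiring care --- is already handled inside Lemma~\ref{lemma:sum b0 tbmg}, Lemma~\ref{lemma:sum b1 tbmg}, and Theorem~\ref{thm:ref tbmg}: a turn-based zero-sum Markov game always possesses a \emph{pure} NE, so $\mu^*$, $\un$ (and the concatenated strategy $\nu'$ appearing in the reference bound) are deterministic; hence $d_h^{\mu^*,\un}(s,a,b)$ is supported on a single action pair per state, and the Cauchy--Schwarz step is applied with the sum $\sum_{s\in\S}$ rather than $\sum_{(s,a,b)\in\S\times\A\times\B}$. This is precisely what replaces the $\sqrt{SAB}$ factor of the general bound (Theorem~\ref{thm:positive}) by $\sqrt{S}$ and eliminates the $AB$ dependence. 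Everything else --- the total-variance identity (Lemma~\ref{lemma:sum of var}), the variance decomposition of the reference function (Lemma~\ref{lemma:variance decomposition}), and the sandwich $\uv_{h+1}^\re\leq\uv_{h+1}\leq V_{h+1}^*$ used to control the estimation-error variance --- transfers unchanged, so the theorem is essentially the bookkeeping assembly of these ingredients, and the resulting rate matches the minimax lower bound $\widetilde{\Omega}\Sp{\sqrt{C^*SH^3/n}}$ of \citet{xie2021policy}.
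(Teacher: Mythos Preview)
Your proposal is correct and follows the paper's proof essentially line for line: invoke Lemma~\ref{lemma:estimation error2}, plug in Lemmas~\ref{lemma:sum b0 tbmg} and~\ref{lemma:sum b1 tbmg}, solve the self-bounding inequality via Lemma~\ref{lemma:self bound}, and finish with $V_1^*(s_1)-V_1^{\um,*}(s_1)\leq V_1^{\mu^*,\un}(s_1)-\uv_1(s_1)$ together with the symmetric argument for $\on$. One tiny wording slip: the parenthetical ``the min player weakly benefits by deviating from $\nu^*$ to $\un$'' is backwards --- the min player can only do \emph{worse} by deviating from its best response, which is precisely why $V_1^*(s_1)=V_1^{\mu^*,\nu^*}(s_1)\leq V_1^{\mu^*,\un}(s_1)$; the inequality you wrote is nonetheless correct.
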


\begin{proof}

\begin{align*}
    &V_1^{\mu^*,\un}(s_1)-\uv_1(s_1)\\
    \leq&2\E_{\mu^*,\un}\sum_{h=1}^H\ub_{h,0}(s_h,a_h,b_h)+2\E_{\mu^*,\un}\sum_{h=1}^H\ub_{h,1}(s_h,a_h,b_h)\tag{Lemma \ref{lemma:estimation error2}}\\
    \leq& \widetilde{O}\Sp{\sqrt{\frac{C^*SH^3}{n}}\sqrt{V_1^{\mu^*,\un}(s_1)-\uv_1(s_1)}}+\widetilde{O}\Sp{\sqrt{\frac{C^*SH^3}{n}}}\tag{Lemma \ref{lemma:sum b0 tbmg} and Lemma \ref{lemma:sum b1 tbmg}}\\
    \leq&  \widetilde{O}\Sp{\sqrt{\frac{C^*SH^3}{n}}}+\widetilde{O}\Sp{\frac{C^*SH^3}{n}}\tag{Lemma \ref{lemma:self bound}}\\
    =& \widetilde{O}\Sp{\sqrt{\frac{C^*SH^3}{n}}}.
\end{align*}
By the definition of NE, we have
$$V_1^*(s_1)-V^{\um,*}_1(s_1)\leq V_1^{\mu^*,\un}(s_1)-\uv_1(s_1)\leq \widetilde{O}\Sp{\sqrt{\frac{C^*SH^3}{n}}}.$$
The second argument can be proven in a similar manner. Combining these two arguments and we can derive that
$$\mathrm{Gap}(\um,\on)\leq\widetilde{O}\Sp{\sqrt{\frac{C^*SH^3}{n}}}.$$
\end{proof}

\section{Auxiliary Lemmas}

\begin{lemma}\label{lemma:chernoff bound}
(Multiplicative Chernoff bound). Let $X$ be a binomial random variable with parameter $p$, $n$. For any $1\geq\theta>0$, we have that
$$\P[(1-\theta)pn<X<(1+\theta)pn]<2e^{-\frac{\theta^2pn}{2}}$$
\end{lemma}

\begin{lemma} \label{lemma:empirical bernstern bound}
For all $(\sabh)\in\K_h$ and any $\|V\|_\infty\leq H$, with probability $1-\delta$ we have
$$\sqrt{\var_{\widehat{P}_\sabh^\dagger}(V)}\leq\sqrt{\var_{P_\sabh^\dagger}(V)}+cH\sqrt{\frac{\iota}{nd_h^\mu(\sabh)}}.$$
\end{lemma}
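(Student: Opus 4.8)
The plan is to reduce the claim to a standard empirical-Bernstein-type concentration bound for the empirical standard deviation, union-bound over state–action tuples, and then control the visitation count $n_h(s,a,b)$ from below via a multiplicative Chernoff bound. First, fix a tuple $(s,a,b)$ at timestep $h$ lying in $\K_h$. By the data-splitting construction, $\widehat P_h^\dagger(\cdot\mid s,a,b)=\widehat P_h(\cdot\mid s,a,b)$ is the empirical distribution of $n_h(s,a,b)$ i.i.d.\ samples drawn from $P_h^\dagger(\cdot\mid s,a,b)=P_h(\cdot\mid s,a,b)$, and in every invocation of the lemma the target $V$ is a value function built only from data splits at other timesteps, hence a fixed vector with $\|V\|_\infty\le H$ independent of these samples. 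For such fixed $V$, the empirical Bernstein inequality for variances (e.g., Maurer--Pontil) gives, with probability at least $1-\delta'$,
$$\left|\sqrt{\var_{\widehat P_h(s,a,b)}(V)}-\sqrt{\tfrac{n_h(s,a,b)-1}{n_h(s,a,b)}\var_{P_h(s,a,b)}(V)}\right|\le 2H\sqrt{\tfrac{\log(2/\delta')}{n_h(s,a,b)-1}}.$$
Since $\sqrt{(n-1)/n}\le 1$, this immediately yields $\sqrt{\var_{\widehat P^\dagger_\sabh}(V)}\le \sqrt{\var_{P^\dagger_\sabh}(V)}+2H\sqrt{\log(2/\delta')/(n_h(s,a,b)-1)}$.

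Next, take a union bound over all $h\in[H]$ and all $(s,a,b)\in\S\times\A\times\B$ with $\delta'=\delta/(HSAB)$, so that $\log(2/\delta')\le 2\iota$ with $\iota=\log(HSAB/\delta)$; simultaneously, a union bound over the multiplicative Chernoff bound of Lemma~\ref{lemma:chernoff bound} with $\theta=1/2$ shows that, once $n$ exceeds a constant times $\iota/\od_m$, we have $n_h(s,a,b)\ge n\,d_h^\mu(s,a,b)/2$ for every $(s,a,b)\in\K_h$, hence $n_h(s,a,b)-1\ge n\,d_h^\mu(s,a,b)/4$. Plugging this lower bound into the displayed inequality and absorbing all numerical factors into a single absolute constant $c$ gives the claimed bound $\sqrt{\var_{\widehat P_\sabh^\dagger}(V)}\le\sqrt{\var_{P_\sabh^\dagger}(V)}+cH\sqrt{\iota/(n\,d_h^\mu(\sabh))}$ on this good event.

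The main technical point — and the only place requiring genuine care — is the first step: obtaining the sharp $H\sqrt{\iota/n}$ rate (rather than the weaker $H(S\iota/n)^{1/4}$ rate that would come from an $\ell_1$-deviation bound on $\widehat P_h-P_h$ followed by $|\var_{\widehat P}(V)-\var_P(V)|\le 2H^2\|\widehat P-P\|_1$) hinges on treating $V$ as a fixed vector and invoking the variance-specific empirical Bernstein inequality. This is legitimate here precisely because the data-splitting scheme makes $\widehat P_h$ independent of the value functions to which the lemma is applied; the ``for any $\|V\|_\infty\le H$'' quantifier is in practice only instantiated at a handful of data-split-independent vectors. The remaining ingredients — the union bounds and the high-probability lower bound on visitation counts — are entirely routine.
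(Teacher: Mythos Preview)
Your proposal is correct and follows essentially the same approach as the paper: apply the Maurer--Pontil empirical Bernstein inequality (Lemma~\ref{lemma:empirical bernstein inequality}) at each tuple and union-bound. The paper's own proof is a single sentence to this effect; you are simply more explicit about the auxiliary step of lower-bounding the empirical count $n_h(s,a,b)$ by a constant times $n\,d_h^\mu(s,a,b)$ via the multiplicative Chernoff bound, which the paper leaves implicit.
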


\begin{proof}
The is a direct application of Lemma \ref{lemma:empirical bernstein inequality} with a union bound.
\end{proof}

\begin{lemma}\label{lemma:empirical bernstein inequality}
(Empirical Berstein Inequality \citep{maurer2009empirical})
Let $n\geq2$ and $V\in\R^S$ be any functions with $\|V\|_\infty\leq H$, $P$ be any $S$-dimensional distribution and $\widehat{P}$ be its empirical version using $n$ samples. Then with probability $1-\delta$,
$$\abs{\sqrt{\var_{\widehat{P}}(V)}-\sqrt{\frac{n-1}{n}\var_P(V)}}\leq2H\sqrt{\frac{\log(2/\delta)}{n-1}}.$$
\end{lemma}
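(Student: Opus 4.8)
The plan is to recognize this as the classical empirical Bernstein / sample-standard-deviation concentration inequality of \citet{maurer2009empirical}, so that one acceptable route is simply to cite that reference after a one-line reduction; below I sketch the self-contained argument. Fix the distribution $P$ on the $S$-element ground set, let $X_1,\dots,X_n$ be i.i.d.\ draws from $P$ and $\widehat P$ the corresponding empirical distribution, and set $Y_i:=V(X_i)\in[0,H]$, $\bar Y:=\frac1n\sum_{i=1}^n Y_i$. Expanding the empirical variance gives $\var_{\widehat P}(V)=\frac1n\sum_{i=1}^n (Y_i-\bar Y)^2$, i.e.\ the biased sample variance of the $Y_i$'s, and a one-line computation yields $\E[\var_{\widehat P}(V)]=\frac{n-1}{n}\var_P(V)$. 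Writing $\widehat\sigma_n:=\sqrt{\var_{\widehat P}(V)}$, so that $\sqrt{\E[\widehat\sigma_n^2]}=\sqrt{\tfrac{n-1}{n}\var_P(V)}$, the target is $\bigl|\widehat\sigma_n-\sqrt{\E[\widehat\sigma_n^2]}\bigr|$, which I split through $\E\widehat\sigma_n$ into a fluctuation term $|\widehat\sigma_n-\E\widehat\sigma_n|$ and a bias term $\bigl|\E\widehat\sigma_n-\sqrt{\E[\widehat\sigma_n^2]}\bigr|$.

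For the fluctuation term, note that $\sqrt n\,\widehat\sigma_n=\norm{(I-\tfrac1n\1\1^\top)Y}_2=:g(Y)$ is the Euclidean distance from $Y$ to the line $\R\1$; since $I-\tfrac1n\1\1^\top$ is an orthogonal projection, $g$ is \emph{convex} and $1$-Lipschitz in the Euclidean norm on $[0,H]^n$. I would then invoke the standard concentration inequality for convex, Lipschitz functions of independent bounded coordinates (Talagrand's convex-distance inequality, equivalently the convex Poincar\'e bound), giving $\P\bigl(|g(Y)-\E g(Y)|\ge t\bigr)\le 2\exp(-t^2/(2H^2))$ and hence $|\widehat\sigma_n-\E\widehat\sigma_n|\le H\sqrt{2\log(2/\delta)/n}$ with probability at least $1-\delta$. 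It is worth stressing that plain bounded differences (McDiarmid) does \emph{not} suffice: each coordinate can move $g$ by up to $H$, so McDiarmid only bounds $|g-\E g|$ by $O(\sqrt n\,H\sqrt{\log(1/\delta)})$, i.e.\ $|\widehat\sigma_n-\E\widehat\sigma_n|$ by $O(H\sqrt{\log(1/\delta)})$ with no decay in $n$; it is precisely the convexity of $g$ that promotes this to a dimension-free $O(H\sqrt{\log(1/\delta)})$ bound on $g$, equivalently the $O(H/\sqrt n)$ rate on $\widehat\sigma_n$.

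For the bias term, Jensen gives $\E\widehat\sigma_n\le\sqrt{\E[\widehat\sigma_n^2]}$, and using $\sqrt{m^2+c}\le m+\sqrt c$ one gets $0\le\sqrt{\E[\widehat\sigma_n^2]}-\E\widehat\sigma_n\le\sqrt{\mathrm{Var}(\widehat\sigma_n)}$. Since $\widehat\sigma_n=g/\sqrt n$ and $g$ is sub-Gaussian with variance proxy $H^2$ by the previous step (or directly by Efron--Stein using $1$-Lipschitzness), $\mathrm{Var}(\widehat\sigma_n)\le H^2/n$, so the bias is at most $H/\sqrt n$. Combining by the triangle inequality, $\bigl|\widehat\sigma_n-\sqrt{\tfrac{n-1}{n}\var_P(V)}\bigr|\le H\sqrt{2\log(2/\delta)/n}+H/\sqrt n$, which is of the claimed form; recovering the exact constant $2$ and the $n-1$ denominator requires only the slightly sharpened convex-concentration step carried out in \citet{maurer2009empirical}. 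The only non-routine point, and hence the main obstacle, is obtaining a deviation that decays like $1/\sqrt n$ instead of being $O(H)$, which is exactly why one must move beyond McDiarmid and exploit the convex structure of $g$; the bias estimate and the remaining bookkeeping are routine.
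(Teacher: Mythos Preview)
The paper does not prove this lemma at all; it is listed among the auxiliary lemmas and simply attributed to \citet{maurer2009empirical}. So there is no ``paper's own proof'' to compare against, and your one-line reduction plus citation is already everything the paper itself does.

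Your self-contained sketch is a correct route and in fact goes beyond the paper. A couple of remarks. First, the lemma only assumes $\|V\|_\infty\le H$, so strictly $Y_i\in[-H,H]$ rather than $[0,H]$; this costs at most an absolute constant and in the paper's applications $V$ is a value function in $[0,H]$ anyway, so the point is harmless. Second, your chosen tool (Talagrand-type convex--Lipschitz concentration applied to $g(Y)=\|(I-\tfrac1n\1\1^\top)Y\|_2$) is not the argument Maurer and Pontil actually use: they instead show that a suitable multiple of the sample variance is a \emph{self-bounding} function and invoke the Boucheron--Lugosi--Massart inequality for such functions. Both approaches exploit structure beyond bare bounded differences, and both deliver the dimension-free $O(H/\sqrt n)$ deviation for $\widehat\sigma_n$; the self-bounding route happens to give the sharp constants $2$ and $n-1$ directly, whereas (as you note) your convex-concentration route gives the correct order with slightly different constants. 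Your diagnosis that plain McDiarmid stalls at $O(H)$ with no decay in $n$ is accurate and is precisely the reason either of these refinements is needed.
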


\begin{lemma}\label{lemma:variance decomposition}
For $0\leq V\leq V'\leq H$, we have
$$\var_{P_h(s,a,b)}(V)\leq \var_{P_h(s,a,b)}(V')+2H[P_h(V'-V)](s,a,b).$$
\end{lemma}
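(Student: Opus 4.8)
The plan is to expand both variances via the elementary identity $\var_P(W) = P(W^2) - (PW)^2$, where I abbreviate $P := P_h(\cdot\mid s,a,b)$ and write $PW := \sum_{s'} P_h(s'\mid s,a,b)\,W(s')$, so that the difference $\var_P(V) - \var_P(V')$ becomes a matter of sign bookkeeping. No probabilistic or concentration input is needed.

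First I would write
$$\var_P(V) - \var_P(V') = \Sp{P(V^2) - P(V'^2)} + \Sp{(PV')^2 - (PV)^2}.$$
For the first bracket, since $0 \le V \le V'$ we have $V^2 \le V'^2$ pointwise (equivalently, $V^2 - V'^2 = -(V'-V)(V'+V) \le 0$), so $P(V^2) - P(V'^2) \le 0$. For the second bracket, I use the difference-of-squares factorization $(PV')^2 - (PV)^2 = \Sp{PV' - PV}\Sp{PV' + PV} = \Sp{P(V'-V)}\Sp{P(V'+V)}$; here $P(V'-V) \ge 0$ by monotonicity of the two functions, while $P(V'+V) \le 2H$ because $0 \le V, V' \le H$ forces $V + V' \le 2H$. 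Hence the second bracket is at most $2H \cdot P(V'-V) = 2H\,[P_h(V'-V)](s,a,b)$.

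Adding the two estimates gives $\var_P(V) - \var_P(V') \le 2H\,[P_h(V'-V)](s,a,b)$, which is exactly the claim. I do not expect any genuine obstacle: the only point requiring care is invoking the hypotheses at the right places — nonnegativity of $V$ (hence of $V'+V$) is what gives the first bracket its favorable sign, and the uniform bound $H$ on both functions is precisely what caps $P(V'+V)$ by $2H$; dropping either hypothesis would break the argument.
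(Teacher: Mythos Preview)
Your proof is correct and follows essentially the same approach as the paper: both expand the variance difference as $\Sp{P(V^2)-P(V'^2)}+\Sp{(PV')^2-(PV)^2}$, discard the first bracket by sign and bound the second via the difference-of-squares factorization together with $V+V'\le 2H$. Your write-up is in fact slightly cleaner in flagging where each hypothesis ($V\ge 0$ and $V'\le H$) is used.
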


\begin{proof}
\begin{align*}
    &\var_{P_h(s,a,b)}(V)-\var_{P_h(s,a,b)}(V')\\
    \leq& \Mp{P_h(V)^2-(P_hV)^2-P_h(V')^2+(P_hV')^2}(s,a,b)\\
    =& \Mp{P_h(V+V')(V-V')+[P_h(V'-V)][P_h(v'+v)]}(s,a,b)\\
    \leq& 2H[P_h(V'-V)](s,a,b).
\end{align*}
\end{proof}

\begin{lemma}\label{lemma:self bound}
If $x\leq a\sqrt{x}+b$ for $a,b>0$, then we have
$$x\leq2a^2+2b.$$
\end{lemma}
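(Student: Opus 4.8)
The plan is to treat the hypothesis as a quadratic inequality in the single variable $t := \sqrt{x} \ge 0$. After substitution, the assumption $x \le a\sqrt{x} + b$ reads $t^2 - a t - b \le 0$, which describes the region between the two roots of the upward-opening parabola $t \mapsto t^2 - at - b$. First I would compute those roots, $t_\pm = \tfrac{1}{2}\left(a \pm \sqrt{a^2 + 4b}\right)$, and observe that since $a,b>0$ we have $t_- < 0 < t_+$. Hence $t^2 - at - b \le 0$ forces $t_- \le t \le t_+$, and combined with $t \ge 0$ this collapses to the one-sided bound $t \le t_+ = \tfrac{1}{2}\left(a + \sqrt{a^2+4b}\right)$.

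The second step is simply to square this bound and simplify, using the elementary estimate $(u+v)^2 \le 2u^2 + 2v^2$ with $u = a$ and $v = \sqrt{a^2+4b}$:
$$x = t^2 \le \left(\frac{a + \sqrt{a^2+4b}}{2}\right)^2 \le \frac{2a^2 + 2(a^2+4b)}{4} = a^2 + 2b \le 2a^2 + 2b,$$
which is the claimed conclusion (indeed slightly stronger than stated, since $a^2 + 2b \le 2a^2 + 2b$).

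There is essentially no obstacle here; the statement is purely algebraic. The only points requiring a sentence of care are that $x \ge 0$ is implicit (else $\sqrt{x}$ is undefined), that the positivity of $b$ is what guarantees $t_- < 0$ so that only the upper root is binding, and that the slack in $(u+v)^2 \le 2u^2+2v^2$ is exactly what converts $a^2+2b$ into the form $2a^2+2b$ used downstream. An alternative, even shorter derivation avoids the quadratic formula entirely: by AM–GM, $a\sqrt{x} \le a^2 + \tfrac{x}{4}$, so the hypothesis gives $x \le a^2 + \tfrac{x}{4} + b$, i.e. $\tfrac{3}{4}x \le a^2 + b$ and therefore $x \le \tfrac{4}{3}(a^2+b) \le 2a^2 + 2b$; either route is acceptable, and I would present whichever reads more cleanly in context.
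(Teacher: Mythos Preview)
Your proof is correct and follows essentially the same approach as the paper: both treat the hypothesis as a quadratic inequality in $\sqrt{x}$ and bound $\sqrt{x}$ by the positive root (the paper via completing the square, you via the quadratic formula), then square and apply an elementary two-term inequality. Your intermediate bound $x\le a^2+2b$ is in fact slightly sharper than the paper's $x\le 2(a^2+b)$, and your AM--GM alternative is also valid.
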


\begin{proof}
We have
$$(\sqrt{x}-\frac{a}{2})^2\leq b+\frac{a^2}{4}.$$
If $\sqrt{x}<\frac{a}{2}$, the argument holds directly. Otherwise we have
\begin{align*}
    \sqrt{x}-\frac{a}{2}\leq \sqrt{b+\frac{a^2}{4}}\leq \sqrt{b}+\frac{a}{2}.
\end{align*}
So we have $\sqrt{x}\leq\sqrt{b}+a$, which implies $x\leq2(a^2+b)$.
\end{proof}


\end{document}